\documentclass{article}

\usepackage[margin=1in]{geometry}
\usepackage{amsmath,amssymb,amsthm,amsfonts,bm}
\usepackage{thmtools,thm-restate}
\usepackage{graphicx} \usepackage[dvipsnames]{xcolor}
\usepackage{xspace}
\usepackage{tcolorbox}
\usepackage{algorithm}
\usepackage{array}     \usepackage{booktabs}  \usepackage[mathscr]{euscript}

\usepackage[noend]{algorithmic}
\usepackage{hyperref}
\hypersetup{
	colorlinks=true,
	linkcolor=blue,
	citecolor=blue,
	filecolor=blue,
	urlcolor=blue
}
\usepackage[nameinlink]{cleveref}
\crefname{section}{\S\!\!}{\S\!\!}
\Crefname{section}{\S\!\!}{\S\!\!}

\usepackage{tikz}
\usepackage{svg}
\usetikzlibrary{positioning, arrows.meta} \usetikzlibrary{calc, positioning, shadows, arrows.meta, backgrounds}

\usepackage[inline]{enumitem}
\setitemize[1]{leftmargin=5mm}
\setitemize[2]{leftmargin=5mm,label=$\triangleright$}

\definecolor{Gred}{RGB}{219, 50, 54}
\definecolor{Ggreen}{RGB}{60, 186, 84}
\definecolor{Gblue}{RGB}{72, 133, 237}
\definecolor{Gyellow}{RGB}{247, 178, 16}
\definecolor{ToCgreen}{RGB}{0, 128, 0}
\definecolor{myGold}{RGB}{231,141,20}
\definecolor{myBlue}{rgb}{0.19,0.41,.65}
\definecolor{myPurple}{RGB}{175,0,124}

\newtheorem{theorem}{Theorem}[section]
\newtheorem{lemma}[theorem]{Lemma}
\newtheorem{fact}[theorem]{Fact}
\newtheorem{proposition}[theorem]{Proposition}
\theoremstyle{definition}
\newtheorem{definition}[theorem]{Definition}
\newtheorem{remark}[theorem]{Remark}

\DeclareMathOperator*{\argmin}{\mathrm{argmin}}

\newcommand{\prn}[1]{\left ( #1 \right )}

\newcommand{\floor}[1]{\left \lfloor #1 \right \rfloor}

\def\ddefloop#1{\ifx\ddefloop#1\else\ddef{#1}\expandafter\ddefloop\fi}
\def\ddef#1{\expandafter\def\csname #1\endcsname{\ensuremath{\mathbb{#1}}}}
\ddefloop ABCDEFGHIJKLMNOPQRTUVWXYZ\ddefloop  \def\ddef#1{\expandafter\def\csname c#1\endcsname{\ensuremath{\mathcal{#1}}}}
\ddefloop ABCDEFGHIJKLMNOPQRSTUVWXYZ\ddefloop
\def\ddef#1{\expandafter\def\csname b#1\endcsname{\ensuremath{\bm #1}}}
\ddefloop ABCDEFGHIJKLMNOPQRSTUVWXYZabcdeghijklnopqrstuvwxyz\ddefloop \def\ddef#1{\expandafter\def\csname h#1\endcsname{\ensuremath{\hat{#1}}}}
\ddefloop ABCDEFGHIJKLMNOPQRSTUVWXYZabcdefghijklmnopqrsuvwxyz\ddefloop 
\DeclareMathOperator*{\Ex}{\mathbb{E}}

\def\zup{z^{\uparrow}}
\def\zUp{z^{\Uparrow}}

\def\zDown{z^{\Downarrow}}
\def\var{\Omega}
\def\varup{\Omega^{\uparrow}}
\def\varUp{\Omega^{\Uparrow}}
\def\vardown{\Omega^{\downarrow}}
\def\varDown{\Omega^{\Downarrow}}
\def\hvar{\widehat{\Sigma}}

\newcommand{\Tup}{\cT^{\uparrow}}
\newcommand{\TUp}{\cT^{\Uparrow}}
\newcommand{\Tdown}{\cT^{\downarrow}}
\newcommand{\TDown}{\cT^{\Downarrow}}

\newcommand{\hSigma}{\hat{\Sigma}}
\newcommand{\hOmega}{\hat{\Omega}}

\newcommand{\child}{\mathrm{child}}
\newcommand{\parent}{\mathrm{parent}}

\newcommand{\rt}{\mathsf{rt}}

\newcommand{\GLR}{\mathsf{GLR}}
\newcommand{\ECGLR}{\mathsf{ECGLR}}
\newcommand{\FECGLR}{\mathsf{F}\text{-}\ECGLR}
\newcommand{\PECGLR}{\mathsf{P}\text{-}\ECGLR}
\newcommand{\GLS}{{\mathsf{GLS}}}

\newcommand{\ECGLS}{\mathsf{ECGLS}}
\newcommand{\OLS}{\mathsf{OLS}}
\newcommand{\ECOLS}{\mathsf{ECOLS}}

\newcommand{\ECGLRup}{\ECGLR^{\uparrow}}
\newcommand{\ECGLRUp}{\ECGLR^{\Uparrow}}
\newcommand{\ECGLRdown}{\ECGLR^{\downarrow}}
\newcommand{\ECGLRDown}{\ECGLR^{\Downarrow}}

\newcommand{\BHECGLR}[1]{#1\text{-}\ECGLR}\newcommand{\BHECGLS}[1]{#1\text{-}\ECGLS}

\newcommand{\EQ}{R}
\newcommand{\IEQ}{G}
\newcommand{\eq}{r}
\newcommand{\ieq}{g}

\newcommand{\ones}{{\bm 1}}

\newcommand{\poly}{\mathsf{poly}}

\newcommand{\sfv}{{d}}

\newcommand{\numcons}{{q}} 

\newcommand{\un}{N}

\newcommand{\um}{M}

\newcommand{\nullspace}{\mathfrak{N}}

\newcommand{\pinv}{\dagger}

\def\hzup{\hz^{\uparrow}}
\def\hzUp{\hz^{\Uparrow}}
\def\hzdown{\hz^{\downarrow}}
\def\hzDown{\hz^{\Downarrow}}
\def\var{\hOmega}
\def\varup{\hOmega^{\uparrow}}
\def\varUp{\hOmega^{\Uparrow}}
\def\vardown{\hOmega^{\downarrow}}
\def\varDown{\hOmega^{\Downarrow}}
\def\hvar{\widehat{\Sigma}}

\newcommand{\Ba}{\cB_a}
\newcommand{\Bs}{\cB_s}

\newcommand{\CombineEstimators}{\mathsf{CombineEstimators}}

\newcommand{\AlgComment}[1]{{\hfill \textcolor{black!60}{\emph{\# #1}}}}
\newcommand{\AlgCommentInline}[1]{{\textcolor{black!60}{\emph{\# #1}}}}
\newcommand{\AlgCommentCref}[1]{{\hfill (\Cref{#1})}}
\newcommand{\AlgCommentSee}[1]{{\hfill (see \Cref{#1})}}

\newcommand{\NMF}{\mathsf{NMF}}
\newcommand{\LS}{\mathsf{LS}}
\newcommand{\BlueDown}{\textsf{BlueDown}\xspace}
\newcommand{\TopDown}{\textsf{TopDown}\xspace}
\newcommand{\err}{\mathbf{err}}
\newcommand{\Geocodes}{\Gamma}
\newcommand{\constraints}{\mathscr{C}}

\newcommand{\sfBlueDown}{\ensuremath{\mathsf{BlueDown}}}
\newcommand{\sfMultiPass}{\ensuremath{\mathsf{MultiPass}}}
\newcommand{\sfRounder}{\ensuremath{\mathsf{Rounder}}}
\newcommand{\sfLeastSquares}{\ensuremath{\mathsf{LeastSquaresEstimator}}}

\newcommand{\total}{{\sc{total}}\xspace}
\newcommand{\full}{{\sc{full}}\xspace}

\newcommand{\subfigheighttwowide}{2.6in}
\newcommand{\subfigheighttwowidealt}{2.4in}
\newcommand{\subfigheighttwowidealttwo}{2.3in}
\newcommand{\ridgelineheight}{1.75in}

\title{
 Denoising the US Census: Succinct
Block Hierarchical Regression
}
\author{
Badih Ghazi, Pritish Kamath, Ravi Kumar, Pasin Manurangsi, Adam Sealfon\\[3mm]
Google Research
}

\begin{document}

\maketitle

\begin{abstract}
The US Census Bureau Disclosure Avoidance System (DAS)  balances confidentiality and utility requirements for the decennial US Census (Abowd et al.\ 2022). 
The DAS was used in the 2020 Census to produce demographic datasets critically used for legislative apportionment and redistricting, federal and state funding allocation, municipal and infrastructure planning, and scientific research. At the heart of DAS is \TopDown, a heuristic post-processing method that combines billions of private noisy measurements across six geographic levels in order to produce new estimates that are consistent, more accurate, and satisfy certain structural constraints on the data. 

In this work, we introduce \BlueDown, a new post-processing method that produces more accurate, consistent estimates while satisfying the same privacy guarantees and structural constraints. We obtain especially large accuracy improvements for aggregates at the county and tract levels
on evaluation metrics proposed by the US Census Bureau.

From a technical perspective, we develop a new algorithm for generalized least-squares regression that leverages the hierarchical structure of the measurements and that is statistically optimal among linear unbiased estimators. This reduces the computational dependence on the number of geographic regions measured from matrix multiplication time, which would be infeasible for census-scale data, to linear time.
We further improve the efficiency of our algorithm using succinct linear-algebraic operations that exploit symmetries in the structure of the measurements and constraints. We believe our hierarchical regression and succinct operations to be of independent interest.

\end{abstract}

\section{Introduction}\label{sec:intro}

Data from the decennial United States Census is widely used not only to determine the apportionment of congressional seats under U.S.\ Public Law 94--171 \cite{PL94-171} but also to inform the allocation of over \$1.5 trillion in annual federal funding \cite{reamer2020counting}, for municipal and infrastructure planning \cite{NAP29028}, and for research in disciplines such as sociology, public health, political science, and economics (e.g.,\ \cite{krieger2005painting,becker2021computational,abramitzky2024law}). A central requirement of the census, as mandated by Title 13 of the U.S.\ Code, is that individual responses must remain confidential \cite{USCensusBureau2024FAQ, Title13Privacy}. Historically, confidentiality was achieved using heuristic data-swapping techniques. However, simulated reconstruction attacks after the 2010 Census revealed that this approach was seriously flawed and exposed the data of between 50 and 180 million individuals, depending on what additional information was available to the attacker \cite{abowd2021declaration,Abowd2025}. This privacy failure led to the adoption of a mathematically rigorous disclosure avoidance framework based on Differential Privacy (DP)~\cite{dwork06calibrating} for the 2020 Census~\cite{abowd2018us}. DP balances accuracy and privacy demands, enabling the publication of approximate aggregate statistics while safeguarding individual confidentiality.

While DP provides provable, robust privacy guarantees, the requisite noise addition inherently degrades statistical utility. To navigate this privacy-utility tradeoff, the U.S.\ Census Bureau developed the \TopDown Algorithm for use in the 2020 Disclosure Avoidance System (DAS)~\cite{AbowdAC+22}. \TopDown is a sophisticated, heuristic procedure that takes over 10 billion noisy estimates generated in the first phase of the DAS and processes them hierarchically in geographic batches. It uses a massive series of constrained optimization problems to restore required invariants and produce a consistent Microdata Detail File (MDF), while also improving the accuracy of the underlying measurements. The MDF is aggregated to generate the Redistricting Data Summary (RDS) File and is also used in the generation of the Demographics and Housing Characteristics (DHC) File, two of the principal data products of the decennial census.

\subsection{Our Contributions}
We propose a novel post-processing method, the \BlueDown Algorithm  (\Cref{alg:bluedown}), that attains greater statistical efficiency than \TopDown while satisfying the same structural constraints.
We first exploit the hierarchical structure of the problem to compute the generalized least-squares regression estimator for the full collection of input estimates~(\Cref{alg:tree-post-processing}). This yields the best linear unbiased estimator (BLUE) for all estimates~(\Cref{thm:tree-optimality}). In particular, unlike \TopDown, we ensure that the information from {\em all} input measurements are combined optimally to produce each output estimate.
Our final \BlueDown algorithm results from a heuristic modification to this procedure, similar to \TopDown, to enforce all inequality and integrality constraints.
A key highlight of our algorithm is that we make these computations highly efficient by exploiting the symmetries of the measurement operations and constraints by operating over a certain succinct representation of matrices~(\Cref{sec:symmetries}).

Through extensive experiments on the 2020 Census data, we demonstrate that \BlueDown's theoretical guarantees translate directly into significant empirical accuracy gains.  The \BlueDown algorithm generates estimates with substantially lower error than \TopDown, achieving 8–50\% accuracy improvements across evaluation metrics on county and tract-level data and smaller but consistent improvements at the other geographic levels. We expect that our more accurate algorithm
should provide similar utility gains in the applications and downstream tasks that rely on census data.

\subsection{Related Work}\label{subsec:related-work}
We describe a formalization of the Census DAS in \Cref{sec:census-data} and \Cref{sec:formal-problem}. For a more in-depth description of the Census DAS and the \TopDown algorithm, see \cite{AbowdAC+22}. This algorithm has been the state of the art for postprocessing of census redistricting data.

The block hierarchical estimation problem characterizing the structure of the census measurements can be viewed as a generalization of tree-structured queries studied in \cite{hay2009boosting,cormode2012differentially,DawsonGK0KLMMNS23}. Algorithms developed in these works share a similar structure with our \Cref{alg:tree-post-processing}, a component of \BlueDown. However, additional challenges are raised in our setting since the census measurements are only partially or {\em block} hierarchical. Our algorithm additionally supports ``per-node'' equality constraints, which arise only in the block-hierarchical setting.

More closely related is the recent work of \cite{cumingsmenon2025squaresestimationhierarchicaldata}, which develops an algorithm similar to our \Cref{alg:tree-post-processing} subroutine for computing the least-squares optimal estimator of block hierarchical estimates.
The algorithm of \cite{cumingsmenon2025squaresestimationhierarchicaldata} is equivalent to the special case of \Cref{alg:tree-post-processing} applied in the absence of equality constraints, in the sense that both compute the unique best linear unbiased estimator.
However, \Cref{alg:tree-post-processing} also optimally incorporates linear equality constraints, and furthermore we show how to implement it more efficiently leveraging symmetries of measurement operations~(\Cref{sec:symmetries}).

Each of these algorithms---including ours---belongs to the class of DP \emph{matrix mechanisms} (also known as \emph{factorization mechanisms}) \cite{LiMHMR15,NikolovTZ16}, where one applies a linear transformation to the data, adds independent noise to the transformed data, and finally applies another linear transformation on this noisy transformed data to arrive at the answers to the queries. Matrix mechanisms enjoy strong guarantees: When the transformations are selected carefully, they can achieve asymptotically minimal errors among all DP mechanisms assuming the number of data points is sufficiently large~\cite{NikolovTZ16,EdmondsNU20}. Optimizing for these linear transformation (a.k.a. ``factorization'') can be formulated as a semi-definite program (see, e.g.,~\cite{MNT20}), but solving such a program is impractical for large matrices. While attempts have been made to make the general matrix mechanism more efficient~\cite{McKennaMHM23}, it remains infeasible at the scale of the US Census matrix. Our work can thus be seen as implicitly obtaining the second linear transformation and applying it, for the US Census workload, without explicitly materializing the full transformation matrix.

In a recent work, \cite{su2025usdecennial} demonstrated that the privacy guarantees provided by the Census DAS are better than originally reported, using improved privacy accounting analysis. Because our work uses the same noisy measurements as the Census DAS, our privacy guarantees are identical, and thus, can also benefit from this improved analysis. \newpage
\section{The US Census Setting}
\label{sec:census-data}

\paragraph{Data Pipeline.}

\begin{figure}[ht]
\includegraphics[width=\textwidth]{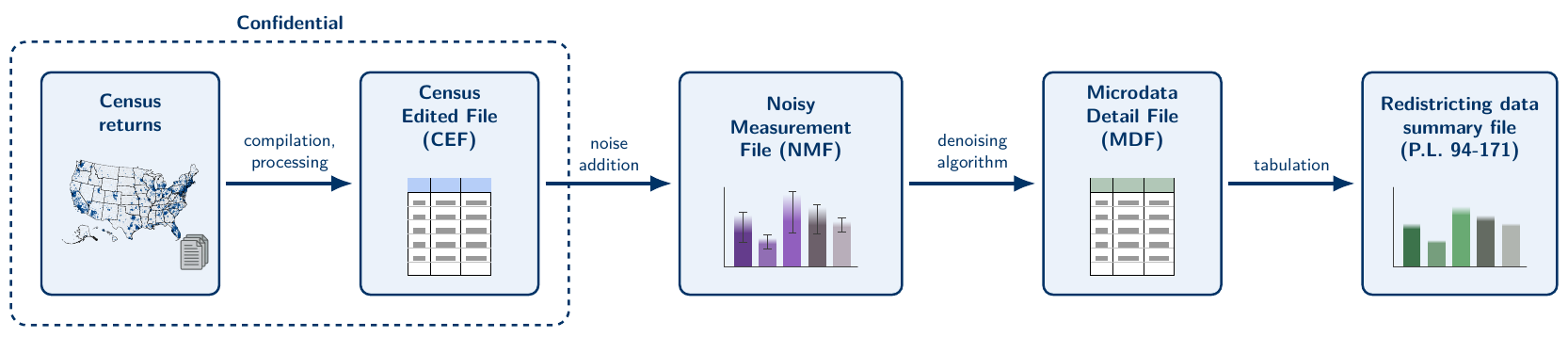}
\caption{The flow of data through the US Census Disclosure Avoidance System.}
\label{fig:das-data-flow}
\end{figure}

A simplified overview of the processing pipeline of the Census Disclosure Avoidance System (DAS) is outlined in \Cref{fig:das-data-flow}. The raw census returns are compiled, de-duplicated, imputed, and tabulated to generate the \emph{Census Edited File (CEF)}, which is the most authoritative set of census aggregates but is sensitive and confidential.  To achieve differential privacy (DP), discrete Gaussian noise (\Cref{def:discrete-gaussian}) is added to the CEF data to generate the \emph{Noisy Measurement File (NMF)}, which is publicly released but must be processed further for usability. A post-processing \emph{denoising algorithm}  (either \TopDown or our proposed \BlueDown) ingests the NMF, final tabulation geographic definitions, policy-mandated invariants, and expert-defined edit constraints and structural zeros as its inputs.  The algorithm's primary output is a \emph{Microdata Detail File (MDF)}, containing one record for each person and housing unit, which is subsequently aggregated to produce the 2020 Census Redistricting Data (P.L. 94-171) Summary File. 

\paragraph{Geocodes and Demographics.}
Each person in the raw census data is associated to a demographic bucket and to a leaf in the tree of geographies known as the geocode tree. We describe these next.

\begin{figure}
\begin{center}
\includegraphics[width=0.8\textwidth]{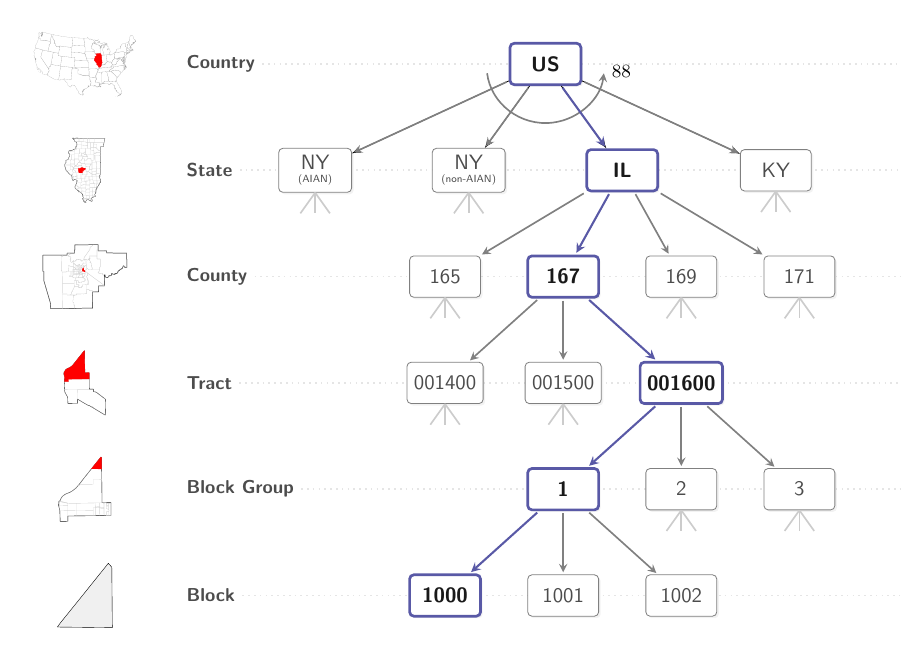}
\end{center}

\caption{Illustration of the geocode tree in the US Census.}
\label{fig:us-hierarchy}
\end{figure}

\begin{itemize}
\item The \emph{geocode tree} $\cT$ has six levels: {\em Country}, {\em State}, {\em County}, {\em Tract}, {\em Block Group}, and {\em Block}, where the leaves $\cL$ correspond to the different {\em Blocks}.\footnote{In some cases, particularly in the AIAN spine, some block groups (or even tracts or counties) may contain only a single block, so the tree can be simplified by omitting single child nodes that correspond to an identical geography as the parent. Such block groups should then be treated as leaves in the tree. For notational simplicity our discussion describes all leaves as at the block level, but our algorithm works equally well if some leaves are at higher levels.}
See \Cref{fig:us-hierarchy} for an illustration of one path in the tree. 37 of the 50 states in the US are accounted in two ways, one {\em spine} for American Indian and Alaska Native (AIAN) population, and one for non-AIAN population---and with inclusion of Washington D.C., this leads to a total of 88 ``states'' at depth one in the tree.
The number of non-empty blocks in the 2020 hierarchy was 5,892,698.

\item The \emph{demographic buckets} $\cB$ are specified by the following:
	\begin{itemize}
	\item {\em Household or Group Quarters Type} with 8 types,
    \begin{enumerate*}[label=(\arabic*)]
        \item household, 
        \item correctional facilities for adults,
        \item juvenile facilities,
        \item nursing facilities/skilled-nursing facilities,
        \item other institutional facilities,
        \item college/university student housing,
        \item military quarters,
        \item other noninstitutional facilities;
    \end{enumerate*}
    These 8 types are categorized into three groups: {\em individual quarters} $\{1\}$; {\em institutional group quarters} $\{2, 3, 4, 5\}$; and {\em non-institutional group quarters} $\{6, 7, 8\}$.
	\item {\em Hispanic/Latino origin} with 2 types: Hispanic/Latino; not Hispanic/Latino,
    \item {\em Voting Age}, with 2 types: age 17 or younger on April 1; 2020, age 18 or older on April 1, 2020,
	\item {\em Census Race} with 63 types, consisting of every combination of
    \begin{enumerate*}[label=(\arabic*)]
        \item Black/African American,
        \item American Indian/Native Alaskan,
        \item Asian,
        \item Native Hawaiian/Pacific Islander,
        \item White, and
        \item some other race, 
    \end{enumerate*}
    except ``none of the above.''
	\end{itemize}
In total, $|\cB| = 8 \cdot 2 \cdot 2 \cdot 63 = 2016$, and we denote a bucket $b \in \cB$ as a tuple $(b_1, b_2, b_3, b_4) \in [8] \times [2] \times [2] \times [63]$.
\end{itemize}
For each node $u \in \cT$ and each bucket $b \in \cB$, we use $x^\star_{u, b} \ge 0$ to denote the true count of the people living in the location represented by the node $u$, and belonging to the demographic bucket $b$, according to the CEF. We use $x^\star_u \in \Z_{\ge 0}^{\cB}$ to denote the $|\cB|$-dimensional vector of counts associated to all the demographic buckets for node $u$, and $x^\star \in \Z_{\ge 0}^{\cT \times \cB}$ to denote the combined vector across all nodes and demographic buckets.

\paragraph{Input Measurements.} For each node $u$ in the geocode tree, 
we observe noisy measurements over various aggregate {\em queries} over combinations of demographic types. Each query result is observed with the addition of discrete Gaussian noise~\cite{canonne20discrete}; the scale of the noise is different for different queries and different levels, as per the privacy budget allocated across the queries according to the specification described in \cite{AbowdAC+22} and in the NMF dataset itself. We denote such noisy measurements as $y_u \gets W_u x^\star_u + \xi_u$ where $W_u$ is the ``workload matrix'' for node $u$, and $\xi_u$ is the added noise with covariance $\Sigma_u$.  
\begin{itemize}
\item {\em Workload matrix $W_u$.} Each per-node workload $W_u$ is a collection of different {\em types} of linear queries applied to $x^\star_u$ for the node $u \in \cT$. Each workload matrix $W_u \in \R^{2603 \times 2016}$ has 2603 queries in total with the different query types described in \Cref{tab:per-node-workload}. Due to presence of the DETAILED query, which corresponds to an identity submatrix, it is immediate that the matrix $W_u$ has full (column) rank.

\item {\em Covariance matrix $\Sigma_u$.} Each per-node noise covariance matrix $\Sigma_u \in \R^{2603 \times 2603}$ is diagonal. The specific values in $\Sigma_u$ depend on the node $u$, and the query type, i.e., for each node $u$, all queries described in a single row of \Cref{tab:per-node-workload} are measured with the same noise scale; see~\cite[Tables 4, 6]{AbowdAC+22} for details.\footnote{For the most part, the noise scale for a particular type of query depends on the ``level'' of $u$ in $\cT$, e.g., State, County, etc. There are a few exceptions where a node with only one child is collapsed into a single node with smaller measurement variance. Our algorithm does not rely on any assumptions about the structure of the privacy budgeting across measurements and levels, except that the workload matrix must be full rank, i.e., the DETAILED query must have positive privacy budget.}

\end{itemize}

A notable property is that for each query type, the noise scale is symmetric in the value of $b_4$. In particular in \Cref{tab:per-node-workload}, either there are queries that aggregate over all values of $b_4$, or there are individual queries for each $b_4$, all of which are measured with the same noise level. By contrast, such a symmetry does not hold for $b_1$ due to presence of the HHINSTLEVELS query type. We critically leverage this symmetry for $b_4$ by working with a ``succinct representation'' of covariance matrices~(\Cref{def:succinct}), represented using two $32 \times 32$ dimensional matrices, making matrix operations considerably more efficient. We discuss this in more detail in \Cref{sec:symmetries}.

\begin{table}[t]
\centering
\renewcommand{\arraystretch}{1.5} \begin{tabular}{p{3cm} p{1cm} p{4.7cm} p{5.7cm}}
\toprule
\textbf{Query Type} & \textbf{Count} & \textbf{Family of Queries} & \textbf{Description of Query} \\
\midrule

\textbf{TOTAL} & 
1 & 
$\sum_{b_1, b_2, b_3, b_4} x^\star_{u, b}$ & 
Total population. \\

\textbf{CENRACE} & 
63 & 
$\sum_{b_1, b_2, b_3} x^\star_{u, b}$ for $b_4 \in [63]$ & 
Population counts for each OMB-designated race category. \\

\textbf{HISPANIC} & 
2 & 
$\sum_{b_1, b_3, b_4} x^\star_{u, b}$ for $b_2 \in [2]$ & 
Population counts for each value of Hispanic/Latino origin. \\

\textbf{VOTINGAGE} & 
2 & 
$\sum_{b_1, b_2, b_4} x^\star_{u, b}$ for $b_3 \in [2]$ & 
Population counts for two age groups (17 or younger, 18 or older). \\

\textbf{HHINSTLEVELS} & 
3 & 
$\sum_{b_1 \in C} \sum_{b_2, b_3, b_4} x^\star_{u, b}$ \par \vspace{1.5mm}for $C \in \{ \{1\}, \{2..5\}, \{6..8\}\}$ & 
Population counts per category of household. $C$ sets correspond to individual quarters $\{1\}$, institutional group quarters $\{2, 3, 4, 5\}$, and non-institutional group quarters $\{6, 7, 8\}$. \\

\textbf{HHGQ} & 
8 & 
$\sum_{b_2, b_3, b_4} x^\star_{u, b}$ for $b_1 \in [8]$& 
Population counts by each type of household / group quarters. \\

{\bf\boldmath HISPANIC \par $\times$ CENRACE} & 
126 & 
$\sum_{b_1, b_3} x^\star_{u, b}$ \par for $(b_2, b_4) \in [2] \times [63]$& 
Population counts for each value of Hispanic/Latino origin and race. \\

{\bf\boldmath VOTINGAGE \par $\times$ CENRACE} & 
126 & 
$\sum_{b_1, b_2} x^\star_{u, b}$ \par for $(b_3, b_4) \in [2] \times [63]$& 
Population counts for each value of voting age and race. \\

{\bf\boldmath VOTINGAGE \par $\times$ HISPANIC} & 
4 & 
$\sum_{b_1, b_4} x^\star_{u, b}$ \par for $(b_2, b_3) \in [2] \times [2]$& 
Population counts for each value of Hispanic/Latino origin and voting age. \\

{\bf\boldmath VOTINGAGE \par $\times$ HISPANIC \par $\times$ CENRACE} & 
252 & 
$\sum_{b_1} x^\star_{u, b}$ \par for $(b_2, b_3, b_4) \in [2] \times [2] \times [63]$& 
Population counts for each value of voting age, Hispanic/Latino origin and race. \\

\textbf{DETAILED} & 
2016 & 
$x^\star_{u, b}$\par for $b \in [8] \times [2] \times [2] \times [63]$ & 
Population counts for each individual bucket.\\
\bottomrule
\end{tabular}
\caption{Queries in each per-node workload $W_u$, totaling $2603$ queries.}
\label{tab:per-node-workload}
\end{table}

\paragraph{Invariants and Constraints.} 
To ensure the complete accuracy of electoral apportionment, the statewide total populations are reported exactly, bypassing the DP mechanism. Additionally, the total number of housing units and occupied group quarters facilities of each type is released for every block, as well as certain combinations of demographics that are deemed impossible.
The implied equality and inequality constraints are then assumed to hold on the collected data and enforced on the output, as described below.

\begin{itemize}
\item {\em Equality Constraints.} For each geographic region with zero households or occupied group quarters of a particular type, the number of individuals in that type of housing is constrained to be zero. The number of individuals below the age of 18 in the group quarters type ``Nursing facilities / skilled nursing facilities'' is always constrained to be zero. The total number of individuals  residing in each state (across both the AIAN and the non-AIAN spine) is constrained to be equal to the released enumerated count for that state, and similarly for the US as a whole \cite{USCensusBureau2023TopDownHow}.
\item {\em Inequality Constraints.} Every occupied group quarters facility must contain at least 1 individual, and every housing unit and occupied group quarters facility must contain at most 99,999 individuals. The number of households and occupied group quarters of each type in each geographic region is publicly released, implying upper and lower bounds on the total number of individuals in each household or group quarters type for each region \cite{USCensusBureau2023TopDownHow}. Finally, all counts are required to be non-negative.
\end{itemize}
 \section{Overview of \BlueDown}
\label{sec:overview}

Given the noisy measurements over various aggregate queries at each node $u$, the goal is the following:
\begin{quote}
\textsl{
Obtain consistent low-variance estimates $\hx_u \in \Z_{\ge 0}^{\cB}$ satisfying all invariants and constraints.
}
\end{quote}
Here, ``consistent'' means that the sum of estimated counts in all blocks below $u$ agrees with $\hx_u$ for each query.
At a high level, if we ignore the inequality, equality, and integrality constraints, then the problem is as follows: given $Wx^\star + \eta$ where $W$ is the combined workload matrix across all nodes and $\eta$ is the discrete Gaussian noise added for privacy, find an estimate $\hat{x}$ of $x^{\star}$.

As stated, this is a generalized linear regression problem, but with additional considerations. The first complication is that, as mentioned above, we have linear equality constraints in the form of per-node equality constraints, the statewide total constraints, and consistency with respect to the tree of geocodes.  However, this ``equality-constrained'' linear regression problem can still be solved using standard techniques (described in \Cref{subsec:gen-least-squares}), yielding a so-called \emph{best linear unbiased estimator (BLUE)}.   The resulting method, unfortunately, is not computationally efficient for the census problem, since even explicitly materializing the matrix $W$  is already infeasible at Census scale. Fortunately, the workload matrix $W$ and constraints have a block-hierarchical structure (formalized in \Cref{sssec:abstract-tree-structure}) that we exploit to achieve an efficient algorithm whose running time scales linearly with the size of the tree. To do so, we develop efficient primitives that can optimally combine two independent BLUEs related by equality constraints (described in \Cref{subsec:f-ecglr}); the resulting estimator is the BLUE of the combined constrained linear regression. Using this primitive, we develop an efficient algorithm~(\Cref{alg:tree-post-processing} in \Cref{sec:optimal-linear}) for obtaining the BLUE under the block-hierarchical structure of $W$, in two passes over the geocode tree.
The bottom-up pass recursively combines the estimates for a node with the estimators of its children, thereby computing the BLUE for each node with respect to the set of measurements corresponding to itself and its tree descendants.
In the top-down pass, we combine the estimator for each node with estimators for its parents and siblings. This gives us the final BLUE for the equality-constrained generalized linear regression problem.

This formulation of the algorithm is still not efficient enough at the scale of the US Census. This is due to the fact that the covariance matrices at each node are quite large ($2016 \times 2016$ dimensional). In \Cref{sec:symmetries}, we exploit symmetries of these matrices and show how to represent and perform operations over these matrices succinctly using a pair of $32 \times 32$ dimensional matrices. This yields a nearly 2000-fold reduction in the representation size of the covariance matrices, and an even larger improvement in the time needed to compute matrix inverses and products, which are the operations that dominate the performance of \Cref{alg:tree-post-processing}.

Using this succinct matrix representation, the modified \Cref{alg:tree-post-processing} is highly performant but does not yet satisfy the full suite of constraints required in the Census setting. 
Our final \BlueDown algorithm~(\Cref{alg:bluedown}) applies a heuristic modification to the top-down pass of \Cref{alg:tree-post-processing} to accommodate the inequality and integrality constraints. This modified top-down pass can be viewed as a generalization of the US Census \TopDown algorithm to non-diagonal covariance matrices. The incorporation of these constraints yields a dataset that fully conforms to all Census data requirements.
Moreover, while satisfying inequality constraints inherently results in a biased estimator, the incorporation of these constraints substantially improves error on many queries since the constraints are derived from known structural properties of the Census data.

\section{Experimental Results}\label{sec:experiments}
\subsection{Setup}
We evaluate the performance of our algorithm on data from the 2020 US Census. While we can run our algorithm on the released 2020 Noisy Measurement File (NMF), the ground-truth Census Edited File (CEF) is confidential and so we cannot directly compute the accuracy of the resulting measurements.\footnote{For the linear postprocessing algorithm described in \Cref{alg:tree-post-processing}, we obtain an analytic expression for the variance of the subgaussian error distribution and can bound the error without access to ground truth, but for our full algorithm the error distribution cannot be computed without access to the ground truth. For both algorithms, computing the empirical realized errors requires access to the ground truth.} 
Instead, following an approach proposed by the Census Bureau \cite{ashmead2025approximatemontecarlosimulation}, we sample new Noisy Measurement Files using the same protocol, noise distribution, and privacy budgets used by the US Census to generate the production NMF but using the public 2020 Microdata Detail File (MDF) as the input instead of the confidential CEF.  We use these datasets to compute accuracy metrics for both our \BlueDown algorithm and the Census \TopDown algorithm, comparing the estimates after noise addition and each postprocessing algorithm to the input values from the MDF. Evaluations performed by the US Census \cite{AbowdAC+22, USCensusBureau2023detailed} confirm that the MDF estimates are close approximations for the confidential CEF values, so this is an extremely close proxy for evaluating how our algorithm would perform in production settings. Indeed, this exact approach was used by the Census Bureau to obtain confidence intervals for the performance of \TopDown, and the Bureau released 50 replicate microdata files
generated by running \TopDown on NMFs generated in the same way from the public MDF and following the same probability distribution \cite{USCensus2020AMCMDF}.

Using the MDF as ground truth, we independently sample $10$ noisy measurement files $\NMF_1, \ldots, \NMF_{10}$. On each of these NMFs, we run our algorithm (\BlueDown,~\Cref{alg:bluedown}) and an implementation of the 2020 Census \TopDown algorithm \cite{AbowdAC+22}. For each of these postprocessed datasets $\BlueDown_1, \ldots, \BlueDown_{10}$, 
$\TopDown_1, \ldots, \TopDown_{10}$, we compute error metrics on each of the two outputs by computing the $\ell_1$-distance of partial aggregates of the estimates to the corresponding aggregates of the MDF. Each partial aggregate is a statistical marginal query that computes the histogram of the number of individuals with each possible feature value for some subset of the features $\cB$ (Housing type, voting age, Hispanic or non-Hispanic, race) at some level of the tree (country, state, county, tract, block group, or block). Some examples of partial aggregates are the queries that make up the NMF, as described in \Cref{tab:per-node-workload}.
For each partial aggregate at some level of the tree, we compute the mean error across all geocodes at that level. Representing a partial aggregate as a matrix $Q$, this is given by the expression
\[
\err^Q_\ell(x) = \frac{1}{\Geocodes_\ell} \sum_{g\in \Geocodes_\ell} \| Q x_g^\star - Q x_g\|,
\]
where $\Geocodes_\ell$ denotes the set of all geocodes at level $\ell$ of the tree, $x_g^\star$ is the vector of MDF counts for geocode $g$, and $x_g$ is a vector of estimates for geocode $g$ produced by either of the postprocessing algorithms.
In order to display the errors of estimates of different scales in the same figure, we normalize each error value by dividing it by the median error for that query for the 10 baseline replicates $\TopDown_1, \ldots, \TopDown_{10}$. Errors reported in tables in \Cref{sec:supplemental-plots} are raw $\ell_1$-errors without normalization.

\subsection{Evaluation}

The normalized errors for a selection of partial aggregates across the 10 runs of each algorithm are summarized in \Cref{fig:general-query-relative-errors,fig:race-and-gq-relative-errors}. 
Since these errors are normalized by the median error for the 10 runs of $\TopDown$, in each plot the points corresponding to $\TopDown$ are centered at $1.0$, while the values of the points corresponding to $\BlueDown$ give the improvement over the average error of $\TopDown$.
Additional results and the un-normalized accuracy values are included in \Cref{sec:supplemental-plots}.

\begin{figure}
\centering
\setlength{\tabcolsep}{2pt}
\begin{tabular}{rl}
\includegraphics[height=\subfigheighttwowide]{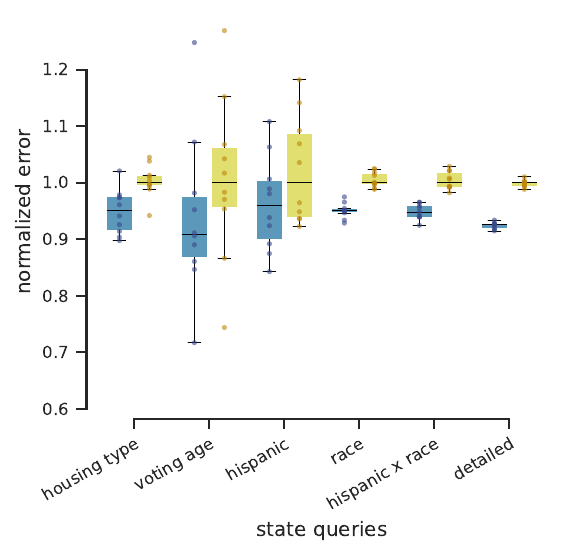}&
\includegraphics[height=\subfigheighttwowide]{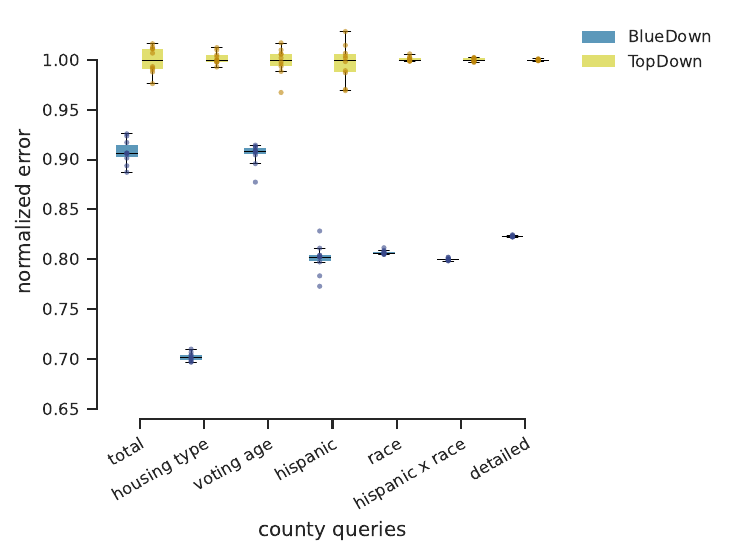}\\
\includegraphics[height=\subfigheighttwowide]{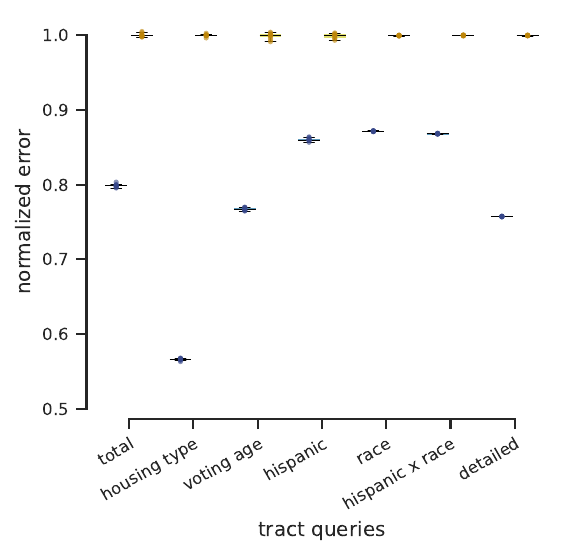}&
\includegraphics[height=\subfigheighttwowide]{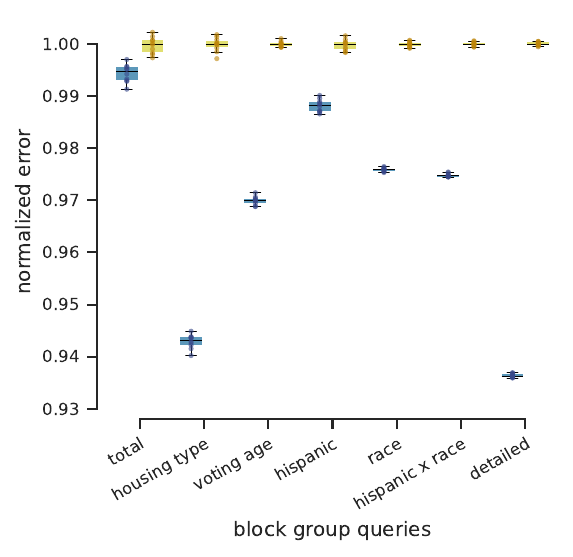}\\
\includegraphics[height=\ridgelineheight]{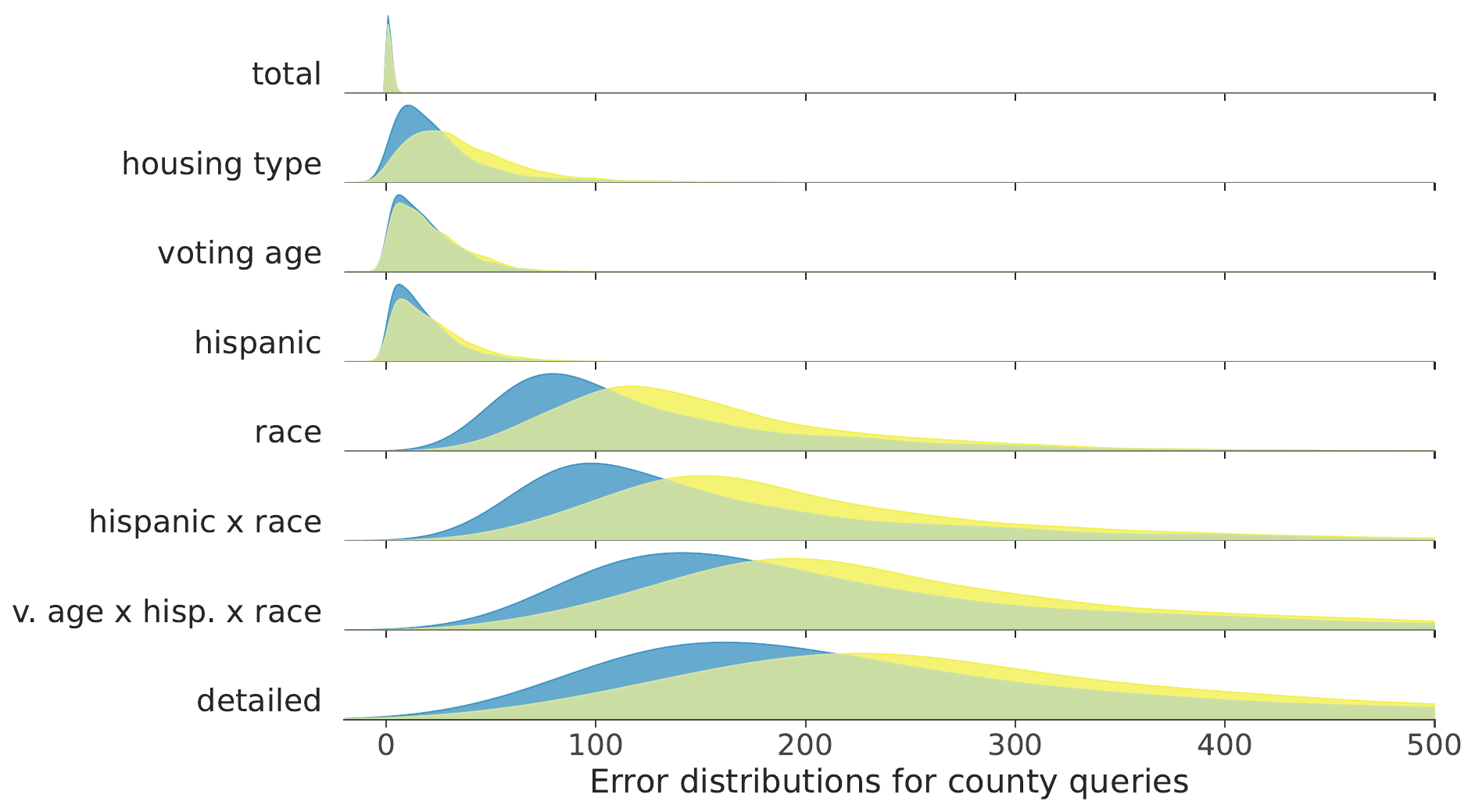}&
\includegraphics[height=\ridgelineheight]{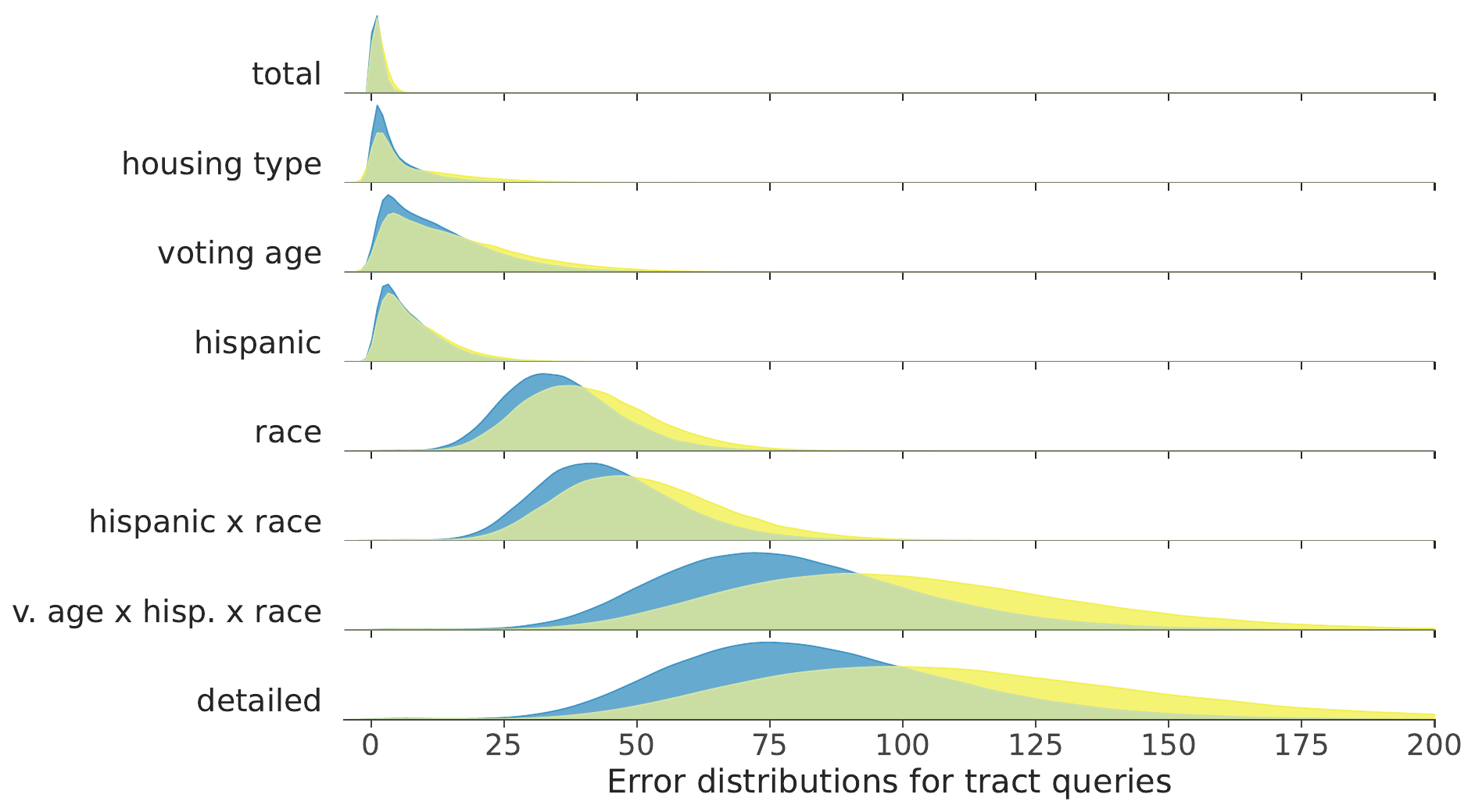}
\end{tabular}
\caption{Average accuracy improvements for our algorithm for queries aggregating by state, county, tract, and block group, and error distributions of queries across each geographic unit at the county and tract levels.}
\label{fig:general-query-relative-errors}
\end{figure}

\Cref{fig:general-query-relative-errors} shows errors of the ten executions of each algorithm for six partial aggregates at each of four levels of the tree. 
These queries are:
(i) the total count for geocodes at that level,\footnote{Note that statewide totals are invariant under both algorithms, so we omit the total query at the state level.}
(ii) the counts for each of the eight housing types,
(iii) the counts for the below-18 and over-18 populations,
(iv) the Hispanic/non-Hispanic counts,
(v) the counts for each of the 63 race categories,
(vi) the counts for each of the 126 combinations of Hispanic or non-Hispanic and race, and
(vii) the counts for each of the $|\cB|=2016$ combinations of all four features.
We present the accuracy improvement of each of these queries for each of the middle four levels of the tree: State, County, Tract, and Block Group. 
We observe improvements of varying magnitudes across all queries and levels, with the largest error reductions of 8--45\% found at the county and tract levels.
The error values are more stable at lower levels of the tree, presumably due to the statistical concentration effects of averaging the sampled errors at a larger number of tree nodes.

The final two panels of \Cref{fig:general-query-relative-errors} show a ridgeline plot of the error distributions with KDE smoothing for each of these queries\footnote{as well as an additional query for each of the 252 combinations of voting age, Hispanic or non-Hispanic, and race} across all 10 runs, where the distribution is over all geocodes at the county and tract levels, respectively. 

\begin{figure}
\centering
\setlength{\tabcolsep}{2pt}
\begin{tabular}{rl}
\includegraphics[height=\subfigheighttwowide]{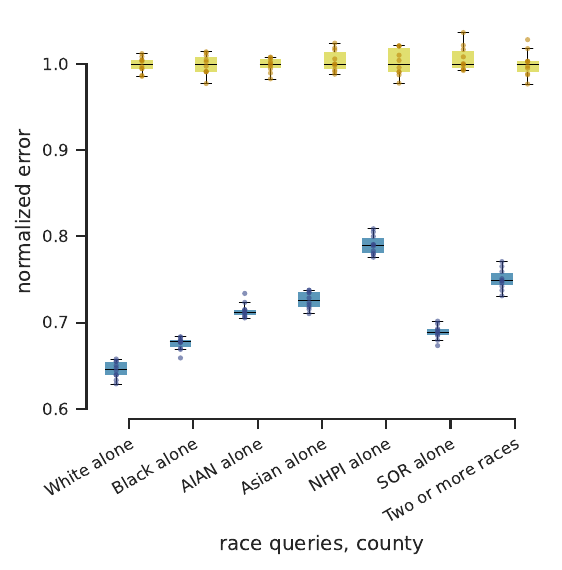}&
\includegraphics[height=\subfigheighttwowide]{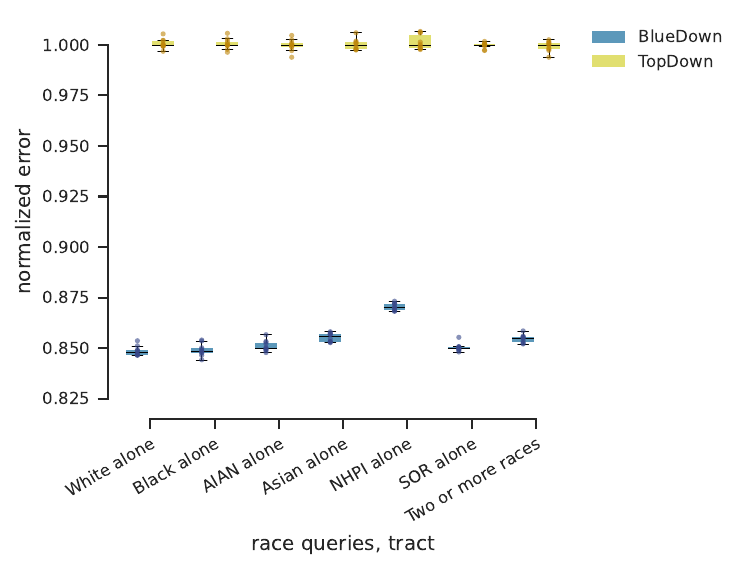}\\
\includegraphics[height=\subfigheighttwowidealt]{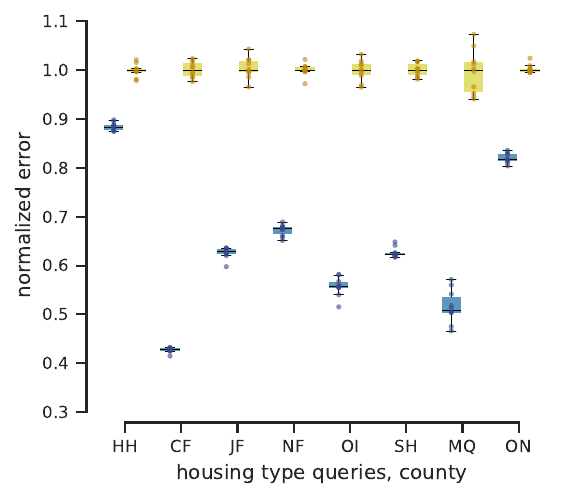}&
\includegraphics[height=\subfigheighttwowidealt]{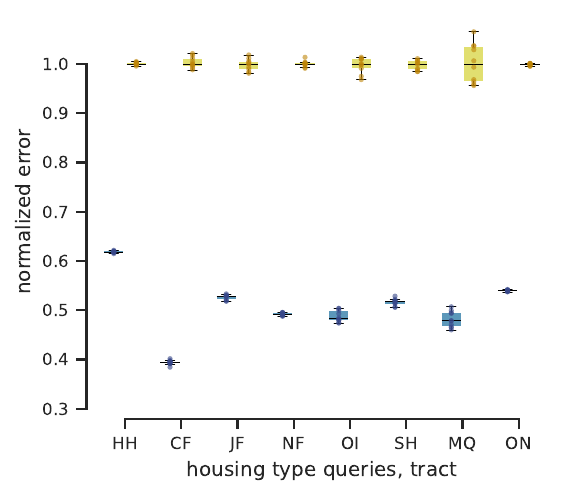}\\
\includegraphics[height=\subfigheighttwowidealt]{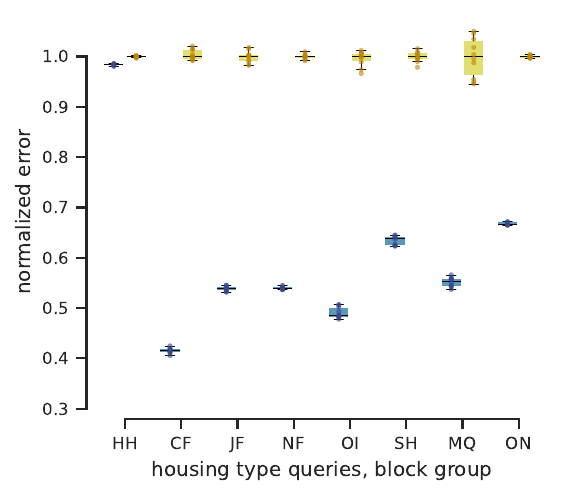}&
\includegraphics[height=\subfigheighttwowidealt]{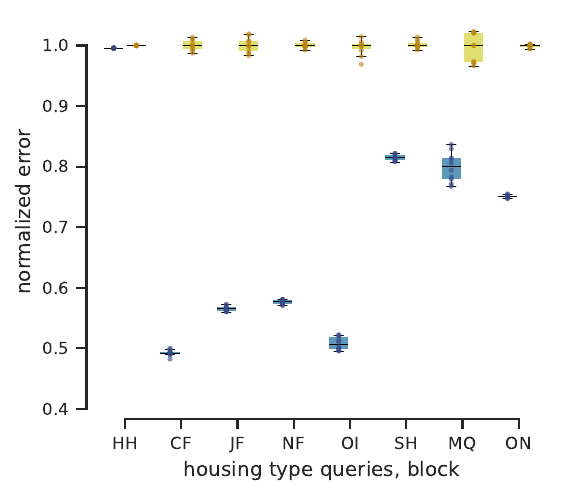}
\end{tabular}
\caption{Average accuracy improvements for race and group quarters type queries. For race categories, AIAN=American Indian and Alaska Native, NHPI=Native Hawaiian and Pacific Islander, SOR=Some Other Race. For group quarters categories,
HH=Household,
CF=Correctional facilities for adults, 
JF=Juvenile facilities,
NF=Nursing facilities/Skilled-nursing facilities,
OI=Other institutional facilities,
SH=College/University student housing,
MQ=Military quarters,
ON=Other noninstitutional facilities.
}
\label{fig:race-and-gq-relative-errors}
\end{figure}

\Cref{fig:race-and-gq-relative-errors} shows the improvements for $15$ additional queries at the county and tract levels and $8$ additional queries at the block group and block levels. The county and tract level queries are among those evaluated in the US Census release of detailed summary metrics for the \TopDown algorithm \cite{USCensusBureau2023detailed}. The two upper plots show the normalized errors for the ten executions of each algorithm on seven race queries, consisting of the counts for each single-race category and for the combination of all other race categories. The four lower plots show  the relative errors for the counts for each of the seven categories of group quarters. Many of these queries exhibit even larger accuracy improvements, with error reductions of 25--35\% for most county-level race queries and 30--60\% for most group quarters queries at both the county and tract levels. At the block group and block levels, we see only a small improvement for the household query (``HH'') but large improvements for the seven group quarters queries, which overall incur much lower error due to greater sparsity. The large utility improvements for the housing type queries are likely due in part to the \BlueDown algorithm's bottom-up incorporation of zero constraints for blocks without group quarters facilities, resulting in a larger effective privacy budget for such queries.

\begin{figure}
\centering
\setlength{\tabcolsep}{2pt}
\includegraphics[height=\subfigheighttwowidealt]{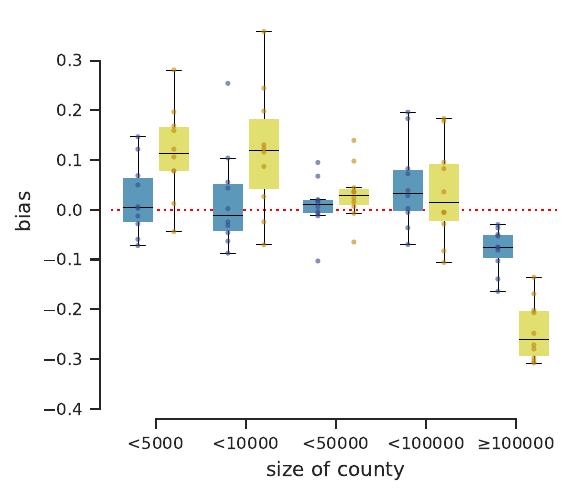}
\includegraphics[height=\subfigheighttwowidealt]{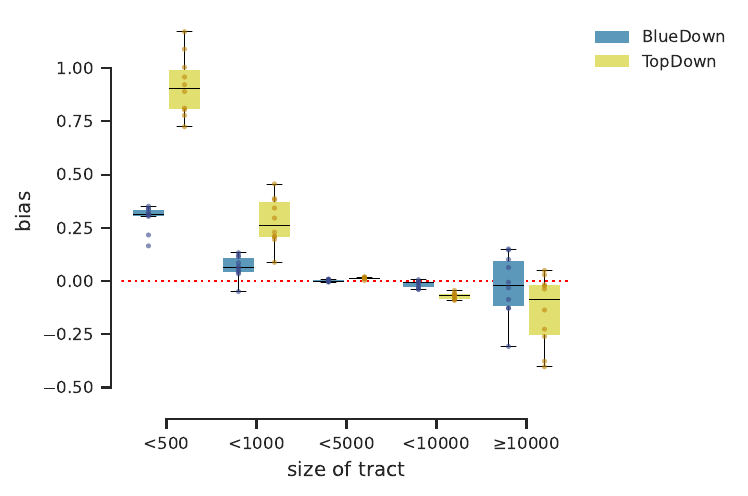}
\caption{Mean error (bias) for total population of each population bin at county and tract levels, for each replicate run. Population bins are mutually exclusive.
}
\label{fig:county-and-tract-bias}
\end{figure}

\Cref{fig:county-and-tract-bias} shows the mean discrepancy (i.e., bias) for the total population query across geographic units grouped in bins by population size, across the 10 runs of each algorithm. Both \TopDown and \BlueDown typically exhibit positive bias for small geographic units and negative bias for large geographic units. However, compared to \TopDown, \BlueDown exhibits much lower bias in both small and large population bins at both county and tract level. As shown in \Cref{fig:more-levels-bias} in \Cref{sec:supplemental-plots}, we observe a much smaller reduction in bias at the block group and block levels. Since both algorithms constrain the total population at the state and country levels, both incur no error and no bias for the total population query at those levels.

\subsection{Summary and Theoretical Outlook}

As shown in the previous sections, our \BlueDown algorithm successfully addresses the privacy-utility tradeoff
inherent in the US Census DAS.  By obtaining estimates with
substantially lower error than \TopDown---accuracy improvements of 8--60\%
for various queries on county and tract-level data---\BlueDown has proven to be a highly
effective, statistically and computationally efficient
post-processing
method for the census setting.

However, the empirical success of \BlueDown is based on a
mathematical 
foundation that extends far beyond the specific constraints,
demographic buckets, and geocode trees of census data.  As we
mentioned in \Cref{sec:overview}, the core component
of \BlueDown is a statistically
optimal algorithm for block-hierarchical
regression.  To
understand how this algorithm achieves this guarantee while also satisfying computational efficiency, we now introduce a general theoretical framework we employ in the design and analysis of the algorithm.
 \section{Technical Preliminaries}
\label{sec:prelims}

Let $\ones_\sfv$ denote the all-ones (column) vector of dimension $\sfv$; when the dimension is clear from context, we will sometimes omit $d$.

\begin{definition}[Matrix Pseudoinverse]\label{def:pseudo-inverse}
For a matrix $A \in \R^{m \times n}$, its \emph{(Moore--Penrose) pseudoinverse} is denoted $A^{\dagger} \in \R^{n \times m}$ and is the unique matrix satisfying the following conditions: 
(i) $A A^{\dagger} A = A$,
(ii) $A^{\dagger}  A A^{\dagger} = A^{\dagger}$,
(iii) $(A A^{\dagger})^\top = A A^{\dagger}$, and
(iv) $(A^{\dagger} A)^\top = A^{\dagger} A$.

If $A$ is square and invertible, then $A^{\dagger} = A^{-1}$, the usual matrix inverse.
\end{definition}

\paragraph{Kronecker Algebra.}
For any two matrices $A \in \R^{m_1 \times n_1}$ and $B \in \R^{m_2 \times n_2}$, let $A \otimes B \in \R^{m_1m_2 \times n_1n_2}$ denote the {\em Kronecker product} given by the block matrix
\[
A \otimes B = \begin{bmatrix}
    A_{1,1} B & \ldots & A_{1, n_1} B \\
    \vdots & \ddots & \vdots \\
    A_{m_1,1} B & \ldots & A_{m_1, n_1} B \\
\end{bmatrix}\,,
\]
or more explicitly, $(A \otimes B)_{i, j} = A_{i_1, j_1} \cdot B_{i_2, j_2}$, where $i = m_2 (i_1 - 1) + i_2$ and $j = n_2 (j_1 - 1) + j_2$.

It is often possible to operate on Kronecker product matrices in terms of their individual components without having to materialize the full product matrix.  We next state some identities that we repeatedly use.

\begin{fact}[Kronecker Algebra]\label{fact:kronecker}
\mbox{}
\begin{itemize}
\item $(A \otimes B) (C \otimes D) = AC \otimes BD$ (whenever the matrix dimensions make sense, namely when the number of columns of $A$ equals the number of rows of $C$, and likewise for $B$ and $D$).
\item $(A \otimes B)^{\dagger} = A^{\dagger} \otimes B^{\dagger}$; in particular $A \otimes B$ is invertible if and only if $A$ and $B$ are invertible.
\item $(A \otimes B) \cdot \mathsf{vec}(Z) = \mathsf{vec}(A Z B^\top)$, for $A \in \R^{m_1 \times n_1}$, $B \in \R^{m_2 \times n_2}$ and $Z \in \R^{n_1 \times n_2}$. Here, $\mathsf{vec}(\cdot)$ flattens a matrix into a vector using row-major ordering. This identity is sometimes known as ``Roth's Column Lemma'' or the ``vec trick''~\cite{roth1934direct}.
\end{itemize}
\end{fact}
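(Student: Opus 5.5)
We prove the three identities in \Cref{fact:kronecker} in order, each by reducing to the entrywise (or block) description of $A \otimes B$.

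\emph{Mixed-product rule.} We view $A\otimes B$ as the block matrix whose $(i_1,j_1)$ block is $A_{i_1,j_1}B$, and similarly for $C\otimes D$. The $(i_1,k_1)$ block of the product is then
\[
\sum_{j_1} (A_{i_1,j_1}B)(C_{j_1,k_1}D) = \Big(\sum_{j_1} A_{i_1,j_1}C_{j_1,k_1}\Big) BD = (AC)_{i_1,k_1}\, BD,
\]
which is exactly the $(i_1,k_1)$ block of $AC\otimes BD$. The dimension hypotheses ensure that the block partitions are compatible.

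\emph{Pseudoinverse rule.} We use the uniqueness in \Cref{def:pseudo-inverse}: it suffices to show that $X = A^\dagger\otimes B^\dagger$ satisfies conditions (i)--(iv) for $A\otimes B$.
\begin{itemize}
\item By the mixed-product rule, $(A\otimes B)X(A\otimes B) = AA^\dagger A\otimes BB^\dagger B = A\otimes B$, which is (i).
\item Similarly, $X(A\otimes B)X = A^\dagger A A^\dagger \otimes B^\dagger B B^\dagger = X$, which is (ii).
\item Entrywise one has $(P\otimes Q)^\top = P^\top\otimes Q^\top$. Hence $((A\otimes B)X)^\top = (AA^\dagger)^\top\otimes(BB^\dagger)^\top = AA^\dagger\otimes BB^\dagger = (A\otimes B)X$, which is (iii). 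Condition (iv) follows symmetrically.
\end{itemize}

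For the invertibility claim:
\begin{itemize}
\item If $A$ and $B$ are invertible, then $(A\otimes B)(A^{-1}\otimes B^{-1}) = I\otimes I = I$, so $A\otimes B$ is invertible.
\item Conversely, assume $A\otimes B$ is invertible (so it is square, with $m_1m_2=n_1n_2$, and both factors are nonzero). We use $\mathrm{rank}(A\otimes B)=\mathrm{rank}(A)\,\mathrm{rank}(B)$. This follows, for instance, from the mixed-product rule applied to rank factorizations of $A$ and $B$, or from the SVD: $(U_1S_1V_1^\top)\otimes(U_2S_2V_2^\top) = (U_1\otimes U_2)(S_1\otimes S_2)(V_1\otimes V_2)^\top$ with orthogonal outer factors.
\item Then $m_1m_2 = \mathrm{rank}(A)\mathrm{rank}(B) \le \min(m_1,n_1)\min(m_2,n_2)$, and likewise with $n_1n_2$. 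This forces $\mathrm{rank}(A)=m_1=n_1$ and $\mathrm{rank}(B)=m_2=n_2$, so both $A$ and $B$ are square and invertible.
\end{itemize}

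\emph{Vec trick.} With row-major ordering, index $j=n_2(j_1-1)+j_2$ of $\mathsf{vec}(Z)$ holds $Z_{j_1,j_2}$, and index $i = m_2(i_1-1)+i_2$ of $\mathsf{vec}(AZB^\top)$ holds $(AZB^\top)_{i_1,i_2}$. Using the entrywise formula for $A\otimes B$, the $i$-th entry of $(A\otimes B)\,\mathsf{vec}(Z)$ is
\[
\sum_{j_1,j_2} A_{i_1,j_1}B_{i_2,j_2}Z_{j_1,j_2} = \sum_{j_1,j_2} A_{i_1,j_1}Z_{j_1,j_2}(B^\top)_{j_2,i_2} = (AZB^\top)_{i_1,i_2},
\]
which matches.

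The only step requiring care beyond bookkeeping is the converse direction of the invertibility claim. We handle it by the rank-multiplicativity argument above, which also covers the non-square case.
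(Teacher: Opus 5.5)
The paper gives no proof of this statement to compare against: it is recorded as a standard fact, with a citation to Roth for the vec trick, and used without proof. Your proof is correct and complete.

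\begin{itemize}
\item \emph{Mixed-product rule and vec trick.} Your block computation for the mixed-product rule is right. Your row-major index bookkeeping for the vec trick matches the paper's convention $i = m_2(i_1-1)+i_2$, $j = n_2(j_1-1)+j_2$.
\item \emph{Pseudoinverse.} Verifying the four Penrose conditions for $A^\dagger \otimes B^\dagger$ is exactly the technique the paper uses in its proof of \Cref{lem:uncompressed-inverse-v2}, where it checks the same four conditions for $A^\dagger \otimes P_0 + B^\dagger \otimes P_1$.
\item \emph{Converse of the invertibility claim.} Your rank-multiplicativity argument is the right tool. It correctly handles the case where $A \otimes B$ is square but $A$ and $B$ are not (e.g.\ $A$ of size $1\times 2$ and $B$ of size $2\times 1$), which a careless argument could miss.
\end{itemize}

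One small polish. If you take the SVD route, add the line noting that $S_1 \otimes S_2$ has at most one nonzero entry in each row and column, even though it is generally not diagonal when the factors are non-square. Its rank is therefore the number of nonzero entries, namely $\mathrm{rank}(S_1)\,\mathrm{rank}(S_2)$.

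The rank-factorization route avoids even this step. Write $A = F_1G_1$ and $B = F_2G_2$ with each $F_k$ of full column rank and each $G_k$ of full row rank. Then $F_1 \otimes F_2$ has left inverse $F_1^\dagger \otimes F_2^\dagger$, and $G_1 \otimes G_2$ has right inverse $G_1^\dagger \otimes G_2^\dagger$, by the mixed-product rule you have already proved. This gives $\mathrm{rank}(A\otimes B) = \mathrm{rank}(A)\,\mathrm{rank}(B)$ immediately.
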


\paragraph{Discrete Gaussian Distribution.}
The discrete Gaussian distribution is the discrete analog of the continuous Gaussian distribution, defined over integers.
\begin{definition}[Discrete Gaussian Distribution; see, e.g.,  \cite{canonne20discrete}]\label{def:discrete-gaussian}
Let $\mu \in \R$ be the mean and $\sigma$ be a parameter controlling the standard deviation. The \emph{discrete Gaussian distribution} over $\Z$, with parameters $\mu, \sigma^2$, is the probability distribution given by 
\[\textstyle
\Pr[X = x] \propto \exp \left(-\frac{(x-\mu)^2}{2 \sigma^2} \right), \mbox{ for } x \in \Z.
\]
\end{definition}

\subsection{Constrained Generalized Linear Regression}\label{subsec:gen-least-squares}

The main optimization problem we work with (discussed formally in \Cref{sec:formal-problem}) can be described as a \emph{constrained generalized linear regression} problem with various constraints of equality, inequality, and integrality. Before describing in detail our main problem, we discuss relevant background on the classical topic of generalized linear regression both with and without equality constraints, which serves as the foundation of our work.

\paragraph{\boldmath Generalized Linear Regression ($\GLR_{W, \Sigma}$).} The {\em generalized linear regression} problem $\GLR_{W, \Sigma}$, parameterized by a workload matrix $W \in \R^{m \times n}$ and a noise covariance $\Sigma \in \R^{m \times m}$, is the problem of estimating an unknown vector $x^\star \in \R^n$ given observation $y \sim Wx^\star + \eta$, where $\eta$ is drawn from a distribution with zero mean and covariance $\Sigma$; for convenience, we let $\GLR_{W,\Sigma}(x^\star)$ denote the distribution of $y \in \R^m$. Note that while the most common setting considers Gaussian noise, in our setting we will instead be using noise drawn from a discrete Gaussian distribution.
The special case of $\Sigma = \sigma^2 I$ is often referred to as (ordinary) linear regression.

For any $\GLR_{W,\Sigma}$ problem, we will always assume that $W$ has full column rank (i.e.\ rank $n$) and that $\Sigma$ is positive definite.
We study {\em estimators} $\hx : \R^m \to \R^n$ of $x^\star$ under $\GLR_{W,\Sigma}$; while $\hx$ is technically a function, we will interchangeably use it to denote the random variable $\hx(y)$ for $y \sim \GLR_{W,\Sigma}(x^\star)$. For any estimator $\hx$, we can associate a covariance $\hSigma := \Ex_y[(\hx - \Ex \hx) (\hx - \Ex \hx)^\top] \in \R^{n \times n}$.

\begin{definition}[BLUE]\label{def:blue}
The \emph{Best Linear Unbiased Estimator (BLUE)} for $x^\star$ under $\GLR_{W, \Sigma}$, denoted $\hx_{\mathrm{BLUE}}$, is an estimator that satisfies the following conditions (where $\hSigma_{\mathrm{BLUE}}$ denotes the covariance of $\hx_{\mathrm{BLUE}}$):
\begin{enumerate}
    \item {\em Linearity}: $\hx_{\mathrm{BLUE}}(y)$ is a linear function of $y$, i.e., $\hx_{\mathrm{BLUE}}(y) = Ay$ for some $A \in \R^{n \times m}$.
    \item {\em Unbiasedness}: Its expected value $\Ex_{y\sim \GLR_{W,\Sigma}(x^\star)}[\hx_{\mathrm{BLUE}}(y)]$ equals the true parameter $x^\star$.
    \item {\em Minimum Variance}: For any linear unbiased estimator $\hx$ with covariance $\hSigma$, the difference of covariance matrices $\hSigma - \hSigma_{\mathrm{BLUE}}$
        is positive semi-definite.
\end{enumerate}
\end{definition}

\noindent Given an estimator $\hx$ for $x^\star$, we can derive an estimator for any linear projection of $x^\star$. For linear map $L : \R^n \to \R^q$, $L\hx$ is an estimator for $Lx^\star$ with covariance $L\hSigma L^\top$. If $\hx_{\mathrm{BLUE}}$ is the BLUE for $\GLR_{W, \Sigma}$, then for any linear unbiased estimator $\hx$ with covariance $\hSigma$, it follows that $L \hSigma L^\top - L \hSigma_{\mathrm{BLUE}} L^\top$ is also positive semi-definite. We will therefore refer to $\hz : \R^m \to \R^q$ given by $\hz(y) = L\hx_{\mathrm{BLUE}}(y)$ as the best linear unbiased estimator for $Lx^\star$ under $\GLR_{W,\Sigma}$.

The {\em Gauss--Markov} theorem precisely characterizes the BLUE for $\GLR_{W, \Sigma}$ as the minimizer of the generalized least squares ($\GLS$) objective.

\begin{lemma}[Gauss--Markov Theorem]\label{lem:gls}
The BLUE for $x^\star$ under $\GLR_{W, \Sigma}$, and its covariance satisfy:
\begin{align}
\hx_{\GLS} & ~=~ \argmin_{x} (y - Wx)^\top \Sigma^{-1} (y - Wx) ~=~ (W^\top \Sigma^{-1} W)^{-1} W^\top \Sigma^{-1} y, \label{eq:beta-gls} \\
\hSigma_{\GLS} &~=~ (W^\top \Sigma^{-1} W)^{-1}. \label{eq:sigma-gls}
\end{align}
\end{lemma}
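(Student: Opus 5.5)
The plan is to prove the Gauss--Markov statement directly in three steps: compute the minimizer of the $\GLS$ objective, check that it is a linear unbiased estimator with the claimed covariance, and then show that every other linear unbiased estimator has covariance at least this large in the positive semidefinite order.

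\emph{Minimizer.} Since $W$ has full column rank and $\Sigma \succ 0$, the matrix $M := W^\top \Sigma^{-1} W$ is positive definite, hence invertible. Indeed, $v^\top M v = \|\Sigma^{-1/2} W v\|^2 > 0$ for $v \neq 0$. The objective $(y-Wx)^\top \Sigma^{-1}(y-Wx)$ is therefore a strictly convex quadratic in $x$. Setting its gradient $-2W^\top\Sigma^{-1}(y-Wx)$ to zero gives the normal equations $Mx = W^\top \Sigma^{-1} y$. Their unique solution is the formula in \eqref{eq:beta-gls}.

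\emph{Linearity, unbiasedness and covariance.} Write $A_0 := M^{-1}W^\top\Sigma^{-1}$, so that $\hx_{\GLS} = A_0 y$ is linear. Since $A_0 W = I$, we get $\Ex[\hx_{\GLS}] = A_0 W x^\star = x^\star$. Its covariance is
\[
A_0 \Sigma A_0^\top = M^{-1} W^\top \Sigma^{-1}\Sigma\Sigma^{-1} W M^{-1} = M^{-1},
\]
which is \eqref{eq:sigma-gls}.

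\emph{Minimum variance.} This is the main step. Let $\hx = Ay$ be any linear unbiased estimator. Unbiasedness must hold for every $x^\star$, so $AWx^\star = x^\star$ for all $x^\star$, which forces $AW = I$. Set $D := A - A_0$; then $DW = 0$. The covariance of $\hx$ expands as
\[
A\Sigma A^\top = A_0\Sigma A_0^\top + D\Sigma A_0^\top + A_0 \Sigma D^\top + D\Sigma D^\top.
\]
The cross term vanishes: $D\Sigma A_0^\top = D\Sigma\Sigma^{-1}WM^{-1} = DWM^{-1} = 0$, and its transpose $A_0\Sigma D^\top$ is zero as well. Hence $\hSigma - \hSigma_{\GLS} = D\Sigma D^\top \succeq 0$, which is exactly the minimum-variance condition of \Cref{def:blue}.

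Uniqueness of the BLUE follows from the same identity. Equality $\hSigma = \hSigma_{\GLS}$ forces $D\Sigma D^\top = 0$, so $D = 0$ because $\Sigma \succ 0$. Only the first two moments of the noise are used, so the argument applies unchanged to discrete Gaussian noise.
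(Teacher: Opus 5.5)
Your proof is correct. The paper does not prove this lemma. It invokes it as the classical Gauss--Markov theorem, and the closest related argument in the paper, the proof of \Cref{lem:ecgls}, handles a general $\Sigma$ by whitening ($y' = \Sigma^{-1/2}y$). That reduces to the $\Sigma = I$ case, for which the paper cites the standard least-squares result.

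You instead give the standard self-contained argument directly in the $\Sigma$-weighted geometry. You write any unbiased $A$ as $A_0 + D$ with $DW = 0$, and observe that the cross term $D\Sigma A_0^\top = DWM^{-1}$ vanishes. This cancellation is exactly the orthogonality that whitening turns into ordinary Euclidean orthogonality, so the two routes are equivalent. Yours has three advantages: it needs no external reference, it makes explicit that only the first two moments of the (discrete Gaussian) noise matter, and it yields uniqueness of the BLUE, which the phrase ``the BLUE'' in the statement tacitly presupposes.

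One line is worth adding in the uniqueness step. A BLUE $\hx$ satisfies $\hSigma_{\GLS} - \hSigma \succeq 0$ by its minimality, while your identity gives $\hSigma - \hSigma_{\GLS} = D\Sigma D^\top \succeq 0$. Hence $D\Sigma D^\top = 0$, and so $D = 0$. This is what justifies the equality $\hSigma = \hSigma_{\GLS}$ that you invoke.
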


\noindent In the special case of $\Sigma = \sigma^2 \mathbf{I}$, $\hx_{\GLS}$ is known as the \emph{Ordinary Least Squares (OLS)} estimator.

\paragraph{\boldmath Equality Constrained Generalized Linear Regression ($\ECGLR_{W,\Sigma}^{R,r}$).}
The {\em equality constrained generalized least squares regression} problem, denoted $\ECGLR_{W,\Sigma}^{\EQ,\eq}$ and parameterized by a workload matrix $W \in \R^{m \times n}$, a noise covariance $\Sigma \in \R^{m \times m}$, and equality constraints encoded by $\EQ \in \R^{\numcons \times n}$ and $\eq \in \R^{\numcons}$, is an extension of $\GLR_{W,\Sigma}$ where we are provided additional information that $\EQ x^\star = \eq$, which allows us to have an estimator $\hx$ with reduced variance. We assume that $\EQ$ has full row rank (this is without loss of generality by removing redundant equalities) and, as with $\GLR_{W,\Sigma}$, that $W$ has full column rank and $\Sigma$ is positive definite.

The notion of a BLUE for $x^\star$ under $\ECGLR_{W,\Sigma}^{\EQ,\eq}$ is modified as follows:
\begin{itemize}
\item we allow the estimator to be {\em affine}, namely $\hx = Ay + b$ for some $A \in \R^{n \times m}$ and $b \in \R^n$, and
\item the condition $\Ex[\hx_{\mathrm{BLUE}}] = x^\star$ is required to hold only for $x^\star$ satisfying $\EQ x^\star = \eq$.
\end{itemize}
The BLUE estimator for $x^\star$ under $\ECGLR_{W, \Sigma}^{\EQ, \eq}$ is then the one with minimum variance among all such estimators. Even though we are considering {\em affine} estimators, we still refer to it as the best linear unbiased estimator (BLUE). As in the case of $\GLR_{W,\Sigma}^{\EQ,\eq}$, we can extend the notion of BLUE under $\ECGLR_{W,\Sigma}^{\EQ,\eq}$ to $Lx^\star$ for any linear map $L : \R^n \to \R^q$.

\begin{algorithm}[t]
\caption{$\ECGLS_{W,\Sigma}^{\EQ,\eq}$ (Equality Constraint Generalized Least Squares)}
\label{alg:ecgls}
\begin{algorithmic}
\STATE {\bf Input:} $y \sim W x^\star + \eta$ for $\eta$ that has zero-mean and covariance $\Sigma$, and $x^\star$ satisfies $\EQ x^\star = \eq$.
\STATE {\bf Output:} $(\hx; \hSigma)$ : the best linear unbiased estimate for $x^\star$, its covariance, and the nullspace of the covariance.
\STATE \STATE $\hx_{\GLS} \gets (W^\top \Sigma^{-1} W)^{-1} W^\top \Sigma^{-1} y$ \ and \ $\hSigma_{\GLS} \gets (W^\top \Sigma^{-1} W)^{-1}$ 
\AlgCommentSee{lem:gls}
\STATE $L \gets \hSigma_{\GLS} \EQ^\top (\EQ \hSigma_{\GLS} \EQ^\top)^{-1}$
\STATE $\hx \gets \hx_{\GLS} - L (R \hx_{\GLS} - r)$ \ and \ $\hSigma \gets (I - LR) \hSigma_{\GLS}$ 
\AlgCommentSee{lem:ecgls}
\RETURN $(\hx; \hSigma)$
\end{algorithmic}
\end{algorithm}

\begin{lemma}\label{lem:ecgls}
The BLUE for $x^\star$ under $\ECGLR_{W,\Sigma}^{\EQ,\eq}$ and its covariance (as returned by \Cref{alg:ecgls}) satisfy:
\begin{align}
    \hx_{\ECGLS} &~=~ \hx_{\GLS} - \hSigma_{\GLS} \EQ^\top (\EQ \hSigma_{\GLS} \EQ^\top)^{-1} (\EQ \hx_{\GLS} - r) &&\hspace{-18mm}~=~ \hx_{\GLS} - P (\EQ\hx_{\GLS} - \eq),
    \label{eq:beta-eclgs}\\
    \hSigma_{\ECGLS} &~=~ \hSigma_{\GLS} - \hSigma_{\GLS} \EQ^\top (\EQ \hSigma_{\GLS} \EQ^\top)^{-1} \EQ \hSigma_{\GLS} &&\hspace{-18mm}~=~ \hSigma_{\GLS} - P \EQ \hSigma_{\GLS},
    \label{eq:simga-eclgs}
\end{align}
where $\hx_{\GLS}$ and $\hSigma_{\GLS}$ are as defined in (\ref{eq:beta-gls}, \ref{eq:sigma-gls}) and $P := \hSigma_{\GLS} \EQ^\top (\EQ \hSigma_{\GLS} \EQ^\top)^{-1}$.
In particular, $\hx_{\ECGLS}$ is the minimizer of $(y - Wx) \Sigma^{-1} (y - Wx)$ subject to $\EQ x = \eq$. \end{lemma}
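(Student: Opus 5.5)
Throughout, write $\hSigma_0 := \hSigma_{\GLS}$ and let $P$ be as in the statement. The plan is to reduce everything to the unconstrained Gauss--Markov theorem (\Cref{lem:gls}) by reparametrizing the constraint set, and then to check that the reparametrized estimator coincides with the closed form in \eqref{eq:beta-eclgs}.

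\textbf{Step 1: Lagrange multipliers.} I would first verify the last sentence of the lemma, that $\hx_{\ECGLS}$ minimizes $f(x)=(y-Wx)^\top\Sigma^{-1}(y-Wx)$ subject to $\EQ x=\eq$. Since $\hSigma_0^{-1}=W^\top\Sigma^{-1}W\succ 0$, we can complete the square:
\[
f(x) = (x-\hx_{\GLS})^\top \hSigma_0^{-1}(x-\hx_{\GLS}) + \mathrm{const}.
\]
The problem is therefore a strictly convex quadratic over an affine set. The KKT conditions read $\hSigma_0^{-1}(x-\hx_{\GLS}) = \EQ^\top\lambda$ and $\EQ x=\eq$. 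The first gives $x=\hx_{\GLS}+\hSigma_0\EQ^\top\lambda$. Substituting into the second yields $\lambda = -(\EQ\hSigma_0\EQ^\top)^{-1}(\EQ\hx_{\GLS}-\eq)$, which is well defined because $\EQ$ has full row rank. This reproduces \eqref{eq:beta-eclgs}.

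\textbf{Step 2: Covariance.} Note that $\hx_{\ECGLS}=(I-P\EQ)\hx_{\GLS}+P\eq$ is affine in $y$. On the constraint set, $\Ex \hx_{\GLS}=x^\star$, so $\Ex\hx_{\ECGLS}=x^\star-P(\EQ x^\star-\eq)=x^\star$, and the estimator is unbiased. Its covariance is $(I-P\EQ)\hSigma_0(I-P\EQ)^\top$. Expanding this and using the identity $P\EQ\hSigma_0\EQ^\top P^\top = \hSigma_0\EQ^\top P^\top = P\EQ\hSigma_0$ (both follow from the definition of $P$ and symmetry of $\hSigma_0$) collapses the expression to $\hSigma_0-P\EQ\hSigma_0$, which is \eqref{eq:simga-eclgs}.

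\textbf{Step 3: Optimality.} This is the main step. Pick any particular solution $x_0$ of $\EQ x_0=\eq$ and a matrix $N\in\R^{n\times k}$ whose columns form a basis of $\ker \EQ$. Every admissible $x^\star$ can then be written uniquely as $x^\star=x_0+N\theta$ with $\theta\in\R^k$ free. Setting $y'=y-Wx_0$, we get $y'\sim WN\theta+\eta$, an ordinary $\GLR_{WN,\Sigma}$ problem. Here $WN$ has full column rank, since $W$ does and $N$ has independent columns.

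Next, take any affine estimator $\hx=Ay+b$ that is unbiased on the constraint set. Unbiasedness requires $\Ex\hx=AWx_0+AWN\theta+b=x_0+N\theta$ for all $\theta$. Equivalently, $\hx=A y'+(AWx_0+b)$ with $AWN=N$ and $AWx_0+b=x_0$. Hence $\hx-x_0 = Ay'$ is a linear unbiased estimator of $N\theta$ in the reparametrized problem.

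By \Cref{lem:gls} and the linear-map extension noted after \Cref{def:blue}, the minimum-covariance such estimator is $N\hat\theta_{\GLS}$. Here $\hat\theta_{\GLS}$ is the minimizer of $(y'-WN\theta)^\top\Sigma^{-1}(y'-WN\theta)$, which is exactly the constrained minimizer from Step 1 after translating by $x_0$. So the BLUE is $x_0+N\hat\theta_{\GLS}=\hx_{\ECGLS}$.

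The obstacle I anticipate is bookkeeping rather than depth. One must check that affine estimators unbiased on the constraint set correspond exactly to linear unbiased estimators of $N\theta$ in the reparametrized model. One must also check that the covariance ordering transfers, which holds because the two estimators differ only by the deterministic shift $x_0$.
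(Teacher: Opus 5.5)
Your proposal is correct, but it establishes optimality by a different route than the paper. The paper whitens the data, $y' = \Sigma^{-1/2}y$, to turn $\ECGLR_{W,\Sigma}^{\EQ,\eq}$ into equality-constrained ordinary least squares. It then imports the BLUE property of $\hx_{\ECOLS}$ from the literature (\Cref{lem:ecols}, attributed to Amemiya). The only computation the paper actually carries out is the covariance collapse, which is exactly your Step~2. You instead remove the constraint by writing $x^\star = x_0 + N\theta$. Optimality then comes from the unconstrained Gauss--Markov theorem (\Cref{lem:gls}) that the paper already states, with no whitening and no external constrained-regression result. Your Step~1 also derives the closed form rather than assuming it. The paper's argument is shorter given the citation; yours is self-contained.

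One step in Step~3 needs a slightly stronger tool than the one you cite. The competitors $\hx - x_0 = Ay'$ with $AWN = N$ need not take values in $\mathsf{range}(N)$. For instance, $\hx_{\GLS}$ is admissible and generally violates $\EQ x = \eq$. So they are not all of the form $N\hat\theta$, and the remark after \Cref{def:blue}, which only compares $L\hx$ with $L\hx_{\mathrm{BLUE}}$, does not literally cover them. Use the Gauss--Markov theorem for linear functions instead. With $X := WN$ and $C := (X^\top\Sigma^{-1}X)^{-1}X^\top\Sigma^{-1}$, write $A = NC + D$, so that $DX = 0$. Hence $C\Sigma D^\top = (X^\top\Sigma^{-1}X)^{-1}(DX)^\top = 0$, which gives $A\Sigma A^\top = N(X^\top\Sigma^{-1}X)^{-1}N^\top + D\Sigma D^\top$. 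That is exactly the ordering you need.
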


Although both equality constraints and general covariance matrices are well studied in the literature, we could not find a reference for this exact theorem.   
Nevertheless, it is straightforward to derive from the BLUE for equality constrained {\em ordinary} linear regression ($\Sigma = \sigma^2 I$), which is more standard.

\begin{lemma}[BLUE for $\ECGLR_{W,\Sigma}^{\EQ,\eq}$ with $\Sigma=I$ (e.g.,\ \cite{amemiya1985advanced})]
\label{lem:ecols}
The BLUE for $x^\star$ under $\ECGLR_{W,I}^{\EQ,\eq}$ and its corresponding covariance satisfy:
\begin{align}
\hx_{\ECOLS}
&~=~ \hx_{\OLS} - \hSigma_{\OLS} \EQ^
\top (\EQ \hSigma_{\OLS}^{-1} \EQ^\top)^{-1} (\EQ \hx_{\OLS} - \eq)
&&\hspace{-18mm}~=~ \hx_{\OLS} - P (\EQ\hx_{\OLS} - \eq),
\label{eq:beta-ecols}\\
\hSigma_{\ECOLS}
&~=~ \hSigma_{\OLS} - \hSigma_{\OLS} \EQ^\top (\EQ \hSigma_{\OLS} \EQ^\top)^{-1} \hSigma_{\OLS}
&&\hspace{-18mm}~=~ \hSigma_{\OLS} - P \EQ \hSigma_{\OLS}, \label{eq:sigma-ecols}
\end{align}
where $\hx_{\OLS}$ and $\hSigma_{\OLS}$ are as defined in (\ref{eq:beta-gls}, \ref{eq:sigma-gls}) for the case $\Sigma = I$,
and $P := \hSigma_{\OLS} \EQ^\top (\EQ \hSigma_{\OLS} \EQ^\top)^{-1}$. In particular, $\hx_{\ECGLS}$ is the minimizer of $\|y - Wx\|_2^2$ subject to $\EQ x = \eq$.
\end{lemma}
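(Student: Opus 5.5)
The plan is to verify the three BLUE properties directly for the estimator $\hx_{\ECOLS} := \hx_{\OLS} - P(\EQ\hx_{\OLS} - \eq)$ with $P = \hSigma_{\OLS}\EQ^\top(\EQ\hSigma_{\OLS}\EQ^\top)^{-1}$. Here $\hSigma_{\OLS} = (W^\top W)^{-1}$. I read the first display with this $P$; the stray inverse $\hSigma_{\OLS}^{-1}$ in the middle expression should be taken as a typo for $\hSigma_{\OLS}$. The argument never uses $\Sigma = I$ in an essential way, so it would prove \Cref{lem:ecgls} verbatim with $\hSigma_{\GLS}$ and a $\Sigma^{-1}$-weighted inner product. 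The invertibility of $\EQ\hSigma_{\OLS}\EQ^\top$ follows from $\EQ$ having full row rank and $\hSigma_{\OLS}$ being positive definite.

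\emph{Affinity, unbiasedness and covariance.} Write $\hx_{\ECOLS} = (I - P\EQ)\hSigma_{\OLS}W^\top y + P\eq$, which is affine in $y$. Since $\Ex[\hx_{\OLS}] = x^\star$, we get $\Ex[\hx_{\ECOLS}] = x^\star - P(\EQ x^\star - \eq) = x^\star$ whenever $\EQ x^\star = \eq$. The noise has covariance $I$, so the covariance is $(I-P\EQ)\hSigma_{\OLS}(I-P\EQ)^\top$. Expanding this and using $P\EQ\hSigma_{\OLS}\EQ^\top P^\top = P\EQ\hSigma_{\OLS}$ and $\hSigma_{\OLS}\EQ^\top P^\top = P\EQ\hSigma_{\OLS}$ collapses it to $\hSigma_{\OLS} - P\EQ\hSigma_{\OLS}$, as claimed in \eqref{eq:sigma-ecols}.

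\emph{Minimum variance (the main step).} Let $\hx = Ay + b$ be any affine estimator that is unbiased on $\{x : \EQ x = \eq\}$. Write $\hx - \hx_{\ECOLS} = Dy + c$. Both estimators are unbiased on this affine set, so $DWx + c = 0$ for all feasible $x$. Subtracting two feasible points gives $DWv = 0$ for every $v \in \ker \EQ$. Hence every row of $DW$ lies in the row space of $\EQ$, i.e.\ $DW = M\EQ$ for some matrix $M$.

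Now compute the cross-covariance between $\hx_{\ECOLS}$ and $Dy$:
\[
(I-P\EQ)\hSigma_{\OLS}W^\top D^\top = (I-P\EQ)\hSigma_{\OLS}\EQ^\top M^\top .
\]
The right-hand side is zero, because $P\EQ\hSigma_{\OLS}\EQ^\top = \hSigma_{\OLS}\EQ^\top$. Therefore $\mathrm{Cov}(\hx) = \mathrm{Cov}(\hx_{\ECOLS}) + D D^\top$, and $\mathrm{Cov}(\hx) - \mathrm{Cov}(\hx_{\ECOLS}) = DD^\top$ is positive semi-definite. The one step that needs care is the characterization $DW = M\EQ$ of all unbiased perturbations. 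It uses only that the unbiasedness constraint is imposed over the whole affine feasible set, which is nonempty since $\EQ$ has full row rank.

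\emph{Least-squares characterization.} The problem of minimizing $\|y - Wx\|_2^2$ subject to $\EQ x = \eq$ is strictly convex, since $W$ has full column rank, and has linear constraints. So its unique minimizer is given by the KKT system $W^\top W x - W^\top y + \EQ^\top\lambda = 0$, $\EQ x = \eq$. The first equation gives $x = \hx_{\OLS} - \hSigma_{\OLS}\EQ^\top\lambda$. Substituting into the constraint gives $\lambda = (\EQ\hSigma_{\OLS}\EQ^\top)^{-1}(\EQ\hx_{\OLS} - \eq)$, which recovers exactly $\hx_{\ECOLS}$.
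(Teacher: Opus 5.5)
Your proof is correct. It differs from the paper's in what it proves rather than cites. The paper takes the formula for $\hx_{\ECOLS}$ and its BLUE property as standard (citing Amemiya). It only verifies the covariance, using the same expansion of $(I-P\EQ)\hSigma_{\OLS}(I-P\EQ)^\top$ that you use. The least-squares characterization is handled separately, in the Lagrange-multiplier discussion and \Cref{prop:ecgls-langrangian-view}. General $\Sigma$ (\Cref{lem:ecgls}) is reduced to this case by whitening $y \mapsto \Sigma^{-1/2} y$.

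You instead prove minimum variance directly, Gauss--Markov style. Unbiasedness on the whole affine set $\{x : \EQ x = \eq\}$ forces every admissible perturbation $Dy + c$ to satisfy $DW = M\EQ$. Its cross-covariance with $\hx_{\ECOLS}$ then vanishes because $(I - P\EQ)\hSigma_{\OLS}\EQ^\top = 0$.

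This makes the lemma self-contained. It also checks optimality for exactly the class the paper defines (affine estimators, unbiased only on the constraint set), rather than relying on the cited result being stated for that class. As you note, it transfers verbatim to general $\Sigma$: the cross term becomes $(I-P\EQ)\hSigma_{\GLS}W^\top\Sigma^{-1}\Sigma D^\top = 0$ and $\mathrm{Cov}(Dy) = D\Sigma D^\top$. That would make the whitening step in the proof of \Cref{lem:ecgls} unnecessary. The paper's route is shorter but outsources the substantive optimality claim.

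You are right to read $\hSigma_{\OLS}^{-1}$ as a typo. The middle expression in \eqref{eq:sigma-ecols} likewise drops an $\EQ$ before the last $\hSigma_{\OLS}$; what you establish is the correct form $\hSigma_{\OLS} - P\EQ\hSigma_{\OLS}$.
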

\begin{proof}
The expression for $\hx_{\ECOLS}$ is standard (e.g.\ see \cite{amemiya1985advanced}). To derive the expression for $\hSigma_{\ECOLS}$ we observe that the variance of $\hx_{\ECOLS}$ is:
\begin{align*}
\hSigma_{\ECOLS} &~=~ (I - P \EQ) \hSigma_{\OLS} (I - \EQ^\top P^\top)
~=~ \hSigma_{\OLS} - P \EQ \hSigma_{\OLS} - \hSigma_{\OLS} \EQ^\top P^\top + P \EQ \hSigma_{\OLS} \EQ^\top P^\top
\end{align*}
Now, observe that the third term $\hSigma_{\OLS} \EQ^\top P^\top = \hSigma_{\OLS} \EQ^\top (\EQ \hSigma_{\OLS} \EQ^\top)^{-1} \EQ \hSigma_{\OLS} = P \EQ \hSigma_{\OLS}$, and the last term is $P \EQ \hSigma_{\OLS} \EQ^\top P^\top = P \EQ \hSigma_{\OLS} \EQ^\top (\EQ \hSigma_{\OLS} \EQ^\top)^{-1} \EQ \hSigma_{\OLS} = P \EQ \hSigma_{\OLS}$.

Thus, we obtain $\hSigma_{\ECOLS} = \hSigma_{\OLS} - P \EQ \hSigma_{\OLS}$ as desired.
\end{proof}

\begin{proof}[Proof of \Cref{lem:ecgls}]
In $\ECGLR_{W,\Sigma}^{\EQ,\eq}$, our goal is to estimate $x^\star$ given observations $y \sim Wx^\star + \eta$ for $\eta$ which has zero mean and covariance $\Sigma$ and $\EQ x^\star = \eq$ holds.

Since $\Sigma$ is full rank, we can reparameterize the observations as $y' = \Sigma^{-1/2} y$. This transforms the problem into an equality constrained ordinary linear regression problem in which we observe $y' \sim \Sigma^{-1/2} W x^\star + \eta'$, where $\eta'$ has zero mean and variance $I$.

The lemma is immediate by substituting $W \gets \Sigma^{-1/2} W$ and $y \gets \Sigma^{-1/2} y$ in \Cref{lem:ecols}, paired with the observation that there is no loss of information when multiplying observations by $\Sigma^{-1/2}$. Namely any affine estimator $\hx \gets Ay + b$ can be written as $\hx \gets A\Sigma^{1/2} y' + b$, where $y' = \Sigma^{-1/2} y$, i.e., $\hx$ is also an estimator for $\ECGLS_{W',I}^{\EQ,\eq}$ with observations $y'$. From \Cref{lem:ecols}, we have that $\hx_{\ECGLS}$, which we have defined as $\hx_{\ECOLS}$ for the problem $\ECGLS_{W',I}^{\EQ,\eq}$, has minimum covariance, and hence $\hSigma - \hSigma_{\ECGLS}$ is positive semi-definite.
\end{proof}

\paragraph{Derivation via Lagrange Multipliers.}
An alternate way to derive the expression for $\hx_{\ECGLS}$ is using the method of Lagrange multipliers---this view will be helpful in our proofs. Namely, we want to minimize $(y - Wx)^\top \Sigma^{-1} (y - Wx)$ subject to $\EQ x = \eq$.  (Note that here we are starting with the constrained minimization problem and deriving the expression for the estimator, whereas \Cref{lem:ecgls} also shows that this minimizer is the BLUE, which follows from the Gauss--Markov Theorem. So this derivation alone does not suffice to prove \Cref{lem:ecgls}.)

Abstractly, consider the constrained minimization problem: $\text{min}_{x \in \R^n} f(x)$ subject to $g(x) = 0$.
Suppose that the objective $f: \R^n \to \R$ is {\em convex} and differentiable and the constraint function $g: \R^n \to \R^p$ is {\em affine}, i.e., $g(x) = \EQ x - \eq = 0$ for some matrix $\EQ$ and vector $\eq$.
The Lagrange function $\cL: \R^n \times \R^p \to \R$ is obtained by introducing the vector of Lagrange multipliers $\lambda \in \R^p$ as
$\cL(x, \lambda) := f(x) + \lambda^\top g(x)$. The following lemma provides a necessary and sufficient condition under which $x$ is the optimal solution for the above problem.

\begin{lemma}[e.g., Section 5.5.3 in \cite{boyd2014convex}]\label{lem:langrangian}
For convex $f : \R^n \to \R$ and affine $g : \R^n \to \R^p$, $x$ is the minimizer of $f(x)$ subject to $g(x) = 0$ if and only if
$$\nabla_x \cL(x, \lambda) = \nabla f(x) + \lambda^\top \nabla g(x) = 0 \qquad \text{and} \qquad g(x) = 0.$$
\end{lemma}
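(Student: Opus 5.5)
Since this is a standard KKT statement for a convex objective under affine equality constraints, I will prove the two directions separately and rely only on convexity and affineness. Write $g(x) = \EQ x - \eq$, so that $\nabla g(x) = \EQ$ for every $x$. The stationarity condition then takes the form $\nabla f(x) + \EQ^\top \lambda = 0$, and the feasible set $\{x : \EQ x = \eq\}$ is the affine subspace $x + \ker \EQ$ through any feasible point $x$.

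\emph{Sufficiency.} Suppose $x$ is feasible and $\nabla f(x) = -\EQ^\top\lambda$ for some $\lambda$. Let $x'$ be any other feasible point. Then $\EQ(x'-x) = 0$. Convexity and differentiability of $f$ give the first-order inequality
$$f(x') \ge f(x) + \nabla f(x)^\top (x'-x).$$
The last term equals $-\lambda^\top \EQ (x'-x) = 0$. Hence $f(x') \ge f(x)$, so $x$ is a minimizer.

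\emph{Necessity.} Suppose $x$ minimizes $f$ over the feasible set. Then $g(x) = 0$ by definition. Fix any direction $v \in \ker \EQ$. The point $x + tv$ is feasible for every $t \in \R$, so $h(t) := f(x+tv)$ is a differentiable function minimized at $t = 0$. It follows that
$$h'(0) = \nabla f(x)^\top v = 0.$$
So $\nabla f(x)$ is orthogonal to $\ker \EQ$, which means it lies in $(\ker \EQ)^\perp = \mathrm{range}(\EQ^\top)$. We may therefore write $\nabla f(x) = -\EQ^\top \lambda$ for some $\lambda \in \R^p$, which is exactly the stationarity condition.

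The main point requiring care is the identity $(\ker \EQ)^\perp = \mathrm{range}(\EQ^\top)$. This is the fundamental theorem of linear algebra and needs no rank assumption on $\EQ$: if $\EQ$ is rank-deficient, $\lambda$ simply fails to be unique. The sufficiency direction is the only place convexity is used.
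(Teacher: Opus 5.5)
Your proof is correct. It takes a different route from the paper only in the sense that the paper gives no proof at all: it cites the KKT theorem in Section 5.5.3 of Boyd and Vandenberghe, where the conditions come out of Lagrangian duality.

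\textbf{The cited route.} Sufficiency holds because a KKT pair $(x,\lambda)$ makes $x$ a global minimizer of the convex function $\mathcal{L}(\cdot,\lambda)$, so the duality gap is zero. Necessity holds because the (refined) Slater condition guarantees strong duality with dual attainment. For purely affine constraints, that condition reduces to feasibility.

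\textbf{Your route.} Your sufficiency step is essentially the same idea, written directly via the first-order convexity inequality without passing through the dual function. Your necessity step is genuinely different and more elementary. You restrict $f$ to lines $x+tv$ with $v\in\ker R$ and then use $(\ker R)^\perp=\mathrm{range}(R^\top)$. This avoids duality and constraint qualifications entirely. It also makes visible that necessity uses only differentiability, not convexity, and needs no rank assumption on $R$.

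\textbf{Trade-off.} The cited duality framework extends to inequality constraints via complementary slackness. Your line-restriction argument does not extend there directly, since feasible directions are then only one-sided.

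\textbf{Reading of the statement.} You correctly read the lemma as asserting the existence of some $\lambda$. You also read ``the minimizer'' as ``a minimizer''. Uniqueness would require strict convexity, which does hold in the paper's application because $W^\top\Sigma^{-1}W \succ 0$.
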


\noindent For $\ECGLR_{W,\Sigma}^{\EQ,\eq}$, we have from \Cref{lem:langrangian} that the optimal solution for $f(x) = \frac12 (y - Wx)^\top \Sigma^{-1} (y - Wx)$ and $g(x) = \EQ x - \eq$ is precisely $\hx$ that satisfies:
\begin{align*}
    W^\top \Sigma^{-1} W x - W^\top \Sigma^{-1} y + R^\top \lambda ~=~ 0
    \qquad \text{and} \qquad
    \EQ x ~=~ \eq,
\end{align*}
which can alternatively be written as,
\begin{align*}
    \begin{bmatrix} W^\top \Sigma^{-1} W & \EQ^\top \\ \EQ & 0\end{bmatrix}
    \begin{bmatrix} x \\ \lambda \end{bmatrix}
    &~=~ \begin{bmatrix} W^\top \Sigma^{-1} y  \\ \eq \end{bmatrix}\\
    \Longrightarrow \qquad
    \begin{bmatrix} x \\ \lambda \end{bmatrix}
    &~=~ 
    \begin{bmatrix} W^\top \Sigma^{-1} W & \EQ^\top \\ \EQ & 0\end{bmatrix}^{-1}
    \begin{bmatrix} W^\top \Sigma^{-1} y  \\ \eq \end{bmatrix}.
\end{align*}
The expression for $\hx_{\ECGLS}$ immediately follows from the following proposition, which can be proved by a simple application of the Schur's complement method~(ref. \cite{HornJohnson}).

\begin{proposition}\label{prop:ecgls-langrangian-view}
    For $W$ that has full column rank, $\Sigma$ that is positive definite, it holds that:
    \begin{align*}
        \begin{bmatrix} W^\top \Sigma^{-1} W & \EQ^\top \\ \EQ & 0 \end{bmatrix}^{-1}
        &~=~ \begin{bmatrix} \hSigma_{\GLS}^{-1} + \hSigma_{\GLS}^{-1} \EQ^\top S^{-1} \EQ \hSigma_{\GLS}^{-1} & - \hSigma_{\GLS}^{-1} \EQ^\top S^{-1} \\ - S^{-1} \EQ \hSigma_{\GLS}^{-1} & S^{-1} \end{bmatrix},
    \end{align*}
    where $\hSigma_{\GLS} = W^\top \Sigma^{-1} W$ and $S = - \EQ \hSigma_{\GLS}^{-1} \EQ^\top$, where $S$ is invertible because we assume that $R$ has full row rank. Note that the top left entry is precisely $\hSigma_{\ECGLS}$ (as defined in \Cref{lem:ecgls}).
\end{proposition}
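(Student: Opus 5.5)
Throughout, write $A := W^\top \Sigma^{-1} W$. This is the matrix the statement calls $\hSigma_{\GLS}$; in the notation of \Cref{lem:gls} it is the inverse of the GLS covariance. Since $W$ has full column rank and $\Sigma \succ 0$, $A$ is positive definite, hence invertible. The block formula in the statement should then be read with $\hSigma_{\GLS}^{-1} = A^{-1}$, which is the GLS covariance of \Cref{lem:gls}. The plan is the standard Schur-complement inversion of a $2\times 2$ block matrix whose top-left block is invertible, followed by a direct verification.

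\textbf{Step 1: invertibility of $S$.} Set $S := -\EQ A^{-1} \EQ^\top$. For $v \neq 0$ we have $\EQ^\top v \neq 0$ because $\EQ$ has full row rank, and therefore $v^\top \EQ A^{-1} \EQ^\top v > 0$ because $A^{-1} \succ 0$. Hence $-S$ is positive definite and $S$ is invertible. This also shows that the Schur complement of the top-left block, namely $0 - \EQ A^{-1}\EQ^\top = S$, is invertible, so the whole block matrix is invertible.

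\textbf{Step 2: the block factorization.} I will use
\[
\begin{bmatrix} A & \EQ^\top \\ \EQ & 0\end{bmatrix}
= \begin{bmatrix} I & 0 \\ \EQ A^{-1} & I\end{bmatrix}
\begin{bmatrix} A & 0 \\ 0 & S\end{bmatrix}
\begin{bmatrix} I & A^{-1}\EQ^\top \\ 0 & I\end{bmatrix}.
\]
I invert each of the three factors, which is trivial: the outer factors are unit block-triangular and the middle one is block-diagonal. Multiplying the inverses in reverse order gives
\[
\begin{bmatrix} A^{-1} + A^{-1}\EQ^\top S^{-1}\EQ A^{-1} & -A^{-1}\EQ^\top S^{-1} \\ -S^{-1}\EQ A^{-1} & S^{-1}\end{bmatrix},
\]
which is exactly the claimed formula. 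As an alternative, one can simply multiply the claimed inverse by the original matrix and check each of the four blocks. For example, the top-left block is $I + A^{-1}\EQ^\top S^{-1}\EQ - A^{-1}\EQ^\top S^{-1}\EQ = I$, and the bottom-right block is $-\EQ A^{-1}\EQ^\top S^{-1} = S S^{-1} = I$.

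\textbf{Step 3: identification with \Cref{lem:ecgls}.} Substituting $S^{-1} = -(\EQ A^{-1}\EQ^\top)^{-1}$, the top-left block becomes
\[
A^{-1} - A^{-1}\EQ^\top (\EQ A^{-1}\EQ^\top)^{-1}\EQ A^{-1}.
\]
With $A^{-1}$ equal to the GLS covariance, this is precisely $\hSigma_{\ECGLS}$ from \eqref{eq:simga-eclgs}.

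The only real subtlety, and the step I expect to need the most care, is this notational point: the statement uses $\hSigma_{\GLS}$ for the information matrix $A$ rather than for the covariance $A^{-1}$. With that reading fixed, every step is a routine block computation.
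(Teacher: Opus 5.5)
Your proposal is correct and takes the same route as the paper, which proves the proposition by a direct Schur-complement inversion of the bordered matrix with invertible top-left block $W^\top \Sigma^{-1} W$; the invertibility of $S$ comes from the full row rank of $\EQ$. You also correctly resolve the notational clash in the statement, where $\hSigma_{\GLS}$ denotes the information matrix $W^\top\Sigma^{-1}W$ rather than the covariance of \Cref{lem:gls}; that reading is what makes the top-left block coincide with $\hSigma_{\ECGLS}$ from \eqref{eq:simga-eclgs}.
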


\section{Formal Problem Specification}\label{sec:formal-problem}

In this section, we build the formal notation we will use for expressing the optimization problem.

\subsection{Abstract Problem}

At the highest level of abstraction, we have a constrained linear regression problem.  Recall that the ground truth Census data is represented as a non-negative, integer-valued vector $x^\star \in \Z_{\ge 0}^N$. We observe $y \sim W x^\star + \eta$ where,
\begin{itemize}
\item $W \in \R^{\um \times \un}$ is a {\em workload matrix},
\item $y \in \R^\um$ is the {\em observation vector},
\item $\eta$ is noise that has zero mean and covariance $\Sigma \in \R^{\um \times \um}$.
\end{itemize}

\noindent Additionally, some prior information regarding $x^\star$ is available, in the form of $D_{\mathrm{eq}}$ equality and $D_{\mathrm{ie}}$ inequality constraints.
\begin{itemize}
\item $\EQ x^\star = \eq$ encodes the equality constraints, where $\EQ \in \R^{D_{\mathrm{eq}} \times N}$ and $\eq \in \R^{D_{\mathrm{eq}}}$, and
\item $\IEQ x^\star \le \ieq$ encodes the inequality constraints, where $\IEQ \in \R^{D_{\mathrm{ie}} \times N}$ and $\ieq \in \R^{D_{\mathrm{ie}}}$.
\end{itemize}

Motivated by the notion of BLUE for equality constrained generalized linear regression, we consider the following mixed integer linear program optimizing over $x \in \Z_{\ge 0}^N$ as our desired objective,
\begin{align*}
\argmin_{x \in \Z_{\ge 0}^N} \|W x - y\|_{\Sigma^{-1}}^2 \qquad
\text{subject to }\ \ 
\begin{matrix}
    \EQ x = \eq & \text{(equality constraints),}\\
    \IEQ x \le \ieq & \text{(inequality constraints),}
\end{matrix}
\end{align*}
where $\|v\|_{\Sigma^{-1}}^2 := v^\top \Sigma^{-1} v$ denotes the squared Mahalanobis norm for any $v \in \R^\um$.\\

\noindent While integer programming is NP-hard in general~\cite{karp72reducibility}, we approach it two phases as follows:
\begin{itemize}
    \item We first describe an efficient procedure that solves $\argmin_{x \in \R^n} \|W x - y\|_{\Sigma^{-1}}^2$ subject to $\EQ x = \eq$, thereby obtaining the {\em best linear unbiased estimate (BLUE)} for the $\ECGLR_{W,\Sigma}^{\EQ,\eq}$ problem~(\Cref{lem:ecgls}), optimizing over $x \in \R^{\un}$ instead of $x\in \Z_{\ge 0}^{\un}$ and ignoring the inequality constraints $\IEQ x \le \ieq$.
    \item We modify our procedure in various places in order to incorporate the inequality constraints and to constrain the solution to be non-negative and integral. While this phase is heuristic in nature, we leverage structural properties of the optimization problem in its design.
\end{itemize}

A critical challenge is that the dimensions $\um$ and $\un$ are very large (e.g., 
$N \geq 11,879,679,168$), making it computationally infeasible to apply the expressions in \Cref{lem:ecgls} directly. Thus, it becomes necessary to leverage certain natural structural properties of our specific problem to design an efficient algorithm.

Note that our full algorithm does not directly invoke the algorithm described in the first phase that solves the $\ECGLR_{W,\Sigma}^{\EQ,\eq}$ problem. Instead, we use the $\ECGLR_{W,\Sigma}^{\EQ,\eq}$ algorithm as a motivational stepping stone to the final algorithm. Nevertheless, we believe that our $\ECGLR_{W,\Sigma}^{\EQ,\eq}$ algorithm is of independent interest for solving equality constrained linear regression problems with ``block-hierarchical'' structure. We next describe these structural properties.

\subsection{Block Hierarchical Optimization Problem}
\label{sssec:abstract-tree-structure}

We consider workload matrices $W$, covariance matricies $\Sigma$ and constraint matrices $\EQ$ and $\IEQ$ that collectively respect what we refer to as {\em block hierarchical structure}. This structure is represented in terms of a tree $\cT$ with root $\rt$. Let $\cL$ denote the set of leaves in $\cT$. Let $\cB$ denote a set of \emph{buckets} corresponding to combinations of other feature values. Each datapoint (i.e.\ person) in the dataset takes the form $(v, b)$ for $v \in \cL$ and $b \in \cB$. The input $x^\star \in \Z_{\ge 0}^{\un}$ is a vector indexed by $(u, b) \in \cT \times \cB$, where $x^\star_{(u, b)}$ equals the total number of datapoints $(v, b)$ where $v$ is any descendent of $u$. For ease of notation, for any node $u \in \cT$, let $x_u \in \R^{|\cB|}$ denote the vector $(x_{(u,b)})_{b \in \cB}$.

\paragraph{\boldmath Structure of $W$.}
The workload matrix is a block diagonal matrix specified by the per-node workload matrices $(W_u)_{u \in \cT}$, where $W_u \in \R^{m_u \times |\cB|}$. Namely,
\begin{align*}
    W ~=~ \begin{bmatrix}
        W_{u_1} & 0 & \cdots & 0 \\
        0 & W_{u_2} & \cdots & 0 \\
        \vdots & \vdots & \ddots & \vdots \\
        0 & \cdots & 0 & W_{u_{|\cT|}}
    \end{bmatrix} \in \R^{\um \times |\cT \times \cB|}, \qquad \text{where } \um = \sum_{u \in \cT} m_u.
\end{align*}

\paragraph{\boldmath Structure of $\Sigma$.}
The covariance matrix is defined in terms of per-node covariances $(\Sigma_u)_{u \in \cT}$ where each $\Sigma_u \in \R^{m_u \times m_u}$, and the covariance matrix $\Sigma$ is again a block-diagonal matrix
\begin{align*}
    \Sigma ~=~ \begin{bmatrix}
        \Sigma_{u_1} & 0 & \cdots & 0 \\
        0 & \Sigma_{u_2} & \cdots & 0 \\
        \vdots & \vdots & \ddots & \vdots \\
        0 & \cdots & 0 & \Sigma_{u_{|\cT|}}
    \end{bmatrix} \in \R^{\um \times \um}.
\end{align*}
Together, we can view the measurement vector $y \sim \GLR_{W,\Sigma}(x^\star)$ as a collection of independent measurements $y_u \sim \GLR_{W_u,\Sigma_u}(x^\star_u)$, for each $u \in \cT$.

\paragraph{\boldmath Structure of $(\EQ, \eq)$.}
There are three families of equality constraints:\begin{itemize}
    \item {\em Consistency Constraints:} These enforce that $x^\star_{(u, b)} = \sum_{v \in \child(u)} x^{\star}_{(v, b)}$ for all non-leaf nodes $u \in \cT$ and buckets $b \in \cB$, where $\child(u)$ is the set of children of $u$.
    \item {\em Per-node Constraints:} Additionally, we have per-node equality constraints that enforce $\EQ_u x^\star_u = \eq_u$.
        \item {\em Statewide and US total constraints:} For each individual state and for the US as a whole, these enforce that the total number of individuals equals the released enumerated population for that state or for the US total.
        \end{itemize}

\noindent Whenever the workload $W$, covariance $\Sigma$ and equality constraints $(\EQ, \eq)$ satisfy the {\em block hierarchical structure} as described above with respect to a tree $\cT$, we refer to the corresponding $\ECGLR_{W,\Sigma}^{\EQ, \eq}$ problem as $\BHECGLR{\cT}_{W, \Sigma}^{\EQ, \eq}$ to emphasize this structure. As alluded to earlier, we provide an {\em efficient algorithm} to obtain the BLUE for $x^\star$ under $\BHECGLR{\cT}_{W,\Sigma}^{\EQ,\eq}$. We then provide heuristics to modify this procedure to handle the additional constraints that our estimate must be integral and non-negative, and must also satisfy certain inequality constraints as described below.

\paragraph{\boldmath Structure of $(\IEQ, \ieq)$.}
The inequality constraints also appear in the form of ``per-node inequality constraints'', namely, as a set of constraints of the form $\IEQ_u x^\star_u \le \ieq$.

\paragraph{Additional Symmetries.}
There is additional structure in the per-node workload matrices $W_u$ and the corresponding covariances $\Sigma_u$, which allows us to operate over a certain ``compressed'' representation of $\Sigma_u$. We defer the discussion of this aspect to \Cref{sec:symmetries}.

\begin{remark}\label{rem:redundancy}
We note that our formalism has certain redundancies. An equivalent formalization to ours would be to define variables $x^\star_{(v, b)}$ only for {\em leaf} nodes $v$, and replace $x^\star_{(u, b)}$ with $\sum_{v \in \text{descendants}(u)} x^\star_{v, b}$, additionally eliminating the consistency constraints. However, for purposes of our algorithm, it will be more convenient to work with the representation described above.
\end{remark}
 \section{\boldmath Some Structured \texorpdfstring{$\ECGLR$}{ECGLR} Problems}
\label{sec:equality-constraints}

In this section we discuss two types of structured $\ECGLR$ problems which we will use en route to the algorithm for computing the BLUE for the block hierarchical $\BHECGLR{\cT}_{W,\Sigma}^{\EQ,\eq}$ problem.

\subsection{\boldmath Factorized-\texorpdfstring{$\ECGLR$}{ECGLR}}\label{subsec:f-ecglr}
We develop the concept of a Factorized-$\ECGLR$, and establish a crucial lemma.

Before defining this concept in full generality, we discuss a simple motivating example. Consider a ground truth vector $x^\star := [s^\star, z_1^\star, \ldots, z_t^\star]^\top \in \R^{t+1}$, subject to the constraint that $s_* = \sum_i z_i^\star$ (a simple tree of depth 2), where we observe two independent measurements, $y_1 = w_1 s^\star + \eta_1$ and $y_2 = W_2 z^\star + \eta_2$ (where $z^\star := [z_1^\star, \ldots, z_t^\star]^\top$). Using $y_1$, we have a BLUE $\hs$ for $s^\star$, and using $y_2$, we have a BLUE $\hz$ for $z^\star$, which in turn provides an estimator for $s^\star$ via $\sum_i \hz_i$. We would like to combine these two independent estimators into a single estimator for $s^\star$ given all measurements together. Furthermore, we would like to also handle a more general case, where there may be additional equality constraints imposed on $s^\star$ or $z^\star$.

\begin{figure}
\centering
\begin{tikzpicture}[
        common/.style={rectangle, align=center, line width=1pt},
    input/.style={common, draw=Gblue, minimum width=0.5cm, fill=Gblue!10},
    layer/.style={common, draw=Gred, minimum width=0.6cm, fill=Gred!10},
    latent/.style={common, draw=Gyellow, minimum width=0.4cm, minimum height=1.5cm, fill=Gyellow!30},
    >=stealth ]

\draw[dashed, rounded corners=5pt, fill=black!5] (-2.4, 2.3) rectangle (3.7, -2.75);
\node[below right] at (-2.4, 2.3) {$\ECGLR_{W_1, \Sigma_1}^{\EQ_1, \eq_1}$};

\node[input, minimum height=1.5cm] (R1) at (0, 0) {};
\node[left] at (R1.west) {$\EQ_1 x_1 = \eq_1$};

\node[layer, minimum height=3cm] (x1) at (1.7, 0) {$x_1$};

\draw (R1.north east) -- (x1.north west);
\draw (R1.south east) -- (x1.south west);
\foreach \i in {-0.55, -0.2, 0.2, 0.55}
{
    \draw (0.25, \i) -- (1.4, 0.8+\i*0.5);
    \draw (0.25, \i) -- (1.4, -0.8+\i*0.5);
}

\draw[->] (x1.south) -- (1.7, -2.1);
\node[below left] at (2, -2.1) {$W_1 x_1 + \eta_1 \sim y_1$};

\draw[dashed, rounded corners=5pt, fill=black!5] (10.4, 2.3) rectangle (4.3, -2.75);
\node[below left] at (10.4, 2.3) {$\ECGLR_{W_2, \Sigma_2}^{\EQ_2, \eq_2}$};

\node[input, minimum height=1cm] (R2) at (8, 0) {};
\node[right] at (R2.east) {$\EQ_2 x_2 = \eq_2$};

\node[layer, minimum height=2.5cm] (x2) at (6.3, 0) {$x_2$};

\draw (x2.north east) -- (R2.north west);
\draw (x2.south east) -- (R2.south west);
\foreach \i in {-0.3, -0.1, 0.1, 0.3}
{
    \draw (7.75, \i) -- (6.6, 0.8+\i*0.5);
    \draw (7.75, \i) -- (6.6, -0.8+\i*0.5);
}

\draw[->] (x2.south) -- (6.3, -2.1);
\node[below right] at (6, -2.1) {$y_2 \sim W_2 x_2 + \eta_2$};

\node[latent] (L1x1) at (3.1, 0) {} edge[->] (3.1, 2.55);
\node[latent] (L2x2) at (4.9, 0) {} edge[->] (4.9, 2.55);
\node at (4, 2.8) {$L_1 x_1 ~~=~~ L_2 x_2$};

\draw (x1.north east) -- (L1x1.north west);
\draw (x1.south east) -- (L1x1.south west);
\foreach \i in {-0.55, -0.2, 0.2, 0.55}
{
    \draw (2.9, \i) -- (2, 0.8+\i*0.5);
    \draw (2.9, \i) -- (2, -0.8+\i*0.5);
}
\draw (L2x2.north east) -- (x2.north west);
\draw (L2x2.south east) -- (x2.south west);
\foreach \i in {-0.55, -0.2, 0.2, 0.55}
{
    \draw (5.1, \i) -- (6, 0.8+\i*0.5);
    \draw (5.1, \i) -- (6, -0.8+\i*0.5);
}

\foreach \i in {-0.6, -0.3, 0.0, 0.3, 0.6}
{
    \draw (3.3, \i) -- (4.7, \i);
}

\end{tikzpicture}
\caption{Visualization of the Factorized-$\ECGLR$ problem (\Cref{lem:combine-equality-optimal}).}
\label{fig:combine-problem}
\end{figure}
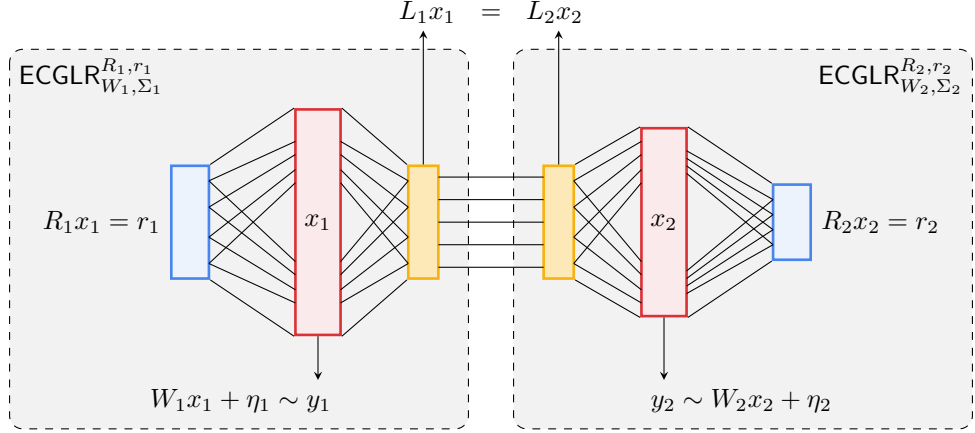

In more generality, consider a $\ECGLR_{W, \Sigma}^{\EQ,\eq}$ problem, with a special factorized structure. In this setting, we assume that the ground truth vector $x^\star$, the workload $W$, noise covariance $\Sigma$ and equality constraints $(\EQ,\eq)$ have the following form:
\begin{align}
x^\star := \begin{bmatrix} x^\star_1 \\ x^\star_2 \end{bmatrix},\ 
W = \begin{bmatrix}W_1&0\\0&W_2\end{bmatrix},\ 
\Sigma = \begin{bmatrix}\Sigma_1&0\\0&\Sigma_2\end{bmatrix},\ 
\EQ = \begin{bmatrix}\EQ_1&0\\0&\EQ_2\\L_1&-L_2\end{bmatrix},\ 
\eq= \begin{bmatrix}\eq_1\\\eq_2\\0\end{bmatrix}\label{eq:fecglr-structure}
\end{align}
Note that $x^\star_1$ and $x^\star_2$ can have different dimensions. When $W$, $\Sigma$, $R$ and $r$ have this structure, we will refer to $\ECGLR_{W,\Sigma}^{\EQ,\eq}$ problem as $\FECGLR_{W,\Sigma}^{\EQ,\eq}$ to emphasize this structure.

Informally speaking, the observation vector $y \sim Wx^\star + \eta$ can be factorized as $y = \begin{bmatrix}y_1 \\ y_2\end{bmatrix}$, where $y_1$ and $y_2$ are essentially observations from the two independent $\ECGLR_{W_1, \Sigma_1}^{\EQ_1, \eq_1}$ and $\ECGLR_{W_2, \Sigma_2}^{\EQ_2, \eq_2}$ problems, up to $L_1 x_1 = L_2 x_2$, which are certain consistency constraints on $x_1$ and $x_2$. We visualize this structure in \Cref{fig:combine-problem}.

Unlike for standard $\ECGLR$, we will focus on directly obtaining an estimator $\hz$ that is the BLUE for $L_1 x^\star_1$ (equivalently $L_2 x^\star_2)$ under $\FECGLR_{W, \Sigma}^{\EQ, \eq}$, rather than explicitly computing an estimator for $(x^\star_1, x^\star_2)$.

Given the BLUE $\hx_1$ for $x^\star_1$ under $\ECGLR_{W_1, \Sigma_1}^{\EQ_1, \eq_1}$ with covariance $\hSigma_1$, we know that $\hz_1 := L_1 \hx_1$ is the BLUE for $L_1 x^\star_1$ under $\ECGLR_{W_1, \Sigma_1}^{\EQ_1, \eq_1}$, with covariance $\hOmega_1 := L_1 \hSigma_1 L_1^\top$.
We can define $\hx_2$, $\hOmega_2$ analogously.

The following lemma obtains the best linear unbiased estimator for $L_1 x^\star_1$ (which equals $L_2 x^\star_2$) for the entire $\FECGLR_{W,\Sigma}^{\EQ,\eq}$ problem, by combining the individual best linear unbiased estimators $L_1 \hx_1$ and $L_2 \hx_2$.

\begin{algorithm}[t]
\caption{$\CombineEstimators$}
\label{alg:combine-estimates}
\begin{algorithmic}
\STATE {\bf Input:} $(\hz_1, \hOmega_1)$,  $(\hz_2, \hOmega_2)$
\STATE {\bf Output:} $(\hz; \hOmega)$

\STATE \STATE $A \gets \hOmega_2 (\hOmega_1 + \hOmega_2)^\pinv$
\STATE $B \gets I - A$ 
\STATE $\hz \gets A\hz_1 + B\hz_2$
\STATE $\hOmega \gets A\hOmega_1 A^\top + B \hOmega_2 B^\top$ 
\AlgComment{equivalently, $B \hOmega_2$}
\RETURN $(\hz, \hOmega)$
\end{algorithmic}
\end{algorithm}

\begin{lemma}\label{lem:combine-equality-optimal}
Consider $\FECGLR_{W,\Sigma}^{\EQ,\eq}$ with structure as defined in \eqref{eq:fecglr-structure}.
Let $\hz_1$ be the BLUE for $L_1 x^\star_1$ under $\ECGLR_{W_1,\Sigma_1}^{\EQ_1,\eq_1}$ with covariance $\hOmega_1$, and $\hz_2$, $\hOmega_2$ be defined analogously, and assume that $\hz_1 - \hz_2 \in \mathsf{range}(\Omega_1 + \Omega_2)$.\footnote{For any $\Omega \in \R^{d \times d}$, let $\mathsf{range}(\Omega) := \{ y \in \R^d : y = \Omega x \text{ for some } x \in \R^d\}$.}
Then $\hz$ as defined below (and also as returned by \Cref{alg:combine-estimates}) is the BLUE for $L_1 x^\star_1$ (equivalently, $L_2 x^\star_2$) under $\FECGLR_{W, \Sigma}^{\EQ, \eq}$, where $A := \hOmega_2 (\hOmega_1 + \hOmega_2)^\pinv$ and $B = I - A$:
\begin{align*}
\hz &~:=~ A \hz_1 + B \hz_2,\\
\text{and the covariance of $\hz$ is } \qquad \hat{\Omega} &~:=~ A \hOmega_1 A^\top + B \hOmega_2 B^\top = B \hOmega_2.
\end{align*}
\end{lemma}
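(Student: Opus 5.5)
We first reduce the combined problem to a Gauss--Markov-type problem on the two independent estimators $\hz_1,\hz_2$ of the common quantity $z^\star := L_1x^\star_1 = L_2x^\star_2$. The aim is to show that the BLUE for $z^\star$ under $\FECGLR_{W,\Sigma}^{\EQ,\eq}$ can depend on $y$ only through $(\hz_1,\hz_2)$, up to terms that cannot reduce variance. Once this is done, the problem becomes: given two independent unbiased estimates of the same vector $z^\star$ with covariances $\hOmega_1,\hOmega_2$, find the minimum-variance unbiased affine combination.

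For the reduction, we work with the Lagrangian characterization of \Cref{lem:ecgls} and \Cref{prop:ecgls-langrangian-view}. By \Cref{lem:ecgls}, the BLUE for $x^\star$ under the full problem minimizes
\[
\|y_1-W_1x_1\|^2_{\Sigma_1^{-1}}+\|y_2-W_2x_2\|^2_{\Sigma_2^{-1}}
\]
subject to $\EQ_1x_1=\eq_1$, $\EQ_2x_2=\eq_2$ and $L_1x_1=L_2x_2$. We introduce the shared variable $z$ and minimize over $z$ the sum $f_1(z)+f_2(z)$, where
\[
f_i(z)=\min\bigl\{\|y_i-W_ix_i\|^2_{\Sigma_i^{-1}} : \EQ_ix_i=\eq_i,\ L_ix_i=z\bigr\}.
\]
The key computation is that each $f_i$ is, up to an additive constant independent of $z$, the quadratic $(z-\hz_i)^\top\hOmega_i^{\dagger}(z-\hz_i)$ on the affine set of achievable $z$. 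This is the standard fact that partially minimizing a quadratic over the $\ECGLS$ solution's affine space yields the Mahalanobis distance under the pushforward covariance $L_i\hSigma_iL_i^\top$. We expect to verify it by writing $x_i=\hx_i+\delta$ with $\EQ_i\delta=0$: the objective becomes $\mathrm{const}+\delta^\top\hSigma_{\GLS,i}^{-1}\delta$, and minimizing this subject to $L_i\delta=z-\hz_i$ gives a generalized-inverse expression that must be identified with $\hOmega_i^\dagger$. This identification, together with handling the degenerate directions (the nullspaces of $\hOmega_i$ coming from the constraints), is the main obstacle.

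Given the reduction, we minimize $(z-\hz_1)^\top\hOmega_1^\dagger(z-\hz_1)+(z-\hz_2)^\top\hOmega_2^\dagger(z-\hz_2)$. In the invertible case this gives
\[
\hz=(\hOmega_1^{-1}+\hOmega_2^{-1})^{-1}(\hOmega_1^{-1}\hz_1+\hOmega_2^{-1}\hz_2),
\]
and the identity $(\hOmega_1^{-1}+\hOmega_2^{-1})^{-1}\hOmega_1^{-1}=\hOmega_2(\hOmega_1+\hOmega_2)^{-1}=A$ puts this in the stated form.

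As a cleaner alternative that also handles the singular case, we could argue directly about optimality. Any unbiased affine estimator built from $\hz_1,\hz_2$ has the form $A'\hz_1+(I-A')\hz_2+c$, with constraints on $A'$ only along the directions where $z^\star$ is free. Its variance is $A'\hOmega_1A'^\top+(I-A')\hOmega_2(I-A')^\top$. Completing the square shows the minimizer is $A'=\hOmega_2(\hOmega_1+\hOmega_2)^\dagger$, using the range assumption on $\hz_1-\hz_2$ so that $\hz=\hz_2+A(\hz_1-\hz_2)$ is well defined and unbiased.

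To get $\hOmega$, independence of $\hz_1$ and $\hz_2$ gives the covariance $A\hOmega_1A^\top+B\hOmega_2B^\top$. To simplify it to $B\hOmega_2$, we use $B=\hOmega_1(\hOmega_1+\hOmega_2)^\dagger$ on the relevant range, which gives $A\hOmega_1=\hOmega_2(\hOmega_1+\hOmega_2)^\dagger\hOmega_1=B^\top\hOmega_2$-type symmetric identities. Expanding then collapses the expression to $\hOmega_2-\hOmega_2(\hOmega_1+\hOmega_2)^\dagger\hOmega_2=B\hOmega_2$, by the same cancellations as in the proof of \Cref{lem:ecols}.
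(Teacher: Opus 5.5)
Your route is correct in outline, but it differs from the paper's. The paper never computes a reduced objective in $z$. It writes the stationarity conditions of the joint problem and of the two separate problems and subtracts them. The block inverse of \Cref{prop:ecgls-langrangian-view} then gives $\hx_1'-\hx_1=-\hat{\Sigma}_1L_1^\top\mu$ and $\hx_2'-\hx_2=\hat{\Sigma}_2L_2^\top\mu$, hence $\hz-\hz_1=-\hat{\Omega}_1\mu$ and $\hz-\hz_2=\hat{\Omega}_2\mu$. Solving $(\hat{\Omega}_1+\hat{\Omega}_2)\mu=\hz_1-\hz_2$ with a pseudoinverse handles the singular case uniformly, because the null-space ambiguity in $\mu$ is annihilated by $\hat{\Omega}_2$.

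Your marginalization instead presents the joint problem as fusing two quadratic messages on $z$. This makes the sufficiency of $(\hz_1,\hz_2)$ explicit and shows where the range hypothesis really enters. The sets $\hz_i+\mathsf{range}(\hat{\Omega}_i)$ on which $f_i$ is finite intersect iff $\hz_1-\hz_2\in\mathsf{range}(\hat{\Omega}_1)+\mathsf{range}(\hat{\Omega}_2)=\mathsf{range}(\hat{\Omega}_1+\hat{\Omega}_2)$. The hypothesis is therefore not needed for $\hz$ to be well defined, as you suggest.

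The price of your route is the identification you flag as the main obstacle. It is true. The cross term vanishes because $W_i^\top\Sigma_i^{-1}(y_i-W_i\hx_i)=R_i^\top\nu_i$ and $R_i\delta=0$. You then need $N_i(N_i^\top W_i^\top\Sigma_i^{-1}W_iN_i)^{-1}N_i^\top=\hat{\Sigma}_i$ for any $N_i$ whose columns span $\ker R_i$, or you can get it from the same block inverse the paper uses.

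\textbf{Singular case.} The cleanest finish is one more Lagrange step on the reduced problem, not your ``direct optimality'' alternative. Write $z-\hz_1=\hat{\Omega}_1a$ and $z-\hz_2=\hat{\Omega}_2b$, and minimize $a^\top\hat{\Omega}_1a+b^\top\hat{\Omega}_2b$ subject to $\hat{\Omega}_1a-\hat{\Omega}_2b=\hz_2-\hz_1$. Up to scaling of the multiplier, stationarity gives exactly the paper's equations $z-\hz_1=-\hat{\Omega}_1\mu$ and $z-\hz_2=\hat{\Omega}_2\mu$.

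The alternative as written is loose in three ways:
\begin{itemize}
\item In the family $A'\hz_1+(I-A')\hz_2$, unbiasedness imposes no constraint on $A'$ at all. A general unbiased affine combination $C_1\hz_1+C_2\hz_2+c$ need not have $C_1+C_2=I$. It can be rewritten almost surely in that form only because $\hz_i-z^\star\in\mathsf{range}(\hat{\Omega}_i)$.
\item Completing the square uses $\hat{\Omega}_2(\hat{\Omega}_1+\hat{\Omega}_2)^\dagger(\hat{\Omega}_1+\hat{\Omega}_2)=\hat{\Omega}_2$, which holds because both matrices are positive semidefinite.
\item It gives optimality only among functions of $(\hz_1,\hz_2)$, so it still rests on your reduction.
\end{itemize}

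For the covariance, the identity you want is $A\hat{\Omega}_1=\hat{\Omega}_2B^\top=B\hat{\Omega}_2=\hat{\Omega}_2-\hat{\Omega}_2(\hat{\Omega}_1+\hat{\Omega}_2)^\dagger\hat{\Omega}_2$, not $A\hat{\Omega}_1=B^\top\hat{\Omega}_2$.
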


\begin{proof}The BLUE $\hx'$ for $x^\star$ under $\FECGLR_{W,\Sigma}^{\EQ,\eq}$ can be obtained via the method of Lagrange multipliers (see \Cref{lem:langrangian}). Namely, we define $\cL(x, \lambda)$ as
\begin{align*}
\cL(x, \lambda) &~:=~\textstyle \frac12 (y - Wx)^\top \Sigma^{-1} (y - Wx) + \lambda^\top (\EQ x - \eq)\\
&~=~ \textstyle
\frac12 (y_1 - W_1 x_1)^\top \Sigma_1^{-1} (y_1 - W_1 x_1)
+ \frac12 (y_2 - W_2 x_2)^\top \Sigma_2^{-1} (y_2 - W_2 x_2)\\
&~\quad ~+ \lambda_1^\top (\EQ_1 x_1 - \eq_1) + \lambda_2^\top (\EQ_2 x_2 - \eq_2) + \mu^\top (L_1 x_1 - L_2 x_2) \qquad
\text{ where } \lambda =: \begin{bmatrix}\lambda_1 \\ \lambda_2 \\ \mu\end{bmatrix}
\end{align*}
From \Cref{lem:langrangian}, we have that the minimizer $\hx'$ can be obtained by setting $\nabla_x \cL(x, \lambda) = 0$, namely,
\begin{align*}
W_1^\top \Sigma_1^{-1} (W_1 \hx_1' - y_1) + \EQ_1^\top \lambda_1 + L_1^\top \mu &= 0,\\
W_2^\top \Sigma_2^{-1} (W_2 \hx_2' - y_2) + \EQ_2^\top \lambda_2 - L_2^\top \mu &= 0,
\end{align*}
along with the constraints $\EQ_1 \hx_1' = \eq_1$, $\EQ_2 \hx_2' = \eq_2$ and $L_1 \hx_1' = L_2 \hx_2'$.
On the other hand, the optimal solutions $\hx_1, \hx_2$ for the two independent problems $\ECGLR_{W_1,\Sigma_1}^{\EQ_1,\eq_1}$ and $\ECGLR_{W_2,\Sigma_2}^{\EQ_2,\eq_2}$ respectively satisfy the equations 
\begin{align*}
W_1^\top \Sigma_1^{-1} (W_1 \hx_1 - y_1) + \EQ_1^\top \nu_1 &= 0\\
W_2^\top \Sigma_2^{-1} (W_2 \hx_2 - y_2) + \EQ_2^\top \nu_2 &= 0,
\end{align*}
along with the first two sets of constraints $\EQ_1 \hx_1 = \eq_1$, $\EQ_2 \hx_2 = \eq_2$. 
Note that the values of the Lagrange multipliers $\nu_1, \nu_2$ may be different from $\lambda_1, \lambda_2$.

Let $\delta_1 = \hx_1' - \hx_1$ and $\delta_2 = \hx_2' - \hx_2$ be the difference vectors between the two solutions. Subtracting corresponding equations, we obtain the following identities on $\delta_1, \delta_2$.
\begin{align*}
W_1^\top \Sigma_1^{-1} W_1 \delta_1 + \EQ_1^\top (\lambda_1 - \nu_1) + L_1^\top \mu &= 0\\
W_2^\top \Sigma_2^{-1} W_2 \delta_2 + \EQ_2^\top (\lambda_2 - \nu_2) - L_2^\top \mu &= 0.
\end{align*}
Since $\EQ_1 \hx_1 = \EQ_1 \hx_1' = \eq_1$ and $\EQ_2 \hx_2 = \EQ_2 \hx_2' = \eq_2$, we additionally have that $\EQ_1\delta_1 = 0$ and $\EQ_2 \delta_2 = 0$.
The equations for $\delta_1$ can be written as \begin{align*}
\begin{bmatrix}
W_1^\top \Sigma_1^{-1} W_1 & \EQ_1^\top\\
\EQ_1 & 0
\end{bmatrix}
\begin{bmatrix}
\delta_1\\ \lambda_1 - \nu_1
\end{bmatrix}
&~=~ 
\begin{bmatrix}
- L_1^\top \mu\\0
\end{bmatrix}\\
\Longrightarrow\qquad
\begin{bmatrix}
\delta_1\\ \lambda_1 - \nu_1
\end{bmatrix}
&~=~ 
\begin{bmatrix}
W_1^\top \Sigma_1^{-1} W_1 & \EQ_1^\top\\
\EQ_1 & 0
\end{bmatrix}^{-1}
\begin{bmatrix}
- L_1^\top \mu\\0
\end{bmatrix}.
\end{align*}
From \Cref{prop:ecgls-langrangian-view}, we get that $\delta_1 = - \hSigma_1 L_1^\top \mu$, where we use that the top left sub-matrix of the matrix inverse is precisely $\hSigma_1$.
By an identical argument, we have that $\delta_2 = \hSigma_2 L_2^\top \mu$.
Multiplying the expressions by $L_1$ and $L_2$ and replacing $\delta_1, \delta_2$ by their definitions, we have
\begin{align*}
L_1 (\hx_1' - \hx_1) &= -L_1 \hSigma_1 L_1^\top \mu = -\hOmega_1 \mu\\
L_2 (\hx_2' - \hx_2) &= L_2 \hSigma_2 L_2^\top \mu =\hOmega_2 \mu.
\end{align*}
Let $\hz_1 = L_1 \hx_1$, $\hz_2 = L_2 \hx_2$, and $\hz = L_1 \hx_1' = L_2 \hx_2'$. Then the equations become
\begin{align*}
\hz - \hz_1 &= -\hOmega_1 \mu\\
\hz - \hz_2 &= \hOmega_2 \mu.
\end{align*}
Thus, we get $\hz_1 - \hz_2 = (\hOmega_1 + \hOmega_2) \mu$ (here we would run into a contradiction if $\hz_1 - \hz_2$ were not in $\mathsf{range}(\hOmega_1 + \hOmega_2)$).
Thus, we have that $\mu = (\hOmega_1 + \hOmega_2)^\pinv (\hz_1 - \hz_2) + w$ for some $w$ in the nullspace of $\hOmega_1 + \hOmega_2$.
Hence, we get for $A = \hOmega_2 (\hOmega_1 + \hOmega_2)^\pinv$ and $B = I - A$ that
\begin{align*}
    \hz &~=~ \hz_2 + \hOmega_2 (\hOmega_1 + \hOmega_2)^\pinv (\hz_1 - \hz_2) + \hOmega_2 w\\
    &~=~ A \hz_1 + B \hz_2
\end{align*}
where we use that $\hOmega_2 w = 0$ because $w$ is in the nullspace of $\hOmega_1 + \hOmega_2$, and hence also in the nullspace of $\hOmega_2$.
Finally, to derive the expression for $\hOmega$, we invoke the independence of $\hz_1$ and $\hz_2$ and observe that
\begin{align*}
\hOmega
&~=~ A \hOmega_1 A^\top + (I - A) \hOmega_2 (I - A)^\top \\
&~=~ \hOmega_2 - A \hOmega_2 - \hOmega_2 A^\top + A (\hOmega_1 + \hOmega_2) A^\top\\
&~=~ \hOmega_2 - \hOmega_2 (\hOmega_1 + \hOmega_2)^\pinv \hOmega_2 \\
&~=~ (I - \hOmega_2 (\hOmega_1 + \hOmega_2)^\pinv) \hOmega_2 \\
&~=~ B \hOmega_2 \qedhere
\end{align*}

\end{proof}

\subsection{\boldmath Product-\texorpdfstring{$\ECGLR$}{ECGLR}}

In this section we develop the simple concept of a Product-$\ECGLR$ problem, which is  an $\ECGLR_{W, \Sigma}^{\EQ, \eq}$ problem with a special product structure described by a collection of $k$ {\em independent} $\ECGLR$ problems. Namely, we assume that the ground truth vector $x^\star$, the workload $W$, noise covariance $\Sigma$ and equality constraints $(\EQ,\eq)$ have the following form:
\begin{align}
x^\star := \begin{bmatrix} x^\star_1 \\ \vdots \\ x^\star_k\end{bmatrix},\ 
W = \begin{bmatrix}
        W_1 & \ldots & 0 \\
        \vdots & \ddots & \vdots \\
        0 & \ldots & W_k
    \end{bmatrix},\ 
\Sigma = \begin{bmatrix}
        \Sigma_1 & \ldots & 0 \\
        \vdots & \ddots & \vdots \\
        0 & \ldots & \Sigma_k
    \end{bmatrix},\ 
\EQ = \begin{bmatrix}
        \EQ_1 & \ldots & 0 \\
        \vdots & \ddots & \vdots \\
        0 & \ldots & \EQ_k
    \end{bmatrix},\ 
\eq= \begin{bmatrix}\eq_1\\\vdots\\\eq_k\end{bmatrix}\label{eq:pecglr-structure}
\end{align}

Note that $x^\star_i$'s can have different dimensions.
When $W$, $\Sigma$, $R$ and $r$ have this structure, we will refer to $\ECGLR_{W,\Sigma}^{\EQ,\eq}$ problem as $\PECGLR_{W,\Sigma}^{\EQ,\eq}$ to emphasize this structure.

\begin{lemma}\label{lem:pecglr}
Consider $\PECGLR_{W,\Sigma}^{\EQ,\eq}$ with structure as defined in \eqref{eq:pecglr-structure}.
Let $\hx_i$ be the BLUE for $x^\star_i$ under $\ECGLR_{W_i,\Sigma_i}^{\EQ_i,\eq_i}$ with covariance $\hSigma_i$ for each $i$. Then, $\hx$ is the BLUE for $x^\star$ under $\PECGLR_{W,\Sigma}^{\EQ,\eq}$ with covariance $\hSigma$ as defined below:
\begin{align*}
    \hx = \begin{bmatrix} \hx_1 \\ \vdots \\ \hx_k\end{bmatrix} \qquad \text{and} \qquad
    \hSigma = \begin{bmatrix}
        \hSigma_1 & \ldots & 0 \\
        \vdots & \ddots & \vdots \\
        0 & \ldots & \hSigma_k
    \end{bmatrix}
\end{align*}
As a corollary, for any linear map $L : x \mapsto L_1 x_1 + \ldots + L_k x_k$, the BLUE $\hz$ for $Lx^\star$ under $\PECGLR_{W,\Sigma}^{\EQ,\eq}$ is given as $\sum_{i=1}^k L_i \hx_i$ and has covariance $\hSigma_L := \sum_{i=1}^k L_i \hSigma_i^{-1} L_i^\top$. \end{lemma}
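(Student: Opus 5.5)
The plan is to reduce everything to \Cref{lem:ecgls}. It characterizes the BLUE under $\ECGLR_{W,\Sigma}^{\EQ,\eq}$ as the minimizer of $(y-Wx)^\top\Sigma^{-1}(y-Wx)$ subject to $\EQ x=\eq$, and gives its covariance by an explicit formula. So it suffices to show that, under the structure \eqref{eq:pecglr-structure}, this constrained minimizer is the concatenation of the per-block minimizers, and that the covariance formula is block diagonal.

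\textbf{Step 1 (the estimator).} Since $W$ and $\Sigma$ are block diagonal, so is $\Sigma^{-1}$, and the objective separates as
\[
\textstyle\sum_{i=1}^k (y_i-W_ix_i)^\top\Sigma_i^{-1}(y_i-W_ix_i).
\]
The constraint $\EQ x=\eq$ is likewise the conjunction of the independent constraints $\EQ_i x_i=\eq_i$. A separable objective over a product of feasible sets is minimized by minimizing each summand over its own feasible set. The minimizer is unique, because $W$ has full column rank and the objective is strictly convex. Hence the minimizer is $(\hx_1,\dots,\hx_k)$, where $\hx_i$ is the minimizer for the $i$-th problem, i.e., its BLUE by \Cref{lem:ecgls}. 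As a cross-check, one can verify the same thing in closed form. The matrices $\hSigma_{\GLS}=(W^\top\Sigma^{-1}W)^{-1}$ and $\EQ$ are both block diagonal. Therefore $\EQ\hSigma_{\GLS}\EQ^\top$, its inverse, and $P=\hSigma_{\GLS}\EQ^\top(\EQ\hSigma_{\GLS}\EQ^\top)^{-1}$ are block diagonal. Formula \eqref{eq:beta-eclgs} then acts blockwise, and $\hx_{\GLS}$ itself decomposes blockwise.

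\textbf{Step 2 (the covariance).} Formula \eqref{eq:simga-eclgs}, $\hSigma_{\GLS}-P\EQ\hSigma_{\GLS}$, is a product of block-diagonal matrices and so is block diagonal with $i$-th block $\hSigma_i$. Equivalently, each $\hx_i$ is an affine function of $y_i$ alone, and the $y_i$ are uncorrelated since $\Sigma$ is block diagonal. So the cross-covariances vanish and the diagonal blocks are the individual covariances $\hSigma_i$.

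\textbf{Step 3 (the corollary).} By the remark following \Cref{def:blue} (which extends to the ECGLR setting), the BLUE for $Lx^\star$ is $L\hx=\sum_i L_i\hx_i$, with covariance
\[
\textstyle L\hSigma L^\top=\sum_i L_i\hSigma_iL_i^\top.
\]
The last equality uses the block-diagonal structure from Step 2. I note that this is the covariance the argument actually produces. The displayed expression with $\hSigma_i^{-1}$ appears to be a typo for $\hSigma_i$, since the covariance of $L_i\hx_i$ is $L_i\hSigma_iL_i^\top$. I will prove the version with $\hSigma_i$.

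\textbf{Obstacles.} There is essentially none; the only point needing care is uniqueness and well-posedness. The blockwise argument requires each sub-problem to satisfy the standing assumptions: $W_i$ has full column rank, $\Sigma_i$ is positive definite, and $\EQ_i$ has full row rank. Each of these is inherited from the corresponding property of the full block-diagonal matrix, which I will note explicitly.
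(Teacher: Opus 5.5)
Your proposal is correct and follows the paper's own argument, which likewise notes that the constrained generalized least-squares objective separates into $k$ independent problems. Your covariance step (block diagonality because each $\hx_i$ depends only on $y_i$) fills in what the paper leaves implicit. You are also right that the corollary's covariance should be $\sum_i L_i \hSigma_i L_i^\top$ rather than the version with $\hSigma_i^{-1}$; this matches how the lemma is used in the proof of \Cref{thm:tree-optimality}.
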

\begin{proof}
This proof is immediate by observing that $\hx$ is given as:
\[
\argmin_{x} (y - Wx)^\top \Sigma^{-1} (y - Wx) \qquad \text{subject to }\ \EQ x = \eq
\]
which essentially breaks down as independent optimization problems.
\[
\argmin_{x} \sum_{i=1}^k (y_i - W_ix_i)^\top \Sigma_i^{-1} (y_i - W_ix_i) \qquad \text{subject to } \EQ_i x_i = \eq_i \text{ for each } i.\qedhere
\]
\end{proof}
 \section{Optimal block hierarchical post-processing algorithm}\label{sec:optimal-linear}

{  
\begin{figure}
\centering
\begin{tikzpicture}[
  vert/.style = {draw, circle, minimum size=4mm},
  level distance=1.3cm,
  level 1/.style={sibling distance=3cm},
  level 2/.style={sibling distance=1.5cm},
  scale = 0.9, transform shape
]
    \def\drawtree{
    \node[vert] (A) {}
    child {
        node[vert] (B) {$v$}
        child { node[vert] (D) {} }
        child { node[vert] (E) {} }
    }
    child {
        node[vert] (C) {}
        child { node[vert] (F) {} }
        child { node[vert] (G) {} }
    };
    }
    \drawtree

    \draw[fill=Gred!50, draw=none, rounded corners=2mm, inner sep=3cm, fill opacity=0.7]
        ($(B)+(-0.6,0)$) --
        ($(A)+(0,0.6)$) --
        ($(C)+(0.6,0)$) --
        ($(G)+(0.6,0)$) --
        ($(G)+(0,-0.6)$) --
        ($(F)+(0,-0.6)$) --
        ($(F)+(-0.6,0)$) --
        ($(A)+(-0.2,-1.9)$) --
        ($(B)+(-0.4,-0.3)$) -- cycle;
    \draw[fill=Gblue!50, draw=none, rounded corners=2mm, inner sep=3cm, fill opacity=0.7]
        ($(A)+(0,0.45)$) --
        ($(C)+(0.4,0)$) --
        ($(G)+(0.45,-0.1)$) --
        ($(G)+(0,-0.4)$) --
        ($(F)+(0,-0.4)$) --
        ($(F)+(-0.4,-0.1)$) --
        ($(A)+(-0.4,0)$) -- cycle;
    \draw[fill=Ggreen!50, draw=none, rounded corners=2mm, inner sep=3cm, fill opacity=0.6]
        ($(B)+(0,0.55)$) --
        ($(B)+(0.5,0)$) --
        ($(E)+(0.6,0)$) --
        ($(E)+(0.2,-0.6)$) --
        ($(D)+(-0.2,-0.6)$) --
        ($(D)+(-0.6,0)$) --
        ($(B)+(-0.5,0)$) -- cycle;
    \draw[fill=Gyellow!70, draw=none, rounded corners=2mm, inner sep=3cm, fill opacity=0.7]
        ($(D)+(-0.3,0.3)$) --
        ($(E)+(0.3,0.3)$) --
        ($(E)+(0.3,-0.3)$) --
        ($(D)+(-0.3,-0.3)$) --
        cycle;
    \normalsize
    \node[black!70!Gred] at ($(B)+(1,0)$) {$\TDown_v$};
    \node[black!70!Gblue] at ($(C)+(-1,0.2)$) {$\Tdown_v$};
    \node[black!70!Ggreen] at ($(B)+(0,-0.67)$) {$\TUp_v$};
    \node[black!70!Gyellow] at ($(B)+(0,-1.3)$) {$\Tup_v$};
    
        \drawtree
\end{tikzpicture}
\caption{Subgraphs used in \Cref{thm:tree-optimality} to prove optimality of \Cref{alg:tree-post-processing}; figure adapted from \cite{DawsonGK0KLMMNS23}.}
\label{fig:post-processing-subgraphs}
\end{figure}

In this section, we present an algorithm and show that it computes the best linear unbiased estimator $\hx$ for $x^\star$ under $\BHECGLR{\cT}_{W,\Sigma}^{R,r}$, or equivalently that the output $\hx_v$ is the BLUE for $x^\star_v$ under $\BHECGLR{\cT}_{W,\Sigma}^{R,r}$ for all $v \in \cT$. At a high level, this algorithm computes the BLUE for $x^\star_v$ under sub-problems of $\BHECGLR{\cT}_{W,\Sigma}^{R,r}$ that involve measurements and constraints on a subset of all variables in $x^\star$. These solutions are then recursively combined using \Cref{lem:combine-equality-optimal} to obtain the BLUE for $x^\star_v$ under increasingly larger sub-problems of $\BHECGLR{\cT}_{W,\Sigma}^{R,r}$, until we cover all the variables in $x^\star$. To simplify the presentation of the algorithm, consider the following subsets of $\cT$ defined for any $v \in \cT$ and visualized in \Cref{fig:post-processing-subgraphs}:
\begin{itemize}[itemsep=0pt]
    \item $\Tup_v$ is the set of all nodes in the tree that are descendants of $v$, {\em not including} $v$.
    \item $\TUp_v$ is the set of all nodes in the tree that are descendants of $v$, {\em including} $v$.
    \item $\Tdown_v = \cT \smallsetminus \TUp_v$ is the set of all nodes that are not $v$ or descendants of $v$.
    \item $\TDown_v = \cT \smallsetminus \Tup_v = \Tdown_v \cup \{v\}$ is the set of all nodes that are not strict descendants of $v$.
\end{itemize}

\noindent Corresponding to any subset $S \subseteq \cT$, let $\ECGLR_S$ be a sub-problem defined over the subset of all variables $x^\star_S := (x^\star_{(u, b)})_{(u, b) \in S \times \cB}$ that involve
\begin{itemize}[itemsep=0pt]
    \item measurements $y_u \sim \GLR_{W_u, \Sigma_u}(x^\star_u)$ for $u \in S$, subject to
    \item all per-node equality constraints $\EQ_u x_u = \eq_u$ for $u \in S$, and
    \item all consistency equality constraints $x^\star_{(u,b)} = \sum_{v \in \child(u)} x^\star_{(v, b)}$ such that $\{u\} \cup \child(u) \subseteq S$.
\end{itemize}
For further convenience, let $\ECGLR_v$ denote $\ECGLR_{\{v\}}$, 
$\ECGLRup_v$ denote $\ECGLR_{\Tup_v}$, and analogously define $\ECGLRUp_v$, $\ECGLRdown_v$ and $\ECGLRDown_v$, and finally, let $\ECGLR_{\cT}$ denote the full $\BHECGLR{\cT}_{W, \Sigma}^{\EQ, \eq}$ problem.

\begin{algorithm*}[!t]
\caption{$\BHECGLS{\cT}_{W,\Sigma}^{\EQ,\eq}$}
\label{alg:tree-post-processing}
\begin{algorithmic}
\STATE {\bf Params:} Tree $\cT$ with root $\rt$, and parameters $W_v$, $\Sigma_v$, $\EQ_v$, $\eq_v$ for each $v \in \cT$.
\STATE {\bf Input:} $(y_v)_{v \in \cT}$: measurements $y_v \sim \GLR_{W_v, \Sigma_v}(x^\star_v)$.
\STATE {\bf Output:} $(\hx_v; \hvar_v)_{v \in \cT}$: post-processed estimates and covariances for all $v$.
\STATE \STATE \AlgCommentInline{Per-node processing}
\FOR{each node $v\in \cT$}
    \STATE $\triangleright\ (\hz_v, \var_v) \gets \ECGLS_{W_v, \Sigma_v}^{\EQ_v, \eq_v}(y_v)$
    \AlgCommentCref{alg:ecgls}
    \STATE \ENDFOR
\STATE \AlgCommentInline{Bottom-up pass}
\FOR{leaf $v\in \cT$}
    \STATE $\triangleright\ (\hzUp_v, \varUp_v) \gets (\hz_v, \var_v)$ 
\ENDFOR
\FOR{each internal node $v$ from largest to smallest depth}
    \STATE $\triangleright\ (\hzup_v, \varup_v) \gets \left(
        \sum_{u\in\child(v)}\hzUp_u,  \enspace
        \sum_{u\in\child(v)} \varUp_u \right)$ 
    \STATE $\triangleright\ (\hzUp_v, \varUp_v) \gets \CombineEstimators((\hz_v, \var_v), (\hzup_v, \varup_v))$ 
    \AlgCommentCref{alg:combine-estimates}
    \STATE \ENDFOR
\STATE \AlgCommentInline{Top-down pass}
\FOR{root $\rt$}
    \STATE $\triangleright\ (\hx_{\rt}, \hvar_{\rt}) \gets (\hzUp_{\rt}, \varUp_{\rt})$
    \STATE $\triangleright\ (\hzDown_{\rt}, \varDown_{\rt}) \gets (\hz_{\rt}, \var_{\rt})$
\ENDFOR
\FOR{each non-root node $v$ from smallest to largest depth}
    \STATE $\triangleright\ p \gets \parent(v)$
    \STATE $\triangleright\ (\hzdown_v, \vardown_v) \gets \prn{\hzDown_p - \hspace{-3mm} \sum\limits_{u \in \child(p) \smallsetminus \{v\}} \hspace{-3mm} \hzUp_u, \enspace \varDown_p + \hspace{-3mm} \sum\limits_{u \in \child(p) \smallsetminus \{v\}} \hspace{-3mm} \varUp_u}$
    \AlgComment{equivalently, $(\hzDown_p - \hzup_p + \hzUp_v, \enspace \varDown_p + \varup_p - \varUp_v)$}
                \STATE $\triangleright\ (\hzDown_v, \varDown_v) \gets \CombineEstimators((\hz_v, \var_v), (\hzdown_v, \vardown_v))$
    \AlgCommentCref{alg:combine-estimates}
    \STATE $\triangleright\ (\hx_v, \hvar_v) \gets \CombineEstimators((\hzUp_v, \varUp_v), (\hzdown_v, \vardown_v))$ 
    \AlgCommentCref{alg:combine-estimates}
    \STATE \ENDFOR
\RETURN $(\hx_v; \hvar_v)_{v\in \cT}$
\end{algorithmic}
\end{algorithm*}

\begin{theorem}\label{thm:tree-optimality}
The value $\hx_v$ returned by \Cref{alg:tree-post-processing} is the BLUE for 
$x^\star_v$ under $\BHECGLR{\cT}_{W,\Sigma}^{\EQ,\eq}$ for all $v \in \cT$, and its covariance is $\hvar_v$.\end{theorem}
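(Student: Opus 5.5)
Note first that \Cref{alg:tree-post-processing} handles only per-node and consistency constraints; the statewide/US-total constraints can be folded into the per-node constraints of the relevant nodes, since the state total is a linear function of $x_v$ for the state node. The plan is a structural induction over the tree that maintains invariants about which sub-problem each intermediate estimator is the BLUE for. The invariants are:
\begin{enumerate}
\item[(a)] $(\hz_v,\var_v)$ is the BLUE and covariance for $x^\star_v$ under $\ECGLR_v$.
\item[(b)] $(\hzUp_v,\varUp_v)$ is the BLUE for $x^\star_v$ under $\ECGLRUp_v$.
\item[(c)] $(\hzup_v,\varup_v)$ is the BLUE for $\sum_{u\in\child(v)}x^\star_u$ under $\ECGLRup_v$.
\item[(d)] $(\hzdown_v,\vardown_v)$ is the BLUE for $x^\star_v$, interpreted as $x^\star_p-\sum_{u\ne v}x^\star_u$ with $p=\parent(v)$, under $\ECGLRdown_v$.
\item[(e)] $(\hzDown_v,\varDown_v)$ is the BLUE for $x^\star_v$ under $\ECGLRDown_v$.
\end{enumerate}
Invariant (a) is \Cref{lem:ecgls}. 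The final claim follows from (b) and (d) at each non-root $v$. Since $\cT=\TUp_v\cup\Tdown_v$, the sub-problems share no variables, and the only constraints linking them are the consistency constraints at $p$, which tie $x_v$ to $x_p-\sum_{u\ne v}x_u$. So $\ECGLR_{\cT}$ is exactly an $\FECGLR$ with $L_1=I$ on $x_v$ and $L_2$ the linear map $x_p-\sum_{u\ne v}x_u$, and \Cref{lem:combine-equality-optimal} gives that $\hx_v$ is the BLUE with covariance $\hvar_v$. At the root, $\ECGLRUp_{\rt}=\ECGLR_{\cT}$ directly.

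\textbf{Bottom-up pass.} Proceed by induction on height. For leaves, $\ECGLRUp_v=\ECGLR_v$. For an internal node $v$, the problem $\ECGLRup_v$ is a product of the independent problems $\ECGLRUp_u$ over $u\in\child(v)$: there are no constraints between distinct child subtrees, and the consistency constraint at $v$ is excluded because $v\notin\Tup_v$. By the corollary of \Cref{lem:pecglr}, the BLUE of $\sum_u x^\star_u$ is $\sum_u\hzUp_u$, and its covariance is $\sum_u\varUp_u$; the paper's stated formula has a stray inverse, which I would correct. This gives (c). Then $\ECGLRUp_v$ is an $\FECGLR$ with $x_1=x_v$ under $\ECGLR_v$, $x_2=x_{\Tup_v}$ under $\ECGLRup_v$, and linking constraint $x_v=\sum_u x_u$ (the consistency constraints at $v$). \Cref{lem:combine-equality-optimal} then gives (b).

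\textbf{Top-down pass.} Proceed by induction on depth, with (e) at the root trivial since $\TDown_{\rt}=\{\rt\}$. For a child $v$ of $p$, decompose $\Tdown_v=\TDown_p\cup\bigcup_{u\in\child(p)\setminus\{v\}}\TUp_u$. These pieces are variable-disjoint, and the only cross constraints are the consistency constraints at $p$, which involve $x_v\notin\Tdown_v$ and so are absent from $\ECGLRdown_v$. Hence $\ECGLRdown_v$ is a Product-$\ECGLR$ of $\ECGLRDown_p$ and the $\ECGLRUp_u$. By \Cref{lem:pecglr} with the map $x_p-\sum_{u\ne v}x_u$ and the invariants (e) for $p$ and (b) for the siblings, this gives (d). Next, $\TDown_v=\Tdown_v\cup\{v\}$ is an $\FECGLR$ linking $x_v$ to $x_p-\sum_{u\ne v}x_u$, which yields (e) via \Cref{lem:combine-equality-optimal}.

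\textbf{Main obstacle.} The delicate point is verifying the hypotheses of \Cref{lem:combine-equality-optimal}, in two respects.
\begin{itemize}
\item[(i)] The linear map $L_2$ acts on a larger variable vector, but the lemma only needs the BLUE of $L_2x_2$ and its covariance, which is what we have.
\item[(ii)] The range condition $\hz_1-\hz_2\in\mathsf{range}(\var_1+\var_2)$ must hold, since per-node equality constraints make the covariances singular. I would argue that any direction $w$ in the nullspace of $\var_1+\var_2$ is a direction in which both $w^\top z_1$ and $w^\top z_2$ are deterministically fixed by the constraints. Because the true $x^\star$ is feasible, both equal $w^\top L_1x^\star_1$ almost surely, so the difference is orthogonal to the nullspace. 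Equivalently, the combined constraint system is consistent.
\end{itemize}
Full row rank of the combined constraints is not needed, because the lemma uses a pseudoinverse.
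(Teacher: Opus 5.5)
Your proof is correct and is essentially the paper's: it uses the same invariants, treats $\ECGLRup_v$ and $\ECGLRdown_v$ as Product-$\ECGLR$ instances and $\ECGLRUp_v$, $\ECGLRDown_v$, $\ECGLR_{\cT}$ as Factorized-$\ECGLR$ instances, and adds two details the paper leaves implicit: the verification of the range hypothesis of \Cref{lem:combine-equality-optimal} and the fix of the stray inverse in \Cref{lem:pecglr}. One caveat: your opening claim that statewide totals fold into per-node constraints fails for the states split across the AIAN and non-AIAN spines, where the total couples two sibling nodes; the paper's sub-problems $\ECGLR_S$ likewise include only per-node and consistency constraints, so your argument proves exactly what the paper's does.
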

Note that we explicitly compute only the covariance of the measurements corresponding to each individual node, and not the covariances between different nodes.
\begin{proof}
This theorem follows by a repeated applications of \Cref{lem:combine-equality-optimal,lem:pecglr}. In particular, we prove the following claims:
\begin{enumerate}[label=(C\arabic*),leftmargin=1cm]
\item\label{clm:1} $\hz_v$ is the BLUE for $x^\star_v$ under $\ECGLR_v$, with covariance $\var_v$.\item\label{clm:2} $\hzUp_v$ is the BLUE for $x^\star_v$ under $\ECGLRUp_v$, with covariance $\varUp_v$.\item\label{clm:3} $\hzDown_v$ is the BLUE for $x^\star_v$ under $\ECGLRDown_v$, with covariance $\varDown_v$.\item\label{clm:4} $\hx_v$ is the BLUE for $x^\star_v$ under $\ECGLR_{\cT}$, with covariance $\hvar_v$.
\end{enumerate}
The last step is precisely the final statement we desire. We now prove each of these claims.

\paragraph{\boldmath \ref{clm:1} $\hz_v$ is the BLUE for $x^\star_v$ under $\ECGLR_v$ with covariance $\var_v$.}
Immediate from \Cref{lem:ecgls}.

\paragraph{\boldmath \ref{clm:2} $\hzUp_v$ is the BLUE for $x^\star_v$ under $\ECGLRUp_v$ with covariance $\varUp_v$.} We prove this by induction from bottom to top.\\[-2mm]

\noindent {\em Base Case:} For a leaf $v$, \ref{clm:2} is equivalent to \ref{clm:1} above, since $\hzUp_v = \hz_v$ and $\ECGLRUp_v$ is same as $\ECGLR_v$.\\[-2mm]

\noindent {\em Inductive Step:} For any non-leaf node $v$, assume that \ref{clm:2} holds for all $u \in \child(v)$.

In this case, $\ECGLRUp_v$ precisely has the form of a factorized-$\ECGLR$ problem (see \eqref{eq:fecglr-structure}), corresponding to the sub-problems of $\ECGLR_v$ and $\ECGLRup_v$ with additional consistency constraints of $x^\star_v = \sum_{u \in \child(v)} x^\star_u$. Furthermore, $\ECGLRup_v$ precisely has the form of a product-$\ECGLR$ (see \eqref{eq:pecglr-structure}) with components of $\ECGLRUp_u$ for all $u \in \child(v)$.

From \Cref{lem:pecglr}, we have that the BLUE for $\sum_{u \in \child(v)} x^\star_u$ under $\ECGLRup_v$ is given as $\sum_{u \in \child(v)} \zUp_u =: \hzup_v$ with covariance $\sum_{u \in \child(v)} \varUp_u =: \varup_v$
(where we use the inductive hypothesis that $\zUp_u$ is the BLUE for $x^\star_u$ under $\ECGLRUp_u$).

We have from \ref{clm:1} above that $\hz_v$ is the BLUE for $x^\star_v$ under $\ECGLR_v$ with covariance $\var_v$.

Putting this together using \Cref{lem:combine-equality-optimal},
we conclude that the BLUE $\zUp_v$ for $x^\star_v$ (equivalently $\sum_{u \in \child(v)} x^\star_u$) under $\ECGLRUp_v$ is precisely $\zUp_v$ given as $\CombineEstimators((\hz_v; \var_v), (\hzup_v, \varup_v))$ with covariance $\varUp_v$.

\paragraph{\boldmath \ref{clm:3} $\hzDown_v$ is the BLUE for $x^\star_v$ under $\ECGLRDown_v$ with covariance $\varDown_v$.} We again apply induction, this time from top to bottom.\\[-2mm]

\noindent {\em Base Case:} For root $\rt$, \ref{clm:3} is equivalent to \ref{clm:1} above, since $\hzDown_{\rt} = \hz_{\rt}$ and $\ECGLRDown_{\rt}$ is same as $\ECGLR_{\rt}$.

\noindent {\em Inductive Step:} For any non-root node $v$, assume that \ref{clm:3} holds for its parent $p$. We have already established that \ref{clm:2} holds for all nodes.

In this case, $\ECGLRDown_v$ precisely has the form of a factorized-$\ECGLR$ problem (see \eqref{eq:fecglr-structure}), corresponding to the sub-problems of $\ECGLR_v$ and $\ECGLRdown_v$ with additional consistency constraints of $x^\star_v = x^\star_p - \sum_{u \in \child(p) \smallsetminus \{v\}} x^\star_u$. Furthermore, $\ECGLRdown_v$ precisely has the form of a product-$\ECGLR$ (see \eqref{eq:pecglr-structure}) with components of $\ECGLRDown_p$ and $\ECGLRUp_u$ for all $u \in \child(p) \smallsetminus \{v\}$.

From \Cref{lem:pecglr}, we have that the BLUE for $x^\star_p - \sum_{u \in \child(p) \smallsetminus \{v\}} x^\star_u$ under $\ECGLRdown_v$ is given as $\zDown_p - \sum_{u \in \child(p) \smallsetminus\{v\}} \zUp_u =: \hzdown_v$ with covariance $\varDown_p + \sum_{u \in \child(p) \smallsetminus\{v\}} \varUp_u =: \vardown_v$,
where we use the inductive hypothesis that $\zDown_p$ is the BLUE for $x^\star_p$ for $\ECGLRDown_p$ and that $\zUp_u$ is the BLUE for $x^\star_u$ under $\ECGLRUp_u$ from \ref{clm:2}.

We have from \ref{clm:1} above that $\hz_v$ is the BLUE for $x^\star_v$ under $\ECGLR_v$ with covariance $\var_v$ with $\nullspace_v = \nullspace(\var_v)$.

Putting this together using \Cref{lem:combine-equality-optimal},
we conclude that the BLUE $\zDown_v$ for $x^\star_v$ (equivalently $x^\star_p - \sum_{u \in \child(p) \smallsetminus \{v\}} x^\star_u$) under $\ECGLRDown_v$ is precisely $\zDown_v$ given as $\CombineEstimators((\hz_v; \var_v), (\hzdown_v, \vardown_v))$ with covariance $\varDown_v$.

\paragraph{\boldmath \ref{clm:4} $\hx_v$ is the BLUE for $x^\star_v$ under $\ECGLR_{\cT}$ with covariance $\hvar_v$.} We build on the previous claims.\\[-2mm]

\noindent The case of root node $\rt$ follows from \ref{clm:2}, since $\hx_{\rt} = \zUp_\rt$, $\hvar_\rt = \varUp_\rt$ and $\ECGLR_\cT$ is same as $\ECGLRUp_\rt$.\\[-2mm]

\noindent The case of leaf nodes $v$ follows from \ref{clm:3}, since $\hx_v = \zDown_v$, $\hvar_v = \varDown_v$ and $\ECGLR_\cT$ is same as $\ECGLRDown_v$.\\[-2mm]

\noindent Finally, for any intermediate node $v$, we observe that $\ECGLR_\cT$ precisely has the form of a factorized-$\ECGLR$ problem (see \eqref{eq:fecglr-structure}), corresponding to the sub-problems of $\ECGLRUp_v$ and $\ECGLRdown_v$ with additional consistency constraints of $x^\star_v = x^\star_p - \sum_{u \in \child(p) \smallsetminus \{v\}} x^\star_u$.

As already argued in the proof of \ref{clm:3}, the BLUE for $x^\star_p - \sum_{u \in \child(p) \smallsetminus \{v\}} x^\star_u$ under $\ECGLRdown_v$ is given as $\zDown_p - \sum_{u \in \child(p) \smallsetminus\{v\}} \zUp_u =: \hzdown_v$ with covariance $\varDown_p + \sum_{u \in \child(p) \smallsetminus\{v\}} \varUp_u =: \vardown_v$.

From \ref{clm:2}, the BLUE for $x^\star_v$ under $\ECGLRUp_v$ is given as $\zUp_v$ with covariance $\varUp_v$.
Putting this together using \Cref{lem:combine-equality-optimal},
we conclude that the BLUE $\hx_v$ for $x^\star_v$ (equivalently $x^\star_p - \sum_{u \in \child(p) \smallsetminus \{v\}} x^\star_u$) under $\ECGLR_{\cT}$ is precisely $\hx_v$ given as $\CombineEstimators((\hzUp_v; \varUp_v), (\hzdown_v, \vardown_v))$ with covariance $\hvar_v$.

\end{proof}

\begin{remark}\label{rem:covariance-not-block-diagonal}
The covariance $\hvar$ of the BLUE $\hx$ of $\BHECGLR{\cT}_{W,\Sigma}^{\EQ,\eq}$ can be dense, whereas as mentioned above, $(\hvar_v)_{v \in \cT}$ returned by \Cref{alg:tree-post-processing} is only the ``block-diagonal'' portion of $\hvar$. While we do not compute the full $\hvar$, computing only the block diagonal portion is crucial for efficiency and suffices for our final \BlueDown algorithm, as we describe in \Cref{sec:top-down-pass}.\end{remark}

\paragraph{Computational Complexity of \Cref{alg:tree-post-processing}.}
The number of computational operations performed in \Cref{alg:tree-post-processing} scales as $O(|\cT| \cdot \poly(|\cB|))$. (For the sake of simplicity of notation, we assume that the number of measurements ``per node'' is $O(|\cB|)$.) The $\poly$ accounts for operations such as matrix multiplications and matrix inversion. The main thing to note is that number of computational operations scales {\em linearly} in the size of the tree. By contrast, a naive implementation of the expressions in \Cref{lem:ecgls} would have required $O(\poly(|\cT| \cdot |\cB|))$ operations, which is intractable for large trees $|\cT|$.

Furthermore, \Cref{alg:tree-post-processing} is {\em local} and {\em massively parallelizable}, since the computations are essentially performed on a per-node basis, each requiring only $O(k \cdot \poly(|\cB|))$ memory where $k$ is the maximum degree of a node.

While the linear dependence in $|\cT|$ is highly efficient, the $\poly(|\cB|)$ dependence can still be prohibitive. For instance, in the US Census setting,
each per-node workload matrix has size $2603 \times 2016$. Maintaining and performing operations on covariance matrices of dimension $2016 \times 2016$, while feasible in terms of memory limits on typical machines, is 
still a substantial overhead since we must do so for every node in $\cT$. However, as we have mentioned,
there are substantial symmetries in the structure of the workload, noise covariance matrices and constraints. In \Cref{sec:symmetries}, we show how to leverage this to significantly reduce the number of computational operations performed. As an illustration, instead of maintaining dense per-node covariances of dimensions $2016 \times 2016$, we show that it suffices to maintain a pair of matrices of dimension $32 \times 32$ per node, resulting in a nearly $2000$-fold reduction in the representation size and a significantly larger reduction in the number of operations performed.

}

 \section{Exploiting symmetries in per-node workload matrices}
\label{sec:symmetries}

Going back to the $\BHECGLR{\cT}_{W,\Sigma}^{\EQ,\eq}$ problem, as described in \Cref{sssec:abstract-tree-structure}, recall that $W_{u} \in \R^{m_u \times |\cB|}$ and $\Sigma_u \in \R^{m_u \times m_u}$. We observe that the linear queries performed in \Cref{tab:per-node-workload} satisfy certain symmetries that make it possible to perform the operations in \Cref{alg:tree-post-processing} much more efficiently.

We write the set of buckets $\cB$ as a product set $\Ba \times \Bs$, where each bucket $b \in \cB$ is written as a pair $(b_a, b_s)$ where we will refer to $b_a$ as the {\em asymmetric feature} and $b_s$ as the {\em symmetric feature}.
In the setting of \Cref{tab:per-node-workload}, $\Ba = [8] \times [2] \times [2]$ is the combination of Household or Group Quarters type, Hispanic/Latino origin and Voting age, and $\Bs = [63]$ is the Census Race.
Now, we assume that each type of query in $W_u$ has one of the following forms:
\begin{itemize}
\item {\bf [individual]} a linear query $\sum_{b_a \in \Ba} w_{b_a} x_{u, (b_a, b_s)}$ for each value of the symmetric feature $b_s \in \Bs$, or
\item {\bf [aggregate]} $\sum_{b_a \in \Ba} w_{b_a} \sum_{b_s \in \Bs} x_{u, (b_a, b_s)}$.
\end{itemize}
and all queries of the same type are measured with the same noise scale. We assume a similar structure for the equality constraints.
Formally, the per-node workload matrix, covariance matrix and equality constraints have the following structure:
\begin{align}
    W = \begin{bmatrix}
        W_1 \otimes I_{\Bs}\\
        W_2 \otimes \ones_{\Bs}^\top
    \end{bmatrix},\ 
    \Sigma = \begin{bmatrix}
        \Sigma_1 \otimes I_{\Bs} & 0 \\
        0 & \Sigma_2
    \end{bmatrix},\ 
    \EQ = \begin{bmatrix}
        \EQ_1 \otimes I_{\Bs} \\
        \EQ_2 \otimes \ones_{\Bs}^\top
    \end{bmatrix},\ 
    \eq = \begin{bmatrix}
        \eq_1 \\
        \eq_2
    \end{bmatrix}
    \label{eq:symmetry-structure}
\end{align}
where $W_1 \in \R^{m_1 \times |\Ba|}$, $W_2 \in \R^{m_2 \times |\Ba|}$, $\Sigma_1 \in \R^{m_1 \times m_1}$, $\Sigma_2 \in \R^{m_2 \times m_2}$, $\EQ_1 \in \R^{e_1 \times |\Ba|}$, $\EQ_2 \in \R^{e_2 \times |\Ba|}$, $\eq_1 \in \R^{e_1 |\Bs|}$, $\eq_2 \in \R^{e_2}$. Submatrix $W_1$ corresponds to queries such as CENRACE, HISPANIC $\times$ CENRACE and DETAILED in \Cref{tab:per-node-workload} that slice the symmetric feature $b_s$ since tensoring by $I_{\Bs}$ coresponds to the operation of considering each possible value of the symmetric feature separately, and $W_2$ corresponds to queries such as TOTAL and VOTINGAGE that aggregate over the symmetric feature since tensoring by $\ones_{\Bs}$ corresponds to summing the estimates for all $b_s\in\Bs$.   The measurements $y$ can be written as
\[
y = \begin{bmatrix}
    y_1 \\ y_2
\end{bmatrix}
\quad \text{ where } y_1 = (W_1 \otimes I_{\Bs}) x^\star + \eta_1 \text{ and } y_2 = (W_2 \otimes \ones_{\Bs}^\top) x^\star + \eta_2
\]
correspond to the {\em individual} and {\em aggregate} measurements respectively as described above,
for independent noises $\eta_1$ and $\eta_2$ with covariances $\Sigma_1 \otimes I_{\Bs}$ and $\Sigma_2$ respectively.

These symmetries allow us to perform \Cref{alg:tree-post-processing} without having to explicitly materialize, or perform matrix operations over, covariance matrices that have size $|\cB| \times |\cB|$, but instead deal with covariances represented in terms of a pair of matrices of size $|\Ba| \times |\Ba|$. In particular, in this section we describe how to perform all the operations in \Cref{alg:tree-post-processing} in terms of these succinctly represented matrices. In the Census instantiation $|\cB| = 2016$ and $|\Ba| = 32$, so this yields an efficiency improvement on the order of a factor of $10^5$ for matrix operations like multiplication and inversion assuming a na\"ive cubic time implementation.

Let $d := |\Bs|$, and let $I$ denote the $d \times d$ identity matrix and $J$ denote the $d \times d$ all-ones matrix. Let $P_0 = I - \frac1d J$ and $P_1 = \frac1d J$; $P_1$ is a linear operator that projects a vector in $\R^d$ on the all-ones vector, and $P_0$ is a linear operator also known as the Centering Matrix that projects a vector in $\R^d$ on the orthogonal complement of the all-ones vector. A key property is that $P_0^2 = P_0$, $P_1^2 = P_1$ and $P_0 P_1 = P_1 P_0 = 0$.

\begin{definition}[Succinct Matrices]\label{def:succinct}
A matrix $M \in \R^{md \times kd}$ is said to be a {\em succinct matrix} if $M = A \otimes P_0 + B \otimes P_1$, for $A, B \in \R^{m \times k}$.
\end{definition}

We show how to efficiently implement \Cref{alg:tree-post-processing} by showing how to efficiently implement $\ECGLS_{W,\Sigma}^{\EQ,\eq}$ (\Cref{alg:ecgls}) and $\CombineEstimators$ (\Cref{alg:combine-estimates}) operations efficiently, when $W$, $\Sigma$, $\EQ$ and $\eq$ satisfy the structure in \Cref{eq:symmetry-structure}. In particular, we will show that all the covariances $\var_v$, $\varup_v$, $\varUp_v$, etc. are all succinct matrices.

\subsection{Algebra of Succinct Matrices}
\label{ssec:succinct-matrix-algebra}

We observe that various operations that we need to perform on succinct matrices can be performed efficiently without having to ever materialize the full matrix.

If $M = A \otimes P_0 + B \otimes P_1 \in \R^{kd \times md}$ is a succinct matrix, then for any vector $y \in \R^{md}$, we can compute $My$ without materializing $M$. This is because it is possible to compute each of $(A \otimes P_0) y$ and $(B\otimes P_1) y$ without materializing the Kronecker products via the ``vec trick'' (\Cref{fact:kronecker}).

If $M_1, M_2 \in \R^{kd \times kd}$ are succinct matrices, then $M_1 M_2$ is a succinct matrix which can be computed efficiently; namely if $M_1 = A_1 \otimes P_0 + B_1 \otimes P_1$ and $M_2 = A_2 \otimes P_0 + B_2 \otimes P_1$, then $M_1 M_2 = (A_1 A_2) \otimes P_0 + (B_1 B_2) \otimes P_1$.

If $M \in \R^{kd \times kd}$ is a succinct matrix, then $M^{\dagger}$ is also a succinct matrix that can computed efficiently, as shown by the following lemma.

\begin{lemma}\label{lem:uncompressed-inverse-v2}
If $M = A \otimes P_0 + B \otimes P_1 \in \R^{kd \times kd}$, then $M^\dagger = A^\dagger \otimes P_0 + B^\dagger \otimes P_1$. In particular, we have that $M$ is invertible if and only if both $A$ and $B$ are invertible.
\end{lemma}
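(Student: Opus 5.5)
The plan is to set $N := A^\dagger \otimes P_0 + B^\dagger \otimes P_1$ and verify directly the four Moore--Penrose conditions of \Cref{def:pseudo-inverse}. Uniqueness of the pseudoinverse then gives $M^\dagger = N$.

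The only tools needed are the mixed-product rule from \Cref{fact:kronecker} and the projector identities $P_0^2 = P_0$, $P_1^2 = P_1$, $P_0 P_1 = P_1 P_0 = 0$. Together these give the multiplication rule for succinct matrices noted just before the lemma:
\[
(A_1 \otimes P_0 + B_1 \otimes P_1)(A_2 \otimes P_0 + B_2 \otimes P_1) = A_1 A_2 \otimes P_0 + B_1 B_2 \otimes P_1 .
\]
Applying this rule twice yields
\[
MNM = AA^\dagger A \otimes P_0 + BB^\dagger B \otimes P_1 = A \otimes P_0 + B \otimes P_1 = M,
\]
and in the same way $NMN = N$. For the symmetry conditions, the rule gives $MN = AA^\dagger \otimes P_0 + BB^\dagger \otimes P_1$. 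Since $(X \otimes Y)^\top = X^\top \otimes Y^\top$, and $P_0, P_1$ are symmetric, we get $(MN)^\top = (AA^\dagger)^\top \otimes P_0 + (BB^\dagger)^\top \otimes P_1 = MN$, using conditions (iii) for $A$ and for $B$. Condition (iv) for $NM$ follows identically. Hence $N = M^\dagger$.

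For the invertibility claim, I would argue as follows.
\begin{itemize}
\item If $A$ and $B$ are invertible, then $A^{-1} \otimes P_0 + B^{-1} \otimes P_1$ is a two-sided inverse of $M$ by the product rule, because $I_k \otimes (P_0 + P_1) = I_{kd}$.
\item Conversely, suppose $M$ is invertible and $A$ is singular, say $Av = 0$ with $v \neq 0$. Pick a nonzero $w$ with $P_0 w = w$, i.e.\ $w \perp \ones$; this requires $d \ge 2$. Then $M(v \otimes w) = Av \otimes P_0 w + Bv \otimes P_1 w = 0$, a contradiction.
\item Similarly, if $B$ is singular with $Bv = 0$, take $w = \ones$, so that $P_0 \ones = 0$ and $P_1 \ones = \ones$. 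Then $M(v \otimes \ones) = 0$, again a contradiction.
\end{itemize}

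There is no serious obstacle. The one subtlety is the degenerate case $d = 1$. There $P_0 = 0$, so $M = B$ regardless of $A$, and the ``only if'' direction for $A$ fails. I would note that we assume $d \ge 2$, which holds in the Census setting with $d = 63$; equivalently, the statement is read with $A$ acting on the nontrivial range of $P_0$. The pseudoinverse formula itself holds for every $d$.
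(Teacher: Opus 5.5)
Your proposal is correct and matches the paper's proof, which likewise verifies the four Moore--Penrose conditions for $A^\dagger \otimes P_0 + B^\dagger \otimes P_1$ using the succinct product rule and the symmetry of $P_0, P_1$. Your explicit argument for the invertibility claim is also correct; the paper only asserts it, and your caveat that the ``only if'' direction for $A$ needs $d \ge 2$ (since $P_0 = 0$ when $d = 1$) is a valid point the paper omits.
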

\begin{proof}
Let $M' = A^\dagger \otimes P_0 + B^\dagger \otimes P_1$.
We verify all the properties of pseudoinverse as defined in \Cref{def:pseudo-inverse}.
\begin{align*}
MM' &~=~
    (A \otimes P_0 + B \otimes P_1) (A^\dagger \otimes P_0 + B^\dagger \otimes P_1)\\
    &~=~ A A^\dagger \otimes P_0^2 + BB^\dagger \otimes P_1^2\\
    &~=~  A A^\dagger \otimes P_0 + BB^\dagger \otimes P_1
\end{align*}
so 
\begin{align*}
MM'M &= A A^\dagger A \otimes P_0 + BB^\dagger B \otimes P_1 =  A \otimes P_0 + B \otimes P_1 = M \\
\intertext{and}
M'MM' &= A^\dagger A A^\dagger \otimes P_0 + B^\dagger B B^\dagger \otimes P_1 =  A^\dagger \otimes P_0 + B^\dagger \otimes P_1 = M'.
\end{align*}
Moreover, 
\[(MM')^* = (A A^\dagger \otimes P_0 + BB^\dagger \otimes P_1)^*
= (A A^\dagger)^* \otimes P_0^* + (BB^\dagger)^* \otimes P_1^* = MM',
\]
so $MM'$ is Hermitian. By an identical calculation $M'M$ is also Hermitian, hence we get that $M^\dagger = M'$.
\end{proof}

\subsection{\boldmath Efficient implementation of \texorpdfstring{$\ECGLS$}{ECGLS}}

To simulate \Cref{alg:ecgls}, we will show how to compute $\hSigma_{\GLS}$, $\hx_{\GLS}$, $\hSigma_{\ECGLS}$ and $\hx_{\ECGLS}$ efficiently, and in particular that $\hSigma_{\GLS}$ and $\hSigma_{\ECGLS}$ are succinct matrices, which will be crucial for later operations.

\paragraph{\boldmath Computing \texorpdfstring{$\hSigma_{\GLS}$}{hSigma\_GLS}.}
We have $\hSigma_{\GLS} = (W^\top \Sigma^{-1} W)^{-1}$. Under the structure in \eqref{eq:symmetry-structure}, we have
\[
\Sigma^{-1} = \begin{bmatrix}
    \Sigma_1^{-1} \otimes I & 0 \\
    0 & \Sigma_2^{-1}
\end{bmatrix}
\]
Thus, we have
\begin{align*}
\hSigma_{\GLS} &~=~ (W^\top \Sigma^{-1} W)^{-1} \\
&~=~ \prn{(W_1^\top \Sigma_1^{-1} W_1) \otimes I + (W_2^\top \Sigma^{-1} W_2) \otimes J}^{-1}\\
&~=~ \prn{(W_1^\top \Sigma_1^{-1} W_1) \otimes (P_0 + P_1) + (W_2^\top \Sigma^{-1} W_2) \otimes (d P_1)}^{-1}\\
&~=~ A \otimes P_0 + B \otimes P_1
\end{align*}
where $A := (W_1^\top \Sigma_1^{-1} W_1)^{-1}$ and $B := ((W_1^\top \Sigma_1^{-1} W_1) + d (W_2^\top \Sigma^{-1} W_2))^{-1}$.
Thus, we have established that $\hSigma_{\GLS}$ is a succinct matrix, and its components can be computed efficiently.

\paragraph{\boldmath Computing \texorpdfstring{$\hx_{\GLS}$}{hx\_GLS}.}
Recall $\hx_{\GLS} = (W^\top \Sigma^{-1} W)^{-1} W^\top \Sigma^{-1} y = \hSigma_{\GLS} W^\top \Sigma^{-1} y$. We have already shown that $\hSigma_{\GLS}$ is a succinct matrix. We can efficiently compute $W^\top \Sigma^{-1} y$ under \eqref{eq:symmetry-structure} as follows: 
\begin{align*}
W^\top \Sigma^{-1} y = (W_1 \Sigma_1^{-1} \otimes I) y_1 + (W_2 \Sigma_2^{-1} \otimes \ones_{d \times 1}) y_2
\end{align*}

Note that each term on the RHS can be computed efficiently without materializing the Kronecker products via the vec trick~(\Cref{fact:kronecker}). Having computed $x' := W^\top \Sigma^{-1} y$, we can again efficiently compute $\hx_{\GLS} = \hSigma_{\GLS} x'$ via the vec trick.

\paragraph{\boldmath Computing \texorpdfstring{$\hSigma_{\ECGLS}$}{hSigma\_ECGLS} and \texorpdfstring{$\hx_{\ECGLS}$}{hx\_ECGLS}.}

Recall that $\hSigma_{\ECGLS} = (I_{kd\times kd} - L \EQ) \hSigma_{\GLS}$ and $\hx_{\ECGLS} = (I_{kd \times kd} - LR) \hx_{\GLS} - Lr$, where $L = \hSigma_{\GLS} \EQ^\top (\EQ \hSigma_{\GLS} \EQ^\top)^{-1}$  (see \Cref{lem:ecgls}). We have already shown that $\hSigma_{\GLS}$ is a succinct matrix, so let $\hSigma_{\GLS} = A \otimes P_0 + B \otimes P_1$. We show the following lemma.

\begin{lemma}\label{lem:projection-structure}
Under the structure of \eqref{eq:symmetry-structure}, let $\hSigma_{\GLS} = A \otimes P_0 + B \otimes P_1$. For $\tilde{\EQ} = \begin{bmatrix} \EQ_1 \\ \EQ_2 \end{bmatrix}$ let
\begin{align}
    L_A &~:=~ A \EQ_1^\top (\EQ_1 A \EQ_1^\top)^{-1} \\
    L_B &~:=~ B \tilde{R}^\top (\tilde{R} B \tilde{R}^\top)^{-1} =: \begin{bmatrix}
        L_{B,1} & L_{B,2}
    \end{bmatrix}\\
    \tilde{A} &~:=~ A \EQ_1^\top (\EQ_1 A \EQ_1^\top)^{-1} \EQ_1 = L_A R_1\label{eq:tilde-A}\\
    \tilde{B} &~:=~ B \tilde{\EQ}^\top (\tilde{\EQ} B \tilde{\EQ}^\top)^{-1} \tilde{\EQ} = L_B \tilde{R}\label{eq:tilde-B}
\end{align}
where, $L_B$ has shape $|\Ba| \times (e_1 + e_2)$, and we define the blocks $L_{B,1}$ and $L_{B,2}$ to be of shape $|\Ba| \times e_1$ and $|\Ba| \times e_2$ respectively. Then, it holds that,
\begin{align}
    L ~=~ \hSigma_{\GLS} \EQ^\top (\EQ \hSigma_{\GLS} \EQ^\top)^{-1} &~=~ \begin{bmatrix}
    L_A \otimes P_0 + L_{B,1} \otimes P_1 & \frac1d L_{B,2} \otimes \ones_{d \times 1} \end{bmatrix}\\
    LR ~=~ \hSigma_{\GLS} \EQ^\top (\EQ \hSigma_{\GLS} \EQ^\top)^{-1} \EQ &~=~ \tilde{A} \otimes P_0 + \tilde{B} \otimes P_1
\end{align}
\end{lemma}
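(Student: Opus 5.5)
The plan is to avoid computing $(\EQ \hSigma_{\GLS} \EQ^\top)^{-1}$ directly. Instead, I would verify that the proposed matrix $\hat L := \begin{bmatrix} L_A \otimes P_0 + L_{B,1} \otimes P_1 & \frac1d L_{B,2} \otimes \ones_{d \times 1} \end{bmatrix}$ satisfies the defining linear equation
\[
\hat L \,(\EQ \hSigma_{\GLS} \EQ^\top) ~=~ \hSigma_{\GLS} \EQ^\top .
\]
Since $\EQ$ has full row rank and $\hSigma_{\GLS}$ is positive definite, $\EQ \hSigma_{\GLS} \EQ^\top$ is invertible. So this equation characterizes $L$ uniquely, giving $\hat L = L$. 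The identity for $L\EQ$ then follows by multiplying out.

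First I would compute the blocks of $\hSigma_{\GLS}\EQ^\top$ and $\EQ\hSigma_{\GLS}\EQ^\top$ in succinct form, using the rules of \Cref{fact:kronecker}. I would write $I = P_0 + P_1$ and use $P_0 \ones = 0$, $P_1 \ones = \ones$, $\ones^\top \ones = d$ and $\ones\ones^\top = J = dP_1$. This gives
\begin{align*}
\hSigma_{\GLS}\EQ^\top &= \begin{bmatrix} A\EQ_1^\top \otimes P_0 + B\EQ_1^\top \otimes P_1 & B\EQ_2^\top \otimes \ones \end{bmatrix},\\
\EQ\hSigma_{\GLS}\EQ^\top &= \begin{bmatrix} \EQ_1 A \EQ_1^\top \otimes P_0 + \EQ_1 B\EQ_1^\top \otimes P_1 & \EQ_1 B \EQ_2^\top \otimes \ones \\ \EQ_2 B \EQ_1^\top \otimes \ones^\top & d\, \EQ_2 B \EQ_2^\top \end{bmatrix}.
\end{align*}

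Next I would multiply $\hat L$ against each block column.

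In the first block column, the $P_0$-component is $L_A \EQ_1 A \EQ_1^\top = A\EQ_1^\top$, by the definition of $L_A$. The $P_1$-components combine into $(L_{B,1}\EQ_1 B \EQ_1^\top + L_{B,2} \EQ_2 B \EQ_1^\top)\otimes P_1$, where the second summand arises from $\frac1d \ones\ones^\top = P_1$.

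In the second block column the analogous terms are $L_{B,1}\EQ_1 B\EQ_2^\top \otimes \ones + L_{B,2}\EQ_2 B \EQ_2^\top \otimes \ones$. These use $P_0\ones=0$, $P_1\ones=\ones$, and the factor $\frac1d\cdot d=1$.

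The key observation is that the definition of $L_B$ gives $L_B \tilde\EQ B \tilde\EQ^\top = B\tilde\EQ^\top$. Splitting this identity into column blocks yields $L_{B,1}\EQ_1 B\EQ_j^\top + L_{B,2}\EQ_2 B\EQ_j^\top = B\EQ_j^\top$ for $j=1,2$. This exactly matches the blocks of $\hSigma_{\GLS}\EQ^\top$, which proves $\hat L = L$.

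Finally, I would compute $L\EQ$ directly:
\[
(L_A\otimes P_0 + L_{B,1}\otimes P_1)(\EQ_1\otimes(P_0+P_1)) + \tfrac1d (L_{B,2}\otimes\ones)(\EQ_2\otimes \ones^\top).
\]
This equals $L_A\EQ_1 \otimes P_0 + (L_{B,1}\EQ_1 + L_{B,2}\EQ_2)\otimes P_1 = \tilde A\otimes P_0 + \tilde B \otimes P_1$, using $\frac1d J = P_1$ and $L_B\tilde\EQ = L_{B,1}\EQ_1 + L_{B,2}\EQ_2$.

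The main obstacle is bookkeeping rather than ideas. One must track how the mixed blocks (which involve $\ones$ and $\ones^\top$) collapse onto the $P_1$ component, and keep the factor $d$ versus $\frac1d$ consistent. One must also confirm that $\EQ_1 A \EQ_1^\top$ and $\tilde\EQ B\tilde\EQ^\top$ are invertible, so that $L_A$ and $L_B$ are well defined. For this I would appeal to \Cref{lem:uncompressed-inverse-v2} together with invertibility of $\EQ\hSigma_{\GLS}\EQ^\top$. After conjugating by a suitable block-diagonal change of basis, that matrix decomposes into a $P_0$-part $\EQ_1 A\EQ_1^\top$ and a part equivalent to $\tilde\EQ B \tilde\EQ^\top$. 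I would check this decomposition carefully, or else simply assume these invertibility conditions as implicit in the statement.
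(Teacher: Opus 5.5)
Your proof is correct, and it takes a slightly different route from the paper's.

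\textbf{What the paper does.} It computes the same block form of $\EQ\hSigma_{\GLS}\EQ^\top$ and recognizes it as an instance of \Cref{lem:structure-block-inverse}, with $F=\EQ_1A\EQ_1^\top$ and $G=\tilde\EQ B\tilde\EQ^\top$. It then writes $(\EQ\hSigma_{\GLS}\EQ^\top)^{-1}$ explicitly in terms of $F^{-1}$ and the blocks $H_{ij}$ of $G^{-1}$. Finally it multiplies by $\EQ^\top$ and then by $\hSigma_{\GLS}$ on the left to read off $L$.

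\textbf{What you do.} You never form the inverse. You check the candidate against the normal equation $\hat L\,(\EQ\hSigma_{\GLS}\EQ^\top)=\hSigma_{\GLS}\EQ^\top$, using only $L_AF=A\EQ_1^\top$ and $L_BG=B\tilde\EQ^\top$, and conclude by uniqueness. Your block computations are right, including the collapse $\frac1d\ones\ones^\top=P_1$ in the first block column and the $\frac1d\cdot d$ cancellation in the second.

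\textbf{Comparison.} Your route is shorter and does not need the auxiliary lemma, which the paper proves by a separate verification of $MM^{-1}=I$. However, it needs the answer in hand. The paper's route derives $L$ constructively. It also yields a reusable succinct formula for inverting matrices of this mixed $P_0$/$P_1$/$\ones$ form, which the paper uses again in a footnote of \Cref{sec:top-down-pass} to invert $Q(\cdot)Q^\top$ for general $Q$ shaped like $\EQ$.

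\textbf{Invertibility.} The paper simply assumes $\EQ_1$ and $\tilde\EQ$ have full row rank. Your change-of-basis argument does establish invertibility of $F$ and $G$ from that of $\EQ\hSigma_{\GLS}\EQ^\top$ when $d\ge 2$: take an orthonormal basis of $\R^d$ containing $\ones/\sqrt d$, which splits the matrix into $d-1$ copies of $F$ plus a block congruent to $G$. It is this argument, not \Cref{lem:uncompressed-inverse-v2} itself, that does the work. That lemma concerns square succinct matrices, and $\EQ\hSigma_{\GLS}\EQ^\top$ is not one because of its aggregate rows and columns.
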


The proof of this lemma is technical, so we defer it to \Cref{subsec:proof-projection-structure}. Using this lemma, it is easy to see that $\hSigma_{\ECGLS}$ and $\hx_{\ECGLS}$ can be computed efficiently and that $\hSigma_{\ECGLS}$ is a succinct matrix---this is because $(I - LR)$ is a succinct matrix, so it is efficient to compute $(I - LR)\hSigma_{\GLS}$ and $(I - LR)\hx_{\GLS}$, and furthermore, $Lr = (L_A \otimes P_0 + L_{B,1} \otimes P_1) r_1 + \frac1d (L_{B,2} \otimes \ones_{d \times 1}) r_2$ can be computed efficiently using the vec trick~(\Cref{fact:kronecker}).

\subsection{\boldmath Efficient implementation of \texorpdfstring{$\CombineEstimators$}{CombineEstimators}}

Let $\hOmega_1 := A_1 \otimes P_0 + B_1 \otimes P_1$ and $\hOmega_2 := A_2 \otimes P_0 + B_2 \otimes P_1$.
$\CombineEstimators$ involves computing the following quantities, which we show how to compute efficiently:\begin{itemize}
\item $A \gets \hOmega_2 (\hOmega_1 + \hOmega_2)^\pinv$. We use multiplication and inverse algebra of succinct matrices (\Cref{lem:uncompressed-inverse-v2}) to observe that
\begin{align*}
    A &~=~ (A_2 (A_1 + A_2)^\pinv) \otimes P_0 + (B_2 (B_1 + B_2)^\pinv) \otimes P_1
\end{align*}
\item $B = I - A = (I - A_2 (A_1 + A_2)^\pinv) \otimes P_0 + (I - B_2 (B_1 + B_2)^\pinv) \otimes P_1$.
\item $\hz = A \hz_1 + B \hz_2$ can be computed efficiently using the vec trick (\Cref{fact:kronecker}).
\item $\hOmega = B \hOmega_2$ can be computed efficiently as the multiplication of two succinct matrices.
\end{itemize}

\subsection{Proof of \texorpdfstring{\Cref{lem:projection-structure}}{Lemma~\ref{lem:projection-structure}}}\label{subsec:proof-projection-structure}

The following lemma serves as a crucial step towards \Cref{lem:projection-structure}.

\begin{lemma}\label{lem:structure-block-inverse}
Let $F, G_{00} \in \R^{m \times m}, G_{01} \in \R^{n \times m}, G_{10} \in \R^{m \times n}, G_{11} \in \R^{n \times n}$ with $F$ and $G$ (as defined below) being non-singular. Let
\[
G := \begin{bmatrix} G_{00} & G_{01}\\ G_{10} & G_{11} \end{bmatrix}
\text{ and }
H := \begin{bmatrix} H_{00} & H_{01}\\ H_{10} & H_{11}\end{bmatrix} := G^{-1}
\text{ then, for } 
\]
\[
M = \begin{bmatrix}
    F \otimes P_0 + G_{00} \otimes P_1 & G_{01} \otimes \ones_{d \times 1} \\
    G_{10} \otimes \ones_{1 \times d} & d \cdot G_{11}
\end{bmatrix}
\text{ it holds that }
M^{-1} = \begin{bmatrix}
    F^{-1} \otimes P_0 + H_{00} \otimes P_1 & \frac1d H_{01} \otimes \ones_{d \times 1} \\
    \frac1d H_{10} \otimes \ones_{1 \times d} & \frac1d H_{11}
\end{bmatrix}.
\]

\end{lemma}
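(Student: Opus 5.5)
The plan is to verify directly that the proposed matrix, call it $M'$, satisfies $MM' = I$. Since $M$ is square, this also gives $M'M = I$ and $M$ invertible. We multiply block by block, using only the Kronecker product rule from \Cref{fact:kronecker} and a few identities involving $P_0$, $P_1$ and the all-ones vectors. The identities are $P_0^2=P_0$, $P_1^2=P_1$, $P_0P_1=0$, and
\[
P_0\ones_{d\times 1}=0, \qquad P_1\ones_{d\times 1}=\ones_{d\times 1}, \qquad \ones_{1\times d}P_0=0, \qquad \ones_{1\times d}P_1=\ones_{1\times d},
\]
\[
\ones_{1\times d}\ones_{d\times 1}=d, \qquad \ones_{d\times 1}\ones_{1\times d}=J=dP_1.
\]
The target identity is
\[
\begin{bmatrix} I_m\otimes I_d & 0\\ 0 & I_n\end{bmatrix}, \qquad I_m\otimes I_d = I_m\otimes P_0 + I_m\otimes P_1.
\]

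The four blocks of $MM'$ are computed as follows.

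\textbf{Top-left.} We have
\[
(F\otimes P_0+G_{00}\otimes P_1)(F^{-1}\otimes P_0+H_{00}\otimes P_1) + (G_{01}\otimes \ones_{d\times1})\bigl(\tfrac1d H_{10}\otimes\ones_{1\times d}\bigr).
\]
The first product reduces to $I\otimes P_0 + G_{00}H_{00}\otimes P_1$, since the cross terms vanish by $P_0P_1=0$. The second term equals $\frac1d G_{01}H_{10}\otimes J = G_{01}H_{10}\otimes P_1$. The sum is therefore $I\otimes P_0 + (G_{00}H_{00}+G_{01}H_{10})\otimes P_1 = I\otimes P_0 + I\otimes P_1$, using the block equation $GH=I$.

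\textbf{Top-right.} We have
\[
(F\otimes P_0+G_{00}\otimes P_1)\bigl(\tfrac1d H_{01}\otimes\ones_{d\times1}\bigr) + (G_{01}\otimes\ones_{d\times1})\bigl(\tfrac1d H_{11}\bigr).
\]
This becomes $\frac1d(G_{00}H_{01}+G_{01}H_{11})\otimes\ones_{d\times1}$, which is $0$ by $GH=I$. Here the $F$ term is killed by $P_0\ones=0$. The scalar-matrix product $G_{01}\otimes\ones$ times $H_{11}$ is read as $(G_{01}\otimes\ones)(H_{11}\otimes 1)=G_{01}H_{11}\otimes\ones$.

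\textbf{Bottom-left.} We have
\[
(G_{10}\otimes\ones_{1\times d})(F^{-1}\otimes P_0+H_{00}\otimes P_1) + d\,G_{11}\cdot\tfrac1d H_{10}\otimes\ones_{1\times d}.
\]
This equals $(G_{10}H_{00}+G_{11}H_{10})\otimes\ones_{1\times d}=0$.

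\textbf{Bottom-right.} We have
\[
(G_{10}\otimes\ones_{1\times d})\bigl(\tfrac1d H_{01}\otimes\ones_{d\times1}\bigr) + d\,G_{11}\cdot\tfrac1d H_{11}.
\]
This equals $G_{10}H_{01}+G_{11}H_{11}=I_n$, using $\ones_{1\times d}\ones_{d\times1}=d$.

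There is no real obstacle here; the argument is bookkeeping. The points that need care are the dimension conventions of the off-diagonal blocks. The block $G_{01}$ must be $m\times n$ for $G$ to be well-formed, so the stated shapes of $G_{01}$ and $G_{10}$ should be read as swapped, and the argument is insensitive to this. The other point is treating the scalar-matrix blocks $dG_{11}$ and $\frac1d H_{11}$ as Kronecker products with the $1\times1$ matrix $1$.

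Finally, the invertibility hypotheses are used as follows: $F$ invertible lets $F^{-1}$ appear, and $G$ invertible lets $H$ exist. Exhibiting a right inverse then proves that $M$ is nonsingular, so $M'$ is the unique inverse.
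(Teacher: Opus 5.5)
Your proof is correct and matches the paper's argument. Both verify $MM'=I$ directly by blockwise Kronecker multiplication, using $P_0P_1=0$, $P_0\ones_{d\times 1}=0$, $\ones_{d\times 1}\ones_{1\times d}=dP_1$, and the four block equations from $GH=I$; the paper only groups the $F$-terms and $G$-terms separately before multiplying. Your remark that the stated shapes of $G_{01}$ and $G_{10}$ are swapped is also right.
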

\begin{proof}
We verify that $M M^{-1} = I$ for the claimed $M^{-1}$. First note that $G H = I$, which is equivalent to
\begin{align*}
    \begin{matrix}
    G_{00} H_{00} + G_{01} H_{10} = I &
    G_{00} H_{01} + G_{01} H_{11} = 0 \\
    G_{10} H_{00} + G_{11} H_{10} = 0 &
    G_{10} H_{01} + G_{11} H_{11} = I    
    \end{matrix}
\end{align*}
Using the above and properties of $P_0$ and $P_1$ (namely, $P_0 \ones_{d \times 1} = 0$, $P_1 \ones_{d \times 1} = \ones_{d \times 1}$ and $\frac1d \ones_{d \times 1} \ones_{1 \times d} = P_1$), we get the following
\begin{align*}
& M M^{-1} \\
&~=~ 
\prn{\begin{bmatrix}
        F \otimes P_0 & 0 \\
        0 & 0
    \end{bmatrix}
    +
    \begin{bmatrix}
        G_{00} \otimes P_1 & G_{01} \otimes \ones_{d \times 1} \\
        G_{10} \otimes \ones_{1 \times d} & d \cdot G_{11}
    \end{bmatrix}
}
\prn{
    \begin{bmatrix}
        F^{-1} \otimes P_0 & 0 \\
        0 & 0
    \end{bmatrix}
    +
    \begin{bmatrix}
        H_{00} \otimes P_1 & \frac1d H_{01} \otimes \ones_{d \times 1} \\
        \frac1d H_{10} \otimes \ones_{1 \times d} & \frac1d H_{11}
    \end{bmatrix}
}\\
&~=~
\begin{bmatrix}
    I \otimes P_0 & 0 \\
    0 & 0
\end{bmatrix}
+
\begin{bmatrix}
    G_{00} \otimes P_1 & G_{01} \otimes \ones_{d \times 1} \\
    G_{10} \otimes \ones_{1 \times d} & d \cdot G_{11}
\end{bmatrix}
\begin{bmatrix}
    H_{00} \otimes P_1 & \frac1d H_{01} \otimes \ones_{d \times 1} \\
    \frac1d H_{10} \otimes \ones_{1 \times d} & \frac1d H_{11}
\end{bmatrix}\\
&~=~
\begin{bmatrix}
    I \otimes P_0 & 0 \\
    0 & 0
\end{bmatrix}
+
\begin{bmatrix}
    (G_{00} H_{00} + G_{01} H_{10}) \otimes P_1 &
    \frac1d (G_{00} H_{01} + G_{01} H_{11}) \otimes \ones_{d \times 1} \\
    (G_{10} H_{00} + G_{11} H_{10}) \otimes \ones_{1 \times d} &
    G_{10} H_{01} + G_{11} H_{11}
\end{bmatrix}\\
&~=~
\begin{bmatrix}
    I \otimes P_0 + I \otimes P_1 & 0 \\
    0 & I
\end{bmatrix} ~=~ I\,. \qedhere
\end{align*}
\end{proof}

\begin{proof}[Proof of \Cref{lem:projection-structure}]
Recall that, we have the following structure:
\[
R = \begin{bmatrix}
    R_1 \otimes I \\ R_2 \otimes \ones_{1 \times d}
\end{bmatrix}
\qquad \text{ and } \qquad 
\hSigma_{\GLS} = A \otimes P_0 + B \otimes P_1
\]
We expand $R \hSigma_{\GLS} R^\top$ as follows.
\begin{align*}
R \hSigma_{\GLS} R^\top &~=~ 
\begin{bmatrix}
    R_1 \otimes I \\ R_2 \otimes \ones_{1 \times d}
\end{bmatrix}
(A \otimes P_0 + B \otimes P_1)
\begin{bmatrix}
    R_1^\top \otimes I & R_2^\top \otimes \ones_{d \times 1}
\end{bmatrix}\\
&~=~ \begin{bmatrix}
    R_1 A R_1^\top \otimes P_0 + R_1 B R_1^\top \otimes P_1
    & R_1 B R_2^\top \otimes \ones_{d \times 1}\\
    R_2 B R_1^\top \otimes \ones_{1 \times d} &
    d \cdot R_2 B R_2^\top
\end{bmatrix}
\end{align*}
Thus, $R \hSigma_{\GLS} R^\top$ has precisely the structure required in \Cref{lem:structure-block-inverse} for
\[
F = R_1 A R_1^\top
\quad \text{and} \quad
G = \begin{bmatrix}
    R_1 B R_1^\top & R_1 B R_2^\top \\
    R_2 B R_1^\top & R_2 B R_2^\top
\end{bmatrix}
= \tilde{R} B \tilde{R}^\top
\ \ \text{ where }
\tilde{R} := \begin{bmatrix} R_1 \\ R_2 \end{bmatrix}
\]
Note that $F$ and $G$ are invertible assuming $R_1$ and $\begin{bmatrix}R_1 \\ R_2\end{bmatrix}$ are both full rank, which can be assumed without loss of generality.
Thus, applying \Cref{lem:structure-block-inverse}, we get
\begin{align*}
    (R \hSigma_{\GLS} R^\top)^{-1} &~=~
    \begin{bmatrix}
        (R_1 A R_1^\top)^{-1} \otimes P_0 + H_{00} \otimes P_1 & \frac1d H_{01} \otimes \ones_{d \times 1} \\
        \frac 1d H_{10} \otimes \ones_{1 \times d} & \frac1d H_{11}
    \end{bmatrix}\\
    \text{where}\quad
    \begin{bmatrix}
        H_{00} & H_{01}\\
        H_{10} & H_{11}
    \end{bmatrix} &~=~ (\tilde{R} B \tilde{R}^\top)^{-1}
\end{align*}
Next, we get that
\begin{align*}
&R^\top (R \hSigma_{\GLS} R^\top)^{-1}\\
&~=~ 
\begin{bmatrix}
    R_1^\top \otimes I &
    R_2^\top \otimes \ones_{d \times 1}
\end{bmatrix}
\begin{bmatrix}
    (R_1 A R_1^\top)^{-1} \otimes P_0 + H_{00} \otimes P_1 & \frac1d H_{01} \otimes \ones_{d \times 1} \\
    \frac 1d H_{10} \otimes \ones_{1 \times d} & \frac1d H_{11}
\end{bmatrix}\\
&~=~ \begin{bmatrix}
    R_1^\top (R_1 A R_1^\top)^{-1} \otimes P_0 & 0
\end{bmatrix}
+ \begin{bmatrix}
    (R_1^\top H_{00} + R_2^\top H_{10}) \otimes P_1 &
    \frac1d (R_1^\top H_{01} + R_2^\top H_{11}) \otimes \ones_{d \times 1}
\end{bmatrix}
\end{align*}
Thus, we get
\begin{align*}
&L := \hSigma_{\GLS} R^\top (R \hSigma_{\GLS} R^\top)^{-1}\\
&~=~ (A \otimes P_0 + B \otimes P_1) \begin{bmatrix}
    R_1^\top (R_1 A R_1^\top)^{-1} \otimes P_0 & 0
\end{bmatrix}\\
&~~~~+~ (A \otimes P_0 + B \otimes P_1) \begin{bmatrix}
    (R_1^\top H_{00} + R_2^\top H_{10}) \otimes P_1 &
    \frac1d (R_1^\top H_{01} + R_2^\top H_{11}) \otimes \ones_{d \times 1}
\end{bmatrix}\\
&~=~ \begin{bmatrix}
    A R_1^\top (R_1 A R_1^\top)^{-1} \otimes P_0 & 0
\end{bmatrix}
+ \begin{bmatrix}
    B (R_1^\top H_{00} + R_2^\top H_{10}) \otimes P_1 &
    \frac1d B (R_1^\top H_{01} + R_2^\top H_{11}) \otimes \ones_{d \times 1}
\end{bmatrix}\\
&~=~ \begin{bmatrix}
    L_A \otimes P_0 + L_{B,1} \otimes P_1 & \frac1d L_{B,2} \otimes \ones_{d \times 1}
\end{bmatrix}
\end{align*}
where $L_A := A R_1^\top (R_1 A R_1^\top)^{-1}$ and $L_B = \begin{bmatrix}
    L_{B,1} & L_{B,2}
\end{bmatrix} := B \tilde{R}^\top (\tilde{R} B \tilde{R}^\top)^{-1}$.
Finally, it is now immediate to see that
\begin{align*}
LR &~:=~ \hSigma_{\GLS} R^\top (R \hSigma_{\GLS} R^\top)^{-1} R\\
&~=~ \begin{bmatrix}
    L_A \otimes P_0 + L_{B,1} \otimes P_1 & \frac1d L_{B,2} \otimes \ones_{d \times 1}
\end{bmatrix} \begin{bmatrix}
    R_1 \otimes I \\
    R_2 \otimes \ones_{1 \times d}
\end{bmatrix}\\
&~=~ \tilde{A} \otimes P_0 + \tilde{B} \otimes P_1
\end{align*}
where $\tilde{A} = L_A R_1 = A R_1^\top (R_1 A R_1^\top)^{-1} R_1$ and $\tilde{B} = L_B \tilde{R} = B \tilde{R}^\top (\tilde{R} B \tilde{R}^\top)^{-1} \tilde{R}$.
\end{proof}
 \section{Handling Inequality \& Integrality Constraints}
\label{sec:top-down-pass}
As discussed earlier \Cref{alg:tree-post-processing} provides the BLUE for $x^\star$. However, that algorithm only takes into account the NMF estimates and equality constraints. In order to also account for the inequality and integrality constraints, we step outside the realm of {\em best linear unbiased estimators}, and provide a heuristic modification to \Cref{alg:tree-post-processing} using mixed integer programming (using Gurobi \cite{gurobi} or another solver) on the final top-down pass. We refer to our algorithm as $\mathsf{BlueDown}$ (\Cref{alg:bluedown}); our naming follows the Census TopDown algorithm, which also uses Gurobi for mixed integer programming~\cite{AbowdAC+22}.

In particular, in $\mathsf{BlueDown}$ uses the same per-node processing and bottom-up pass phases as the linear estimator~(\Cref{alg:tree-post-processing}). These phases provide us with estimates $\hz_v, \zup_v, \zUp_v$ for each node $v$, as well as their corresponding covariances $\var_v, \varup_v, \varUp_v$. We replace the top-down pass phase with a sequence of mixed-integer programming problems that can be solved using Gurobi. If there are no inequality or integrality constraints, the resulting quadratic programming problems are equivalent to \Cref{alg:tree-post-processing}, but this is no longer the case when these additional constraints are included.

In the top-down pass in \Cref{alg:tree-post-processing}, the estimator $\hx_v$ is constructed as a function of the estimator $\hzDown_p$ for the parent of $v$ and $\hzUp_u$ for all children of $p$. 
While in \Cref{alg:tree-post-processing} we compute $\hx_v$ directly for each individual child $v$ of $p$ using invocations of $\CombineEstimators$, this is equivalent to performing the following joint optimization using the final postprocessed parent estimate $\hx_p$:

\begin{align} \label{eq:joint-opt} 
(\hx_{v})_{v \in \child(p)} \gets
& \argmin_{x_v \in \R^{|\cB|} \, : \, v \in \child(p)} \ \ 
\sum_{v \in \child(p)}
(x_v - \zUp_v)^\top (\varUp_v)^{\dagger}
(x_v - \zUp_v)\\
\text{subject to:}\quad 
& \EQ_v x_v = \eq_u \quad \text{for all } v \in \child(p), \nonumber\\
& \textstyle \sum_{v \in \child(p)} x_v = \hx_p \nonumber
\end{align}

Since pseudoinverse is often numerically unstable to compute, we replace $(\varUp_v)^\dagger$ by $(\varUp_v + \alpha \EQ_v^\top \EQ_v)^{-1}$ for any constant $\alpha > 0$, where we can assume without loss of generality that $\varUp_v + \alpha \EQ_v^\top \EQ_v$ is invertible. Invertibility holds because any equality constraint on $x_v$ that is collectively implied by the equality constraints on $x_u$ for all descendants $u$ of $v$ can be assumed to be included in $\EQ_v x_v = \eq_v$, and so all the eigenvectors of $\varUp_v$ with zero eigenvalues are in the rowspan of $\EQ_v$.
This does not change the solution, since we are constraining $x_v$ in the directions in which the objective does not impose any penalty.

\subsection{Multi-Phase Least Square Estimator and Rounder}\label{subsec:multi-phase-lsq}

Our approach is to modify the joint optimization objective~\eqref{eq:joint-opt} to incorporate the inequality and integrality constraints (and use a mixed integer program solver). While in principle it is possible to incorporate all the constraints directly into the above program, the Census TopDown Algorithm instead takes a {\em multi-pass} approach, where they consider a sequence of optimization problems. Each successive optimization problem computes  a partial solution taking into account only part of the objective or only some of the constraints, which is refined in subsequent problems to obtain the full estimator $(\hx_u)_{u \in \child(p)}$. We follow essentially the same approach, with the key difference being that we use our bottom-up estimates $\hzUp_v$ as the starting point, instead of the raw estimates $\hz_v$. Since the bottom-up estimates $\hzUp_v$ are not independent (whereas the raw estimates $\hz_v$ are independent), this requires extending the approach of \cite{AbowdAC+22} to work with non-diagonal covariance matrices. This extension raises additional computational challenges that we address below. The modified least-squares estimator and rounder are described in \Cref{alg:l2-multiplass,alg:rounder-multiplass}, and the multi-pass routine that may invoke each multiple times is described in \Cref{alg:multipass}.

While the algorithm can be flexibly configured with any set of increasingly granular refined solutions,\footnote{To avoid introducing additional notation, \Cref{alg:multipass} implicitly assumes that the least squares phase and rounder phase use the same passes and projection operators, which is the case in the configuration we use. One can easily modify the algorithm to allow different sets of projection operators in the two phases.} our experiments use the same configuration as the 2020 US Census \cite{AbowdAC+22} and use either one or two passes for both the least-squares and the rounder routines at each level of the tree. The \total pass uses the all-ones vector projection operator $\ones_\un^\top$ and minimizes the error of the total count for each child geocode. The \full pass uses the identity projection operator $I_\un$ and minimizes the overall error of the full estimate and covariance matrix. This is analogous to the second pass used in the 2020 Census \cite{AbowdAC+22}, which considered all input queries, but allows generalization to non-diagonal covariance matrices. Again matching the production setting used for the 2020 Census \cite{AbowdAC+22}, we use a single \full pass $\mathscr{Q}=(I_\un)$ for the US (root) and Block (leaf) levels, and two (\total and \full) passes $\mathscr{Q} = (\ones_\un^\top, I_\un)$ for all other levels of the tree (State, County, Tract, and Block Group). 

\begin{remark}
We note that the \TopDown algorithm included a slack term $\tau$ in the multi-pass least-squares estimation step to ensure the feasibility of future solutions \cite{AbowdAC+22}. We find that this slack term is unnecessary in the production version of the algorithm, and so we omit it.
\end{remark}

\begin{algorithm*}[t]
\caption{$\sfLeastSquares_{Q,B}$}
\label{alg:l2-multiplass}
\begin{algorithmic}
\STATE {\bf Params:} Projection operator $Q \in \R^{q \times \un}$, a set of child nodes $S\subset \cT$, and a consistency operator $B \in \R^{b \times \un}$ (optional).
\STATE {\bf Input:} Parent node estimate $\hx_p$ (optional),
current estimates and covariances $(\hz_v, \hOmega_v)$ for all $v \in S$, associated equality and inequality constraints $\mathscr{C}_S = (\EQ_v, \eq_v, \IEQ_v, \ieq_v)_{v\in S}$, and consistency estimates $z^B \in (\R^\un)^{|S|}$ (optional).
\STATE {\bf Output:} Estimators $(\tilde{z}_v)_{v\in\child(p)}$
\STATE \STATE Let $P_v = (Q(\varUp_v + \alpha \EQ_v^\top \EQ_v)Q^\top)^{-1}$ for each $v\in\child(p)$, where $\alpha > 0$ is a stability parameter.
\STATE \RETURN $(\tilde{z}_v)_{v \in \child(p)}$, obtained via solving the following quadratic program with linear constraints:
    \begin{align}\label{eq:l2-general-multipass}
    (\tilde{z}_{v})_{v \in \child(p)} \gets
    & \argmin_{z_v \in \R_{\ge 0}^{|\cB|} \, : \, v \in \child(p)} 
    \ \sum_{v \in \child(p)}
    (Q (z_v - \hz_v))^\top
    P_v
    (Q (z_v - \hz_v))\\
    \text{subject to:} \quad
    & z_v \ge \mathbf{0} \text{,} \nonumber\\     & \EQ_v z_v = \eq_v \text{,} \nonumber\\
    & \IEQ_v z_v \leq \ieq_v \text{,} \nonumber\\
    & \textstyle \sum_{v \in \child(p)} z_v = \hx_p \qquad \text{(if } \hx_p \text{ is provided),} \nonumber\\
    & \textstyle B z_v = B z^B \qquad\qquad \text{(if } B, z^B \text{ are provided)} \nonumber
    \end{align}
\end{algorithmic}
\end{algorithm*}

\begin{algorithm*}[h]
\caption{$\sfRounder_{Q,B}$}
\label{alg:rounder-multiplass}
\begin{algorithmic}
\STATE {\bf Params:} Projection operator $Q \in \R^{q \times \un}$, a set of child nodes $S\subset \cT$, and a consistency operator $B \in \R^{b \times \un}$ (optional).
\STATE {\bf Input:} Parent node estimate $\hx_p$ (optional),
current estimates $\hz_v$ for all $v \in S$, associated equality and inequality constraints $\mathscr{C}_S = (\EQ_v, \eq_v, \IEQ_v, \ieq_v)_{v\in S}$, and consistency estimates $z^B \in (\R^\un)^{|S|}$ (optional).
\STATE {\bf Output:} Estimators $(\tilde{z}_v)_{v\in\child(p)}$
\STATE \RETURN $(\tilde{z}_v)_{v \in \child(p)} \gets \floor{\hz_v} + y_v$, obtained via solving the following integer linear program:
    \begin{align}\label{eq:l2-general-rounding}
    (y_{v})_{v \in \child(p)} \gets
    & \argmin_{y_v \in \{0, 1\}^{|\cB|} \, : \, v \in \child(p)}
    \ \sum_{v \in \child(p)}
    \ones^\top | Q(\hz_v - \hz_v')|\\
    \text{subject to:} \quad
        & \hz_v' = (\floor{\hz_v} + y_v) \text{,} \nonumber \\
    & \EQ_v \hz_v' = \eq_v \text{,}\nonumber \\
    & \IEQ_v \hz_v' \leq \ieq_v \text{,} \nonumber \\
    & \textstyle \sum_{v \in \child(p)} \hz_v' = \hx_p \qquad \text{(if } \hx_p \text{ is provided),}  \nonumber \\
    & \textstyle B \hz_v' = B z^B \qquad\qquad \text{(if } B, z^B \text{ are provided)} \nonumber
    \end{align}
\end{algorithmic}
\end{algorithm*}

\begin{algorithm*}[h]
\caption{$\sfMultiPass_{\mathscr{Q}}$}
\label{alg:multipass}
\begin{algorithmic}
\STATE {\bf Params:} Projection operators $\mathscr{Q} = (Q_1,\ldots,Q_\ell)$, where $Q_1 \in \R^{q_1 \times d}$, $Q_2 \in \R^{q_2 \times d}$, $\ldots$, $Q_\ell \in \R^{q_\ell \times d}$.
\STATE {\bf Input:} Parent node estimate $\hx_p$ (optional), a set of child nodes $S\subset \cT$,
estimators and covariances $(\hzUp_v, \varUp_v)$ for all $v \in S$, and associated equality and inequality constraints $\mathscr{C}_S = (\EQ_v, \eq_v, \IEQ_v, \ieq_v)_{v\in S}$.
\STATE {\bf Output:} Estimators $(\tilde{z}_v)_{v\in\child(p)}$
\STATE \STATE $B, \tilde{z}^{(0)} \gets \emptyset, \emptyset$
\FOR{$j = 1, \ldots, \ell$}
    \STATE $(\tilde{z}_v^{(j)})_{v\in S} \gets \sfLeastSquares_{Q_j, S, B}(\hx_p, (\hzUp_v, \varUp_v)_{v\in S}, \constraints_S, (\tilde{z}^{(j-1)}_v)_{v\in S})$ 
        \STATE $B \gets \left(\begin{array}{cc}B\\Q_j\end{array}\right)$
    \ENDFOR

\STATE $\tilde{z}^\LS \gets \tilde{z}^{(\ell)}$
\STATE $B, \tilde{z}^{(0)} \gets \emptyset, \emptyset$
\FOR{$j = 1, \ldots, \ell$}
    \STATE $(\tilde{z}_v^{(j)})_{v\in S} \gets \sfRounder_{Q_j, S, B}(\hx_p, (\tilde{z}^\LS_v)_{v\in S}, \constraints_S, (\tilde{z}^{(j-1)}_v)_{v\in S})$ 
        \STATE $B \gets \left(\begin{array}{cc}B\\Q_j\end{array}\right)$
    \ENDFOR
\RETURN $\tilde{z}^{(\ell)}$
\end{algorithmic}
\end{algorithm*}

\subsection{Efficient Operations Leveraging Succinct Matrices}
Two optimizations substantially improve the time required to solve the least-squares optimization problems: using variable substitutions to reduce the number of terms in the objective function, and leveraging succinct matrices to compute matrix products more efficiently.
The time it takes to run a single iteration of an iterative solver such as Gurobi is generally proportional to the complexity of the objective function.
A direct encoding of the optimization problem in \Cref{alg:l2-multiplass} even in the special case $Q=I_\un$ (i.e.\ the \full pass) would have $4\un^2$ monomial objective terms for each child node, which would incur a large slowdown compared to the simpler optimization problem used in the the 2020 Census, whose objective function size is a small multiple of $\un$ rather than $\un^2$. However, using auxiliary variables we can encode the objective function of \Cref{alg:l2-multiplass} in an alternate form so that it also has roughly $\un$ objective terms, yielding an end-to-end algorithm with very similar runtime to the 2020 Census TDA.

We first illustrate with the \full pass of \Cref{alg:l2-multiplass}, where $Q=I_\un$. In this case the least-squares objective terms for a single child node $v$ are $(z_v - \hz_v)^\top P_v (z_v - \hz_v)$, where $P_v = (Q(\varUp_v + \alpha \EQ_v^\top \EQ_v)Q^\top)^{-1} = (\varUp_v + \alpha \EQ_v^\top \EQ_v)^{-1}$.
Define auxiliary variables $y_v = P_v (z_v - \hz_v)$. Using these $\un$ extra variables, the objective terms can now be written as $(z_v - \hz_v)^\top y_v$, i.e.\ using only $2\un$ objective terms at the cost of $\un$ additional linear equality constraints. With general $Q \in \R^{q\times \un}$ the objective terms for a single child node $v$ are 
$(Q(z_v - \hz_v))^\top P_v (Q(z_v - \hz_v))$. 
We can similarly define $2q$ auxiliary variables 
$y_v = P_v Q(z_v - \hz_v)$
and $w_v = Q(z_v - \hz_v)$ and write the objective terms as $w_v^\top y_v$, i.e.\ using only $q$ objective terms at the cost of $2q$ additional linear constraints.

Further efficiency improvements can be obtained leveraging the succinct matrix structure of $\varUp$, $Q$ and $\EQ$. Recall from \Cref{sec:symmetries} that $\varUp_v$ is of the form $\varUp_v = A_0 \otimes P_0 + A_1 \otimes P_1$ and $\EQ_v$ is of the form 
\[\EQ_v = \begin{bmatrix}
        \EQ_1 \otimes I_{\Bs} \\
        \EQ_2 \otimes \ones_{\Bs}^\top
\end{bmatrix}\]
for some $A_0, A_1 \in \R^{|\Ba| \times |\Ba|}$ and $\EQ_1 \in \R^{e_1 \times |\Ba|}, \EQ_2\in \R^{e_2 \times |\Ba|}$, where $P_0 = I_{\Bs} - \frac{1}{d} J_{\Bs}$ and $P_1 = \frac{1}{d} J_{\Bs}$.
Then $\EQ_v^\top \EQ_v = \EQ_1^\top \EQ_1 \otimes I_{\Bs} + \EQ_2^\top \EQ_2 \otimes J_{\Bs}$,
so $\varUp_v + \alpha \EQ_v^\top \EQ_v$ is also a succinct
matrix of the form $C_0 \otimes P_0 + C_1 \otimes P_1$ for $C_0, C_1 \in \R^{|\Ba| \times |\Ba|}$ given by
\begin{align*}
C_0 &= A_0 + \alpha \EQ_1^\top \EQ_1 \\
C_1 &= A_1 + \alpha\EQ_1^\top \EQ_1 + d \alpha\EQ_2^\top \EQ_2.
\end{align*}

For $Q = \ones_\un^\top = \ones_{\Ba}^\top \otimes \ones_{\Bs}^\top$ as in the \total pass,
$Q(\varUp_v + \alpha \EQ_v^\top \EQ_v)Q^\top$ is simply the
sum of the entries of the matrix $C_0 \otimes P_0 + C_1 \otimes P_1$, so its inverse is the scalar 
\begin{align*}
P_v = (Q(\varUp_v + \alpha \EQ_v^\top \EQ_v)Q^\top)^{-1}
&= (\ones_{\Ba}^\top C_0 \ones_{\Ba} \otimes \ones_{\Bs}^\top P_0 \ones_{\Bs} + \ones_{\Ba}^\top C_1 \ones_{\Ba} \otimes \ones_{\Bs}^\top P_1 \ones_{\Bs})^{-1}\\
&= (|\Bs| \cdot \ones_{\Ba}^\top C_1 \ones_{\Ba})^{-1}.
\end{align*}
Similarly, if $Q = I_\un = I_{\Ba} \otimes I_{\Bs}$ as in the \full pass, then $P_v = (Q(\varUp_v + \alpha \EQ_v^\top \EQ_v)Q^\top)^{-1}$
is simply $(C_0 \otimes P_0 + C_1 \otimes P_1)^{-1} = C_0^{-1} \otimes P_0 + C_1^{-1} \otimes P_1$, as shown in \Cref{ssec:succinct-matrix-algebra}. 
Consequently, for $Q\in \{\ones_\un^\top, I_\un\}$ as in the \total and \full passes, we can compute $P_v$ efficiently in the succinct representation without expanding the Kronecker products.\footnote{More generally, if $Q$ is of the same form as $R$, i.e.\ if $Q=\begin{bmatrix}
        Q_1 \otimes I_{\Bs} \\
        Q_2 \otimes \ones_{\Bs}^\top
\end{bmatrix}$ for some $Q_1\in \R^{q_1 \times |\Ba|}, Q_2\in \R^{q_2 \times |\Ba|}$, then we can use \Cref{lem:structure-block-inverse} to invert $Q(\varUp_v + \alpha \EQ_v^\top \EQ_v)Q^\top$ efficiently.}

We describe our final end-to-end algorithm in \Cref{alg:bluedown}.

\begin{algorithm*}[h]
\caption{$\sfBlueDown_{\cT, \mathscr{Q}}$} \label{alg:bluedown}
\begin{algorithmic}
\STATE {\bf Params:} Tree $\cT$ with root $\rt$; projection operators $\mathscr{Q}_q = (Q^{(q)}_1, \ldots, Q^{q}_{\ell_q})$ for each level $q$ of $\cT$; and parameters $W_v$, $\Sigma_v$, $\EQ_v$, $\eq_v, \IEQ_v, \ieq_v$ for each $v \in \cT$.
\STATE {\bf Input:} $(y_v)_{v \in \cT}$: measurements $y_v \sim \GLR_{W_v, \Sigma_v}(x^\star_v)$.
\STATE {\bf Output:} $(\hx_v)_{v\in\cT}$: post-processed estimates for all $v$.
\STATE \STATE \AlgCommentInline{Per-node processing}
\FOR{each node $v\in \cT$}
    \STATE $\triangleright\ (\hz_v, \var_v) \gets \ECGLS_{W_v, \Sigma_v}^{\EQ_v, \eq_v}(y_v)$
    \AlgCommentCref{alg:ecgls}
    \STATE \ENDFOR
\STATE \AlgCommentInline{Bottom-up pass}
\FOR{leaf $v\in \cT$}
    \STATE $\triangleright\ (\hzUp_v, \varUp_v) \gets (\hz_v, \var_v)$ 
\ENDFOR
\FOR{each internal node $v$ from largest to smallest depth}
    \STATE $\triangleright\ (\hzup_v, \varup_v) \gets \left(
        \sum_{u\in\child(v)}\hzUp_u,  \enspace
        \sum_{u\in\child(v)} \varUp_u \right)$ 
    \STATE $\triangleright\ (\hzUp_v, \varUp_v) \gets \CombineEstimators((\hz_v, \var_v), (\hzup_v, \varup_v))$ 
    \AlgCommentCref{alg:combine-estimates}
\ENDFOR
\STATE \STATE \AlgCommentInline{Top-down pass}
\FOR{root $\rt$}
    \STATE $\triangleright\ $ $\constraints_\rt \gets (\EQ_\rt, \eq_rt, \IEQ_\rt, \ieq_\rt)$
    \STATE $\triangleright\ $ $\hx_{\rt} \gets \mathsf{MultiPass}_{\mathscr{Q}_0}(\emptyset, \{\rt\}, (\zUp_\rt, \varUp_\rt), \constraints_\rt)$
        \ENDFOR
\FOR{each internal node $p$ from smallest to largest depth}
    \STATE $\triangleright\ $ $q \gets \text{depth}(p) + 1$
    \STATE $\triangleright\ $ $\constraints \gets (\EQ_v, \eq_v, \IEQ_v, \ieq_v)_{v\in \child(p)}$
    \STATE $\triangleright\ $ $(\hx_{v})_{v \in \child(p)} \gets \mathsf{MultiPass}_{\mathscr{Q}_{q}}
    (\hx_p, \child(p), (\zUp_v, \varUp_v)_{v\in \child(p)}, \mathscr{C})$
            \STATE \ENDFOR
\RETURN $(\hx)_{v\in \cT}$
\end{algorithmic}
\end{algorithm*}
 
\newpage
\clearpage 
\bibliographystyle{alpha}
\bibliography{main.bbl}
\newpage
\appendix

\section{Supplemental results}
\label{sec:supplemental-plots}
See \Cref{tab:first-four-queries,tab:last-four-queries} and \Cref{fig:more-general-query-relative-errors,fig:more-levels-bias,fig:more-levels-ridgeline}.

\begin{table}[h]
\centering
\begin{tabular}{|ll|r@{\hspace{1em}$\pm$\,}r|r@{\hspace{1em}$\pm$\,}r|}
\hline
Geographic level & Query & \multicolumn{2}{c|}{\BlueDown error} & \multicolumn{2}{c|}{\TopDown error} \\
\hline
US & Total & 0 & 0 & 0 & 0 \\
State & Total & 0 & 0 & 0 & 0 \\
County & Total & 1.58 & 0.02 & 1.75 & 0.02 \\
Tract & Total & 1.53 & 0.00 & 1.92 & 0.00 \\
Block group & Total & 13.68 & 0.02 & 13.76 & 0.02 \\
Block & Total & 3.55 & 0.00 & 3.56 & 0.00 \\
\hline
US & Housing type & 261.60 & 49.39 & 271.20 & 63.91 \\
State & Housing type & 103.40 & 4.06 & 109.26 & 2.93 \\
County & Housing type & 25.47 & 0.15 & 36.35 & 0.21 \\
Tract & Housing type & 4.81 & 0.01 & 8.49 & 0.01 \\
Block group & Housing type & 14.79 & 0.02 & 15.68 & 0.02 \\
Block & Housing type & 3.60 & 0.00 & 3.63 & 0.00 \\
\hline
US & Voting age & 49.80 & 36.96 & 45.20 & 30.18 \\
State & Voting age & 26.13 & 3.77 & 28.02 & 3.82 \\
County & Voting age & 18.94 & 0.22 & 20.89 & 0.27 \\
Tract & Voting age & 12.42 & 0.03 & 16.17 & 0.06 \\
Block group & Voting age & 19.93 & 0.02 & 20.54 & 0.01 \\
Block & Voting age & 4.85 & 0.00 & 4.86 & 0.00 \\
\hline
US & Hispanic & 48.00 & 29.43 & 46.00 & 30.25 \\
State & Hispanic & 26.01 & 2.15 & 27.65 & 2.43 \\
County & Hispanic & 17.22 & 0.30 & 21.44 & 0.38 \\
Tract & Hispanic & 7.32 & 0.02 & 8.50 & 0.03 \\
Block group & Hispanic & 18.71 & 0.02 & 18.93 & 0.02 \\
Block & Hispanic & 4.57 & 0.00 & 4.58 & 0.00 \\
\hline
\end{tabular}
\caption{Mean absolute error in total, housing type, voting age, and Hispanic queries averaged across
geographic units by geographic level, and the standard deviation of these average error values across the ten replicate runs. The state and US total populations are invariant under both algorithms and have zero error.}
\label{tab:first-four-queries}
\end{table}

\begin{table}
\centering
\begin{tabular}{|ll|r@{\hspace{1em}$\pm$\,}r|r@{\hspace{1em}$\pm$\,}r|}
\hline
Geographic level & Query & \multicolumn{2}{c|}{\BlueDown error} & \multicolumn{2}{c|}{\TopDown error} \\
\hline
US & Race & 644.00 & 69.09 & 647.40 & 62.58 \\
State & Race & 397.44 & 5.32 & 420.31 & 5.37 \\
County & Race & 123.19 & 0.30 & 152.81 & 0.34 \\
Tract & Race & 35.66 & 0.02 & 40.90 & 0.02 \\
Block group & Race & 47.19 & 0.02 & 48.35 & 0.02 \\
Block & Race & 7.83 & 0.00 & 7.86 & 0.00 \\
\hline
US & Hispanic x race & 918.80 & 57.79 & 922.60 & 53.21 \\
State & Hispanic x race & 575.77 & 7.58 & 609.42 & 9.06 \\
County & Hispanic x race & 159.96 & 0.23 & 199.99 & 0.35 \\
Tract & Hispanic x race & 44.35 & 0.02 & 51.08 & 0.02 \\
Block group & Hispanic x race & 57.53 & 0.02 & 59.02 & 0.02 \\
Block & Hispanic x race & 8.73 & 0.00 & 8.77 & 0.00 \\
\hline
US & V x H x R & 1346.80 & 67.47 & 1349.80 & 64.54 \\
State & V x H x R & 829.50 & 6.50 & 871.41 & 8.94 \\
County & V x H x R & 222.60 & 0.24 & 260.94 & 0.29 \\
Tract & V x H x R & 78.76 & 0.03 & 100.73 & 0.03 \\
Block group & V x H x R & 78.12 & 0.03 & 82.31 & 0.03 \\
Block & V x H x R & 10.41 & 0.00 & 10.47 & 0.00 \\
\hline
US & Detailed & 3248.60 & 102.51 & 3290.40 & 107.07 \\
State & Detailed & 1465.02 & 8.53 & 1583.16 & 9.28 \\
County & Detailed & 275.85 & 0.23 & 335.15 & 0.20 \\
Tract & Detailed & 83.89 & 0.03 & 110.75 & 0.04 \\
Block group & Detailed & 80.14 & 0.03 & 85.59 & 0.03 \\
Block & Detailed & 10.45 & 0.00 & 10.53 & 0.00 \\
\hline
\end{tabular}
\caption{Mean absolute error in race, Hispanic x race, voting age x Hispanic x race, and detailed (i.e.\ housing type x voting age x Hispanic x race) queries averaged across
geographic units by geographic level, and the standard deviation of these average error values across the ten replicate runs.}
\label{tab:last-four-queries}
\end{table}

\begin{figure}[hb]
\centering
\setlength{\tabcolsep}{2pt}
\includegraphics[height=\subfigheighttwowidealt]{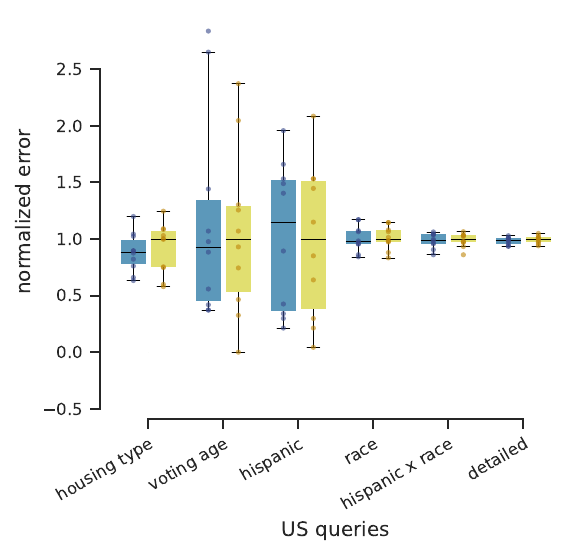}
\includegraphics[height=\subfigheighttwowidealt]{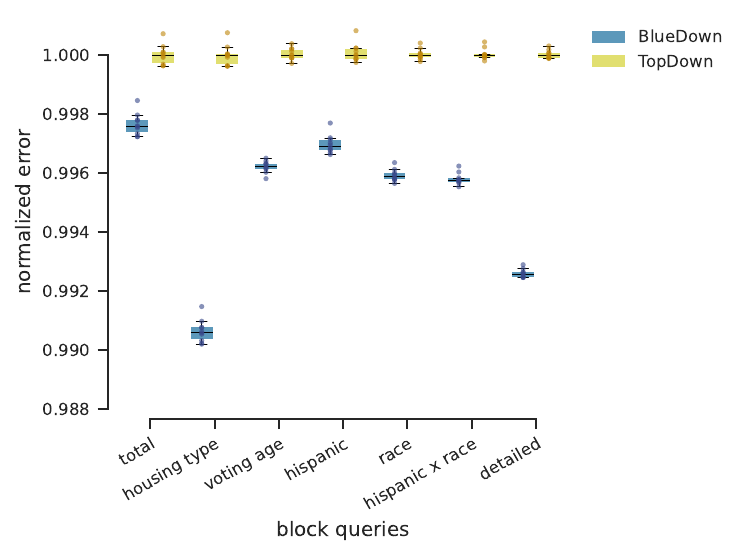}\\
\caption{Average accuracy improvements for our algorithm for queries aggregating over the entire US and by block. 
}
\label{fig:more-general-query-relative-errors}
\end{figure}

\begin{figure}[hb]
\centering
\setlength{\tabcolsep}{2pt}
\includegraphics[height=\subfigheighttwowidealttwo]{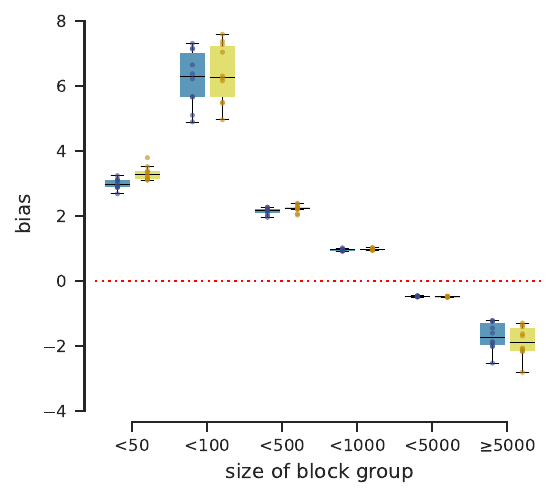}
\includegraphics[height=\subfigheighttwowidealttwo]{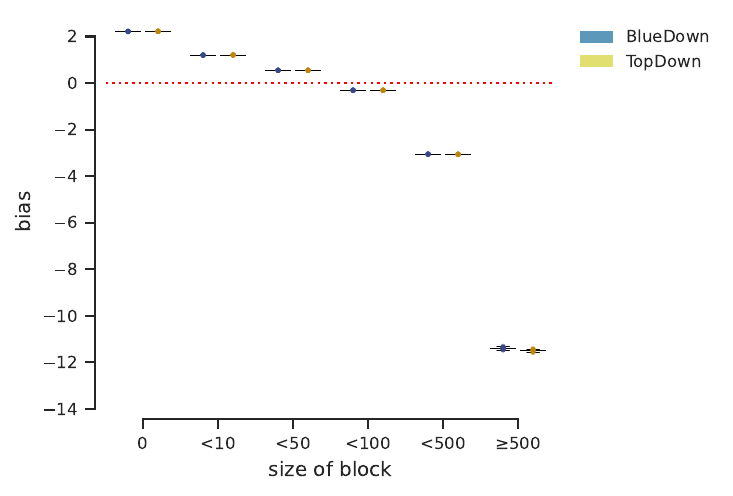}\\
\caption{Mean error (bias) for total population of each population bin at block group and block levels, for each replicate run. Population bins are mutually exclusive.
}
\label{fig:more-levels-bias}
\end{figure}

\begin{figure}[hb]
\centering
\setlength{\tabcolsep}{2pt}
\includegraphics[height=\ridgelineheight]{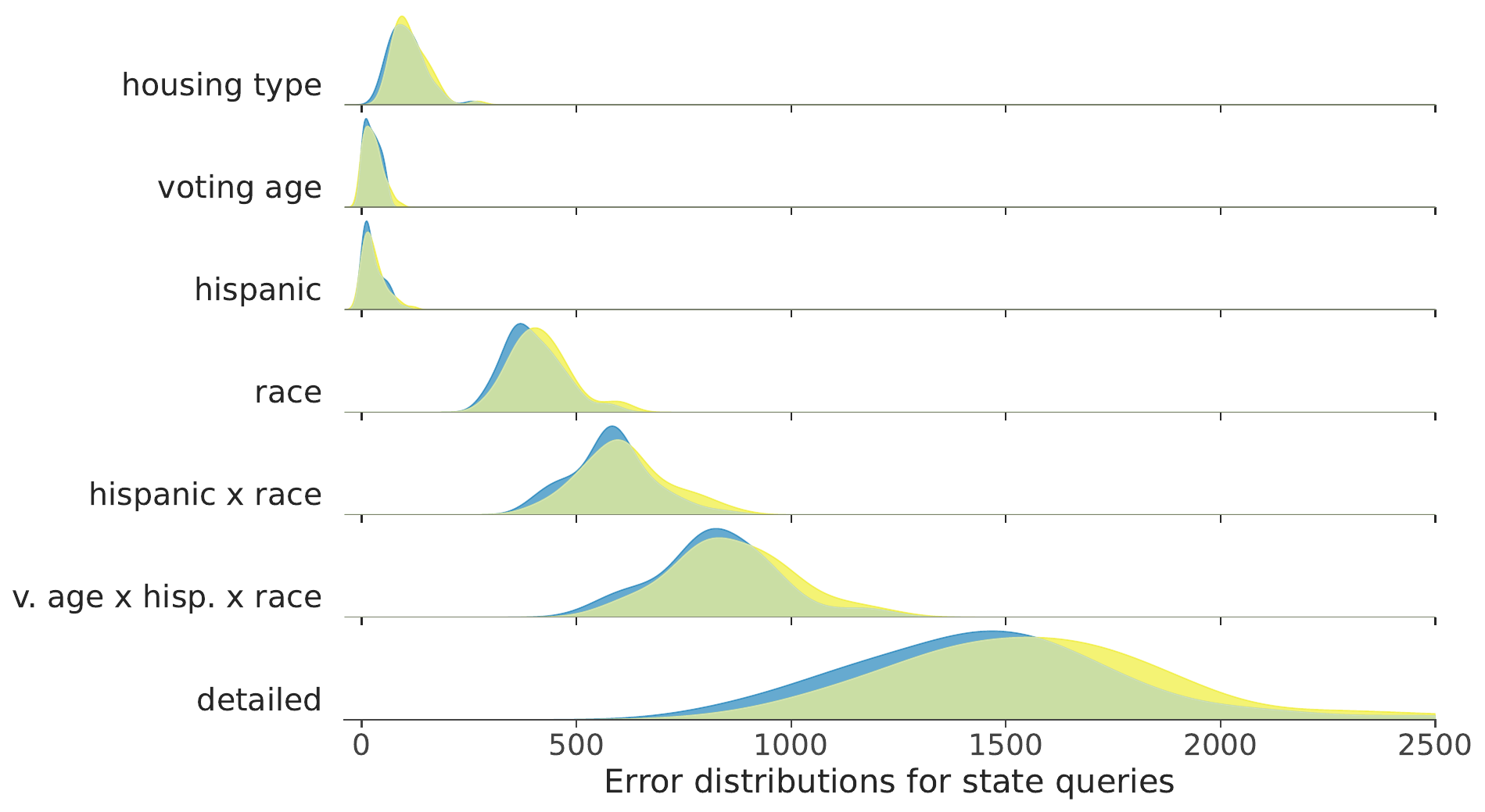}
\includegraphics[height=\ridgelineheight]{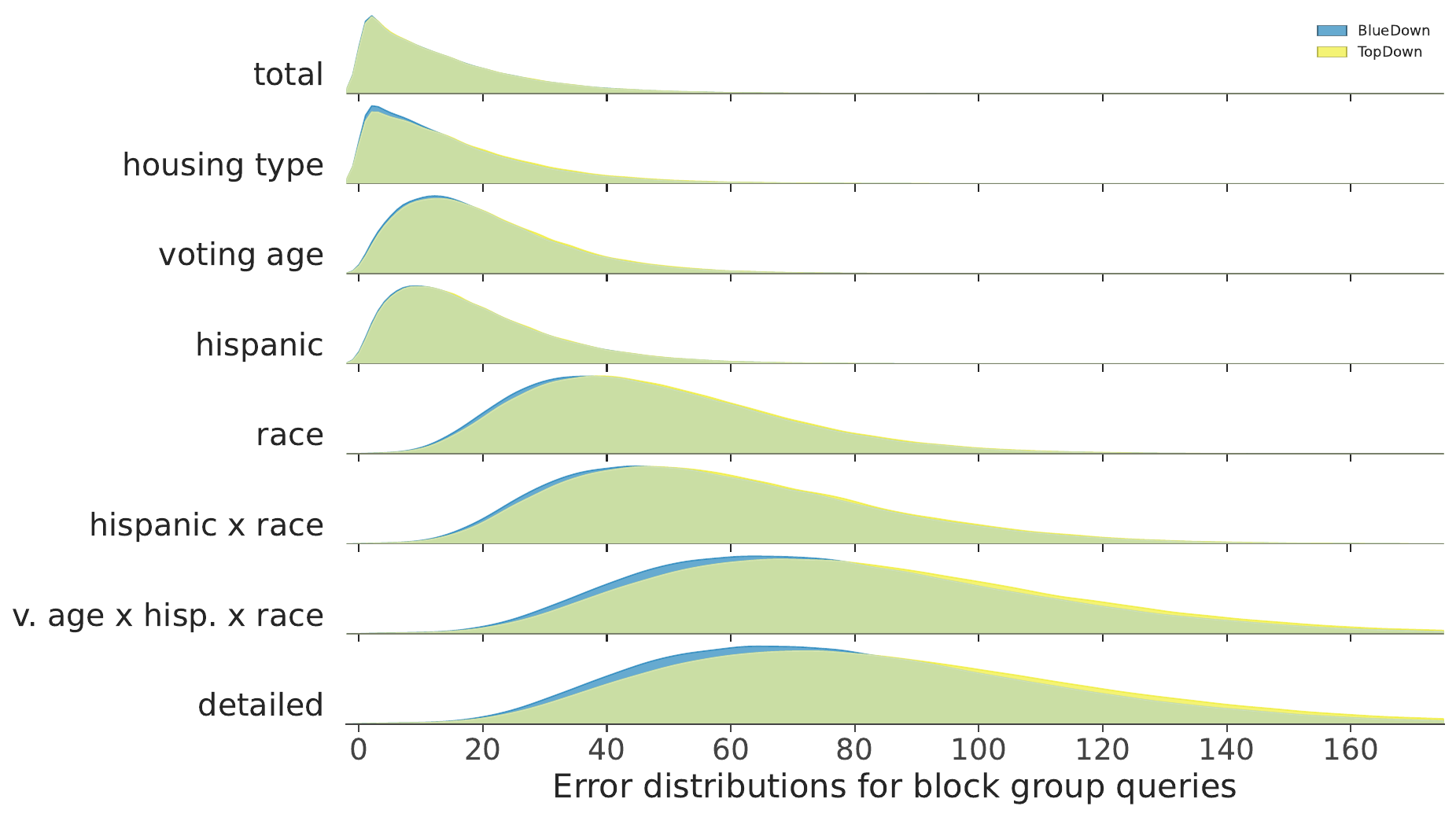}\\
\caption{Error distributions of queries across each geographic unit at the state and block group levels.}
\label{fig:more-levels-ridgeline}
\end{figure}
 
\end{document}